\definecolor{Gray}{gray}{0.9}
\def\eqref#1{equation~\ref{#1}}
\def\ceil#1{\lceil #1 \rceil}
\def\1{\bm{1}}
\def\eps{{\epsilon}}
\def\rva{{\mathbf{a}}}
\def\rvu{{\mathbf{i}}}
\def\rvu{{\mathbf{u}}}
\def\rvv{{\mathbf{v}}}
\def\rvw{{\mathbf{w}}}
\def\rmA{{\mathbf{A}}}
\def\rmB{{\mathbf{B}}}
\def\rmW{{\mathbf{W}}}
\def\rmX{{\mathbf{X}}}
\def\rmZ{{\mathbf{Z}}}
\DeclareMathAlphabet{\mathsfit}{\encodingdefault}{\sfdefault}{m}{sl}
\SetMathAlphabet{\mathsfit}{bold}{\encodingdefault}{\sfdefault}{bx}{n}
\def\gA{{\mathcal{A}}}
\def\gD{{\mathscr{D}}}
\def\gF{{\mathcal{F}}}
\def\gH{{\mathcal{H}}}
\def\gI{{\mathcal{I}}}
\def\gK{{\mathcal{K}}}
\def\gN{{\mathscr{N}}}
\def\gS{{\mathcal{S}}}
\def\gX{{\mathcal{X}}}
\newcommand{\E}{\mathbb{E}}
\newcommand{\R}{\mathbb{R}}
\newcommand{\KL}{\mathrm{KL}}
\newcommand{\inner}[2]{\langle #1, #2 \rangle}
\newcommand{\norm}[1]{\left\|#1\right\|}
\newcommand{\IN}{\text{in}}
\newcommand{\OUT}{\text{out}}
\newcommand{\conv}{\text{conv}}
\newcommand{\pad}{\text{pad}}
\newcommand{\pool}{\text{pool}}
\newcommand{\patch}{\text{patch}}
\newcommand{\nc}{\kappa}
\DeclareMathOperator{\sign}{sign}
\definecolor{Myblue}{rgb}{0,.3,.6}
\newcommand{\emc}[1]{{\textbf{\textit{\color{Myblue}#1}}}}
\DeclareMathOperator{\vecc}{vec}
\newtheorem{theorem}{Theorem}
\newenvironment{proof}[1][Proof]{\begin{trivlist}
\item[\hskip \labelsep {\bfseries #1}]}{\end{trivlist}}
\newcolumntype{L}{D{.}{.}{1,2}}
\newcolumntype{@}{>{\global\let\currentrowstyle\relax}}
\newcolumntype{^}{>{\currentrowstyle}}
\newcommand{\rowstyle}[1]{\gdef\currentrowstyle{#1}%
  #1\ignorespaces
}
\definecolor{LightYellow}{rgb}{1,1,0.7}
\definecolor{LightCyan}{rgb}{0.8,1,1}
\definecolor{LightRed}{rgb}{1,0.8,0.8}
\definecolor{LightGreen}{rgb}{0.8,1,0.8}
\definecolor{GoogleGreen}{RGB}{60, 186, 84}
\definecolor{GoogleYellow}{RGB}{244, 194, 13}
\definecolor{GoogleBlue}{RGB}{72, 133, 237}
\definecolor{GoogleRed}{RGB}{219, 50, 54}
\newcommand{\GoogleLogo}{{\Large\bf\sffamily\textcolor{GoogleBlue}{G}\textcolor{GoogleRed}{o}\textcolor{GoogleYellow}{o}\textcolor{GoogleBlue}{g}\textcolor{GoogleGreen}{l}\textcolor{GoogleRed}{e}}}
\title{\Fontlukas Fantastic Generalization Measures\\ and Where to Find Them}
\author{Yiding Jiang\thanks{Contributed equally.} , Behnam Neyshabur\footnotemark[1] , Hossein Mobahi\\
 Dilip Krishnan, Samy Bengio\\
\smallskip
\GoogleLogo\\
\smallskip
\texttt{\{ydjiang,neyshabur,hmobahi,dilipkay,bengio\}@google.com}
\date{}
}
\begin{document}
\maketitle
\begin{abstract}
Generalization of deep networks has been of great interest in recent years, resulting in a number of theoretically and empirically motivated complexity measures. However, most papers proposing such measures study only a small set of models, leaving open the question of whether the conclusion drawn from those experiments would remain valid in other settings. We present the first large scale study of generalization in deep networks. We investigate more then 40 complexity measures taken from both theoretical bounds and empirical studies. We train over 10,000 convolutional networks by systematically varying commonly used hyperparameters. Hoping to uncover potentially causal relationships between each measure and generalization, we analyze carefully controlled experiments and show surprising failures of some measures as well as promising measures for further research.
\end{abstract}
\section{Introduction}
Deep neural networks have seen tremendous success in a number of applications, but why (and how well) these models generalize is still a mystery \citep{neyshabur2014search,zhang2016understanding,recht2roy019imagenet}. It is crucial to better understand the reason behind the generalization of modern deep learning models; such an understanding has multiple benefits, including providing guarantees for safety-critical scenarios and the design of better models.

A number of papers have attempted to understand the generalization phenomenon in deep learning models from a theoretical perspective e.g. \citep{neyshabur2015norm, bartlett2017spectrally, neyshabur2017pac, golowich2017size, arora2018stronger,nagarajan2019deterministic,wei2019data,long2019size}. The most direct and principled approach for studying generalization in deep learning is to prove a \emc{generalization bound} which is typically an upper bound on the test error based on some quantity that can be calculated on the training set. Unfortunately, finding tight bounds has proven to be an arduous undertaking. While encouragingly  \cite{dziugaite2017computing} showed that PAC-Bayesian bounds can be optimized to achieve a reasonably tight generalization bound, current bounds are still not tight enough to accurately capture the generalization behavior. Others have proposed more direct empirical ways to characterize generalization of deep networks without attempting to deriving bounds \citep{keskar2016large, liang2017fisher}. However, as pointed by \cite{dziugaite2017computing}, empirical correlation does not necessarily translate to a casual relationship between a measure and generalization.

A core component in (theoretical or empirical) analysis of generalization is the notion of \emc{complexity measure}; a quantity that monotonically relates to some aspect of generalization. More specifically, lower complexity should often imply smaller generalization gap. A complexity measure may depend on the properties of the trained model, optimizer, and possibly training data, but should not have access to a validation set. 

Theoretically motivated complexity measures such as VC-dimension, norm of parameters, etc., are often featured as the major components of generalization bounds, where the monotonic relationship between the measures and generalization is mathematically established. In contrast, empirically motivated complexity measures such as sharpness \citep{keskar2016large} are justified by experimentation and observation. In this work, we do not need to distinguish between theoretically vs empirically motivated measures, and simply refer to both as complexity measures.

Despite the prominent role of complexity measures in studying generalization, the empirical evaluation of these measures is usually limited to a few models, often on toy problems. A measure can only be considered reliable as a predictor of generalization gap if it is tested extensively on many models at a realistic problem size. To this end, we carefully selected a wide range of complexity measures from the literature. Some of the measures are motivated by generalization bounds such as those related to VC-dimension, norm or margin based bounds, and PAC-Bayesian bounds. We further selected a variety of empirical measures such as sharpness \citep{keskar2016large}, Fisher-Rao norm \citep{liang2017fisher} and path norms \citep{neyshabur2017exploring}.

In this study, we trained more than 10,000 models over two image classification datasets, namely, CIFAR-10~\citep{krizhevsky2014cifar} and Street View House Numbers (SVHN)~\cite{netzer2011reading}. In order to create a wide range of generalization behaviors, we  carefully varied hyperparameters that are believed to influence generalization. We also selected multiple optimization algorithms and looked at different stopping criteria for training convergence. Details of all our measures and hyperparameter selections are provided in Appendix \ref{sec:gen_bounds}. Training under all combination of hyperparameters and optimization resulted in a large pool of models. For any such model, we considered \emc{40 complexity measures}. The key findings that arise from our large scale study are summarized below:

\begin{enumerate}



\item It is easy for some complexity measures to capture spurious correlations that do not reflect more causal insights about generalization; to mitigate this problem, we propose a more rigorous approach for studying them.

\item Many norm-based measures not only perform poorly, but {\em negatively} correlate with generalization specifically when the optimization procedure injects some stochasticity. In particular, the generalization bound based on the product of spectral norms of the layers (similar to that of \cite{bartlett2017spectrally}) has very strong negative correlation with generalization.

\item Sharpness-based measures such as PAC-Bayesian bounds \citep{mcallester1999pac} bounds and sharpness measure proposed by \cite{keskar2016large} perform the best overall and seem to be promising candidates for further research.

\item Measures related to the optimization procedures such as the gradient noise and the speed of the optimization can be predictive of generalization.

\end{enumerate}

Our findings on the relative success of sharpness-based and optimization-based complexity measures for predicting the generalization gap can provoke further study of these measures.

\subsection{Related Work}

The theoretically motivated measures that we consider in this work belong to a few different families: PAC-Bayes \citep{mcallester1999pac, dziugaite2017computing,neyshabur2017exploring}; VC-dimension \citep{vapnik2015uniform}; and norm-based bounds \citep{neyshabur2015norm,bartlett2017spectrally,neyshabur2017pac}. The empirically motivated measures from prior literature that we consider are based on sharpness measure \citep{keskar2016large}; Fisher-Rao measure \citep{liang2017fisher}; distance of trained weights from initialization \citep{nagarajan2019generalization} and path norm \citep{neyshabur2015path}. Finally, we consider some optimization based measures based on the speed of the optimization algorithm as motivated by the work of \citep{hardt2015train} and \citep{AshiaMarginal}, and the magnitude of the gradient noise as motivated by the work of \citep{chaudhari2018stochastic} and \citep{smith2017bayesian}.

A few papers have explored a large scale study of generalization in deep networks. \cite{neyshabur2017exploring} perform a small scale study of the generalization of PAC-Bayes, sharpness and a few different norms, and the generalization analysis is restricted to correlation. \cite{jiang2018predicting} studied the role of margin as a predictor of the generalization gap. However, they used a significantly more restricted set of models (e.g. no depth variations), the experiments were not controlled for potential undesired correlation (e.g. the models can have vastly different training error) and some measures contained parameters that must be learned from the set of models. \cite{novak2018sensitivity} conducted large scale study of neural networks but they only looked at correlation of a few measures to generalization. In contrast, we study thousands of models, and perform controlled experiments to avoid undesired artificial correlations. Some of our analysis techniques are inspired by \cite{bradyneal} who proposed the idea of studying generalization in deep models via causal graphs, but did not provide any details or empirical results connected to that idea. Our work focuses on measures that can be computed on a single model and compares a large number of bounds and measures across a much wider range of models in a carefully controlled fashion.

\subsection{Notation}
\label{sec:notation}

We denote a probability distribution as $\mathscr{A}$, set as $\gA$, tensor as $\rmA$, vector as $\rva$, and scalar as $a$ or $\alpha$. Let $\gD$ denote the data distributions over inputs and their labels, and let $\nc$ denote number of classes. We use $\triangleq$ for equality by definition. We denote by $\gS$ a given dataset, consisting of $m$ i.i.d tuples $\{(\rmX_1, y_1), \dots, (\rmX_m, y_m)\}$ drawn from $\gD$ where $\rmX_i\in \gX$ is the input data and $y_i\in\{1,\dots,\nc\}$ the corresponding class label. We denote a feedforward neural network by $f_\rvw:\gX\rightarrow \R^\nc$, its weight parameters by $\rvw$, and the number of weights by $\omega \triangleq \dim(\rvw)$. No activation function is applied at the output (i.e. logits). Denote the weight tensor of the $i^{th}$ layer of the network by $\rmW_i$, so that $\rvw = \vecc(\rmW_1,\dots, \rmW_d)$, where $d$ is the depth of the network, and $\vecc$ represents the vectorization operator. Furthermore, denote by $f_\rvw(\rmX)[j]$ the $j$-th output of the function $f_\rvw(\rmX)$.

Let $\mathcal{R}$ be the set of binary relations, and $I:\mathcal{R} \rightarrow \{0,1\}$ be the indicator function that is $1$ if its input is true and zero otherwise. Let $L$ be the 1-0 classification loss over the data distribution $\gD$:
$L(f_\rvw) \triangleq \E_{(\rmX,y)\sim \gD}\left[I\big(f_\rvw(\rmX)[y] \leq \max_{j\neq y} f_\rvw(\rmX)[j]\big)\right]$
and let $\hat{L}$ be the empirical estimate of 1-0 loss over $\gS$:
$\hat{L}(f_\rvw) \triangleq \frac{1}{m} \sum_{i=1}^m I\big(f_\rvw(\rmX)[y_i] \leq \max_{j\neq y_i} f_\rvw(\rmX)[j]\big)$.
We refer to $L(f_\rvw)-\hat{L}(f_\rvw)$ as the \emc{generalization gap}.
For any input $\rmX$, we define the sample dependent margin\footnote{This work only concerns with the output margins, but generally margin can be defined at any layer of a deep network as introduced in \citep{elsayed18} and used to establish a generalization bound in, \citep{wei2019improved}.} as $\gamma(\rmX) \triangleq \big(f_{\rvw}(\rmX)\big)[y] - \max_{i\neq y}f_{\rvw}(\rmX)_i$. Moreover, we define the overall margin $\gamma$ as the $10^{th}$ percentile (a robust surrogate for the minimum) of $\gamma(\rmX)$ over the entire training set $\gS$. More notation used for derivation is located in Appendix \ref{sec:extended_notation}.

\section{Generalization: What is the goal and how to evaluate?}
Generalization is arguably the most fundamental and yet mysterious aspect of machine learning. The core question in generalization is what causes the triplet of a \emc{model}, \emc{optimization} algorithm, and \emc{data properties}\footnote{For example, it is expected that images share certain structures that allows some models (which leverage these biases) to generalize.}, to generalize well beyond the training set. There are many hypotheses concerning this question, but what is the right way to compare these hypotheses? The core component of each hypothesis is \emc{complexity measure} that monotonically relates to some aspect of generalization. Here we briefly discuss some potential approaches to compare different complexity measures:
\begin{itemize}
    \item {\bf Tightness of Generalization Bounds.}  Proving generalization bounds is very useful to establish the causal relationship between a complexity measure and the generalization error. However, almost all existing bounds are vacuous on current deep learning tasks (combination of models and datasets), and therefore, one cannot rely on their proof as an evidence on the causal relationship between a complexity measure and generalization currently\footnote{See~\cite{dziugaite2017computing} for an example of non-vacuous generalization bound and related discussions.}.
    \item {\bf Regularizing the Complexity Measure.}  One may evaluate a complexity measure by adding it as a regularizer and directly optimizing it, but this could fail due to two reasons. The complexity measure could change the loss landscape in non-trivial ways and make the optimization more difficult. In such cases, if the optimization fails to optimize the measure, no conclusion can be made about the causality.  Another, and perhaps more critical, problem is the existence of implicit regularization of the optimization algorithm. This makes it hard to run a controlled experiment since one cannot simply turn off the implicit regularization; therefore, if optimizing a measure does not improve generalization it could be simply due to the fact that it is regularizing the model in the same way as the optimization is regularizing it implicitly.
    \item {\bf Correlation with Generalization} Evaluating measures based on correlation with generalization is very useful but it can also provide a misleading picture. To check the correlation, we should vary architectures and optimization algorithms to produce a set of models. If the set is generated in an artificial way and is not representative of the typical setting, the conclusions might be deceiving and might not generalize to typical cases. One such example is training with different portions of random labels which artificially changes the dataset. Another pitfall is drawing conclusion from changing one or two hyper-parameters (e.g changing the width or batch-size and checking if a measure would correlate with generalization). In these cases, the hyper-parameter could be the true cause of both change in the measure and change in the generalization, but the measure itself has no causal relationship with generalization. Therefore, one needs to be very careful with experimental design to avoid unwanted correlations.
\end{itemize}

In this work we focus on the third approach. While acknowledging all limitations of a correlation analysis, we try to improve the procedure and capture some of the causal effects as much as possible through careful design of controlled experiments.
Further, to evaluate the effectiveness of complexity measures as accurately as possible, we analyze them over sufficiently trained models (if not to completion) with a wide range of variations in hyperparameters. For practical reasons, these models must reach convergence within a reasonable time budget.

\subsection{Training Models across Hyperparameter Space}
In order to create models with different generalization behavior, we consider various hyperparameter types, which are known or believed to influence generalization (e.g. batch size, dropout rate, etc.). Formally, denote each hyperparameter by $\theta_i$ taking values from the set $\Theta_i$, for $i=1,\dots,n$ and $n$ denoting the total number of hyperparameter types\footnote{In our analysis we use $n=7$ hyperparameters: batch size, dropout probability, learning rate, network depth, weight decay coefficient, network width, optimizer.}. For each value of hyperparameters $\boldsymbol{\theta} \triangleq (\theta_1,\theta_2,\dots,\theta_n) \in \Theta$, where $\Theta \triangleq \Theta_1 \times \Theta_2 \times \dots \times \Theta_n$, we train the architecture until the training loss (cross-entropy value) reaches a given threshold $\epsilon$. See the Appendix~\ref{sec:stopping} for a discussion on the choice of the stopping criterion. Doing this for each hyper-parameter configuration $\boldsymbol{\theta} \in \Theta$, we obtain a \emc{total} of $|\Theta|$ models. The space $\Theta$ reflects our prior knowledge about a reasonable hyperparameter space, both in terms of their types and values. Regarding the latter, one could, for example, create $\Theta_i$ by grid sampling of a reasonable number of points within a reasonable range of values for $\theta_i$.

\subsection{Evaluation Criteria}

\subsubsection{Kendall's Rank-Correlation Coefficient}
One way to evaluate the quality of a complexity measure $\mu$ is through \emc{ranking}. Given a set of models resulted by training with hyperparameters in the set $\Theta$, their associated generalization gap $\{g(\boldsymbol{\theta}) \, | \, \boldsymbol{\theta} \in \Theta \}$, and their respective values of the measure $\{\mu(\boldsymbol{\theta}) \, | \, \boldsymbol{\theta} \in \Theta \}$, our goal is to analyze how consistent a measure (e.g. \texttt{$\ell_2$ norm} of network weights) is with the empirically observed generalization. To this end, we construct a set $\mathcal{T}$, where each element of the set is associated with one of the trained models. Each element has the form of a pair:  complexity measure $\mu$ versus generalization gap $g$.
\begin{equation}
\mathcal{T} \triangleq \cup_{\boldsymbol{\theta} \in \Theta} \big\{ \, \big( \, \mu(\boldsymbol{\theta}), g(\boldsymbol{\theta})\, \big) \big\} \,.
\end{equation}
An ideal complexity measure must be such that, for any pair of trained models, if $ \mu(\boldsymbol{\theta}_1) > \mu(\boldsymbol{\theta}_2)$, then so is $g(\boldsymbol{\theta}_1) > g(\boldsymbol{\theta}_2)$. We use Kendall's rank coefficient $\tau$ \citep{kendall1938measure} to capture to what degree such consistency holds among the elements of $\mathcal{T}$.
\begin{equation}
\tau(\mathcal{T}) \triangleq \frac{1}{|\mathcal{T}|(|\mathcal{T}|-1)} \sum_{(\mu_1,g_1) \in \mathcal{T}} \sum_{(\mu_2,g_2) \in \mathcal{T} \setminus (\mu_1,g_1)} \sign ( \mu_1 - \mu_2 \big) \, \sign ( g_1 - g_2)
\end{equation}
Note that $\tau$ can vary between $1$ and $-1$ and attains these extreme values at perfect agreement (two rankings are the same) and perfect disagreement (one ranking is the reverse of the other) respectively. If complexity and generalization are independent, the coefficient becomes zero.

\subsubsection{Granulated Kendall's Coefficient}
While Kendall's correlation coefficient is an effective tool widely used to capture relationship between 2 rankings of a set of objects, we found that certain measures can achieve high $\tau$ values in a trivial manner -- i.e. the measure may strongly correlate with the generalization performance without necessarily capturing the cause of generalization.
We will analyze this phenomenon
in greater details in subsequent sections.
To mitigate the effect of spurious correlations, we propose a new quantity for reflecting the correlation between measures and generalization based on a more controlled setting.

None of the existing complexity measures is perfect. However, they might have different sensitivity and accuracy w.r.t. different hyperparameters. For example, \texttt{sharpness} may do better than other measures when only a certain hyperparameter (say \texttt{batch size}) changes. To understand such details, in addition to $\tau(\mathcal{T})$, we compute $\tau$ for consistency within each hyperparameter axis $\Theta_i$, and then average the coefficient across the remaining hyperparameter space. Formally, we define:
\begin{equation}
m_i \triangleq |\Theta_1 \times \dots \times \Theta_{i-1} \times \Theta_{i+1} \times \dots \times \Theta_n|
\end{equation}
\begin{equation}
\psi_i \triangleq \frac{1}{m_i} \sum_{\theta_1 \in \Theta_1} \dots \sum_{\theta_{i-1} \in \Theta_{i-1}} \sum_{\theta_{i+1} \in \Theta_{i+1}} \dots \sum_{\theta_n \in \Theta_n} \tau\,(\,\,\cup_{\theta_i \in \Theta_i} \{ \big( \mu(\boldsymbol{\theta}) , g(\boldsymbol{\theta}) \big) \}\,\,)
\end{equation}
The inner $\tau$ reflects the ranking correlation between the generalization and the complexity measure for a small group of models where the only difference among them is the variation along a single hyperparameter $\theta_i$. We then average the value across all combinations of the other hyperparameter axis. Intuitively, if a measure is good at predicting the effect of hyperparameter $\theta_i$ over the model distribution, then its corresponding $\psi_i$ should be high. Finally, we compute the average $\psi_i$ of average across all hyperparamter axes, and name it $\Psi$:
\begin{equation}
\Psi  \triangleq \frac{1}{n} \sum_{i=1}^n \psi_i
\end{equation}
If a measure achieves a high $\Psi$ on a given hyperparameter   distribution $\Theta$, then it should achieve high individual $\psi$ across all hyperparameters. A complexity measure that excels at predicting changes in a single hyperparameter (high $\psi_i$) but fails at the other hyperparameters (low $\psi_j$ for all $j \neq i$) will not do well on $\Psi$. On the other hand, if the measure performs well on $\Psi$, it means that the measure can reliably rank the generalization for each of the hyper-parameter changes.

A \emc{thought experiment} to illustrate why $\Psi$ captures a {\it better} causal nature of the generalization than Kendall's $\tau$ is as follows. Suppose there exists a measure that perfectly captures the depth of the network while producing random prediction if 2 networks have the same depth, this measure would do reasonably well in terms of $\tau$ but much worse in terms of $\Psi$. In the experiments we consider in the following sections, we found that such a measure would achieve overall $\tau=0.362$ but $\Psi=0.11$.

We acknowledge that this measure is only a {\em small step} towards the difficult problem of capturing the {\em causal relationship} between complexity measures and generalization in empirical settings, and we hope this encourages future work in this direction.

\subsubsection{Conditional Independence Test: Towards Capturing the Causal Relationships}
Relying on correlation is intuitive but perhaps unsatisfactory. In our experiments, we change several hyper-parameters and assess the correlation between a complexity measure and generalization. When we observe correlation between a complexity measure and generalization, we want to differentiate the following two scenarios:
\begin{itemize}
    \item Changing a hyper-parameter causes the complexity measure to be low and lower value of the measure causes the generalization gap to be low.
    \item Changing a hyper-parameter causes the complexity measure to be low and changing the same hyper-parameter also causes the generalization to be low but the lower value of the complexity measure by itself has no effect on generalization.
\end{itemize}
The above two scenarios are demonstrated in Figure~\ref{fig:graphical-model}-Middle and Figure~\ref{fig:graphical-model}-Right respectively. In attempt to truly understand these relationships, we will rely on the tools from probabilistic causality. Our approach is inspired by the seminal work on Inductive Causation (IC) Algorithm by \citet{verma1991equivalence}, which provides a framework for learning a graphical model through conditional independence test. While the IC algorithm traditionally initiates the graph to be fully connected, we will take advantage of our knowledge about generalization and prune edges of the initialized graph to expedite the computations. Namely, we assume that the choice of hyperparameter does not directly explain generalization, but rather it induces changes in some measure $\mu$ which can be used to explain generalization.

\begin{figure*}[htp]
\vspace{-2mm}
\begin{subfigure}[htp]{0.32\textwidth}
\centering
\begin{tikzpicture}
 \node[obs,fill=LightRed] (g) {$g$};
 \node[latent, above=of g] (x) {$\mu$};%
 \node[latent, above=of g,xshift=-1.2cm] (a) {$\dots$};%
 \node[obs,above=of x,fill=LightGreen] (y) {$\theta_i$}; %
 \edge {y} {x, a};
 \edge[-] {x, a} {g};
\end{tikzpicture}
\end{subfigure}
~
\begin{subfigure}[htp]{0.31\textwidth}
\centering
\begin{tikzpicture}
 \node[obs,fill=LightRed] (g) {$g$};
 \node[latent, above=of g] (x) {$\mu$};%
 \node[latent, above=of g,xshift=-1.2cm] (a) {$\dots$};%
 \node[obs,above=of x,fill=LightGreen] (y) {$\theta_i$}; %
 \edge {y} {x, a};
 \edge {x} {g};
\end{tikzpicture}
\end{subfigure}
~
\begin{subfigure}[htp]{0.31\textwidth}
\centering
\begin{tikzpicture}
 \node[obs,fill=LightRed] (g) {$g$};
 \node[latent, above=of g] (x) {$\mu$};%
 \node[latent, above=of g,xshift=-1.2cm] (a) {$\dots$};%
 \node[obs,above=of x,fill=LightGreen] (y) {$\theta_i$}; %
 \edge {y} {x, a};
 \edge {a} {g};
\end{tikzpicture}
\end{subfigure}

\vspace{-0.1cm}
\caption{
\footnotesize 
\textbf{Left}: Graph at initialization of IC algorithm. \textbf{Middle}: The ideal graph where the measure $\mu$ can directly explain observed generalization. \textbf{Right}: Graph for correlation where $\mu$ cannot explain observed generalization.}
\vspace{-3mm}
\label{fig:graphical-model}
\end{figure*}
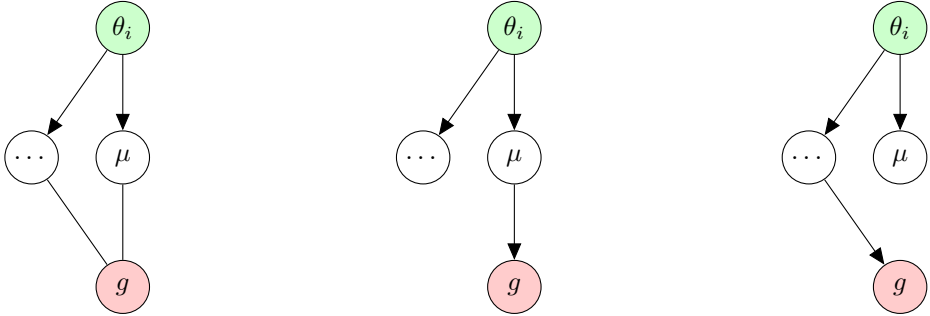

Our primary interest is to establish the existence of an edge between $\mu$ and $g$. Suppose there exists a large family of complexity measures and among them there is a true complexity measure  that can fully explain generalization. Then to verify the existence of the edge between $\mu$ and $g$, we can perform the \emc{conditional independent test} by reading the conditional mutual information between $\mu$ and $g$ given that a set of hyperparameter types $\gS$ is observed\footnote{For example, if $\gS$ contains a single hyperparameter type such as the learning rate, then the conditional mutual information is conditioned on learning rate being observed.}. For any function $\phi:\Theta\rightarrow \R$, let $V_{\phi} : \Theta_1 \times \Theta_2  \rightarrow \{+1,-1\}$ be as $V_{\phi}(\theta_1,\theta_2) \triangleq \sign(\phi(\theta_1)-\phi(\theta_2))$. Furthermore, let $U_\gS$ be a random variable that correspond to the values of hyperparameters in $\gS$. We calculate the conditional mutual information as follows:
\begin{equation}
    \gI(V_\mu, V_g \, | \, U_\gS) = \sum_{U_\gS} p(U_\gS)\sum_{V_\mu\in \{\pm 1\}} \sum_{V_g\in \{\pm 1\}} p(V_\mu, V_g\, | \, U_\gS) \log\Big(\frac{p(V_\mu, V_g\, | \, U_\gS)}{p(V_\mu\, | \, U_\gS)p(V_g\, | \, U_\gS)}\Big)
\end{equation}

The above removes the unwanted correlation between generalization and complexity measure that is caused by hyperparameter types in set $\mathcal{S}$. Since in our case the conditional mutual information between a complexity measure and generalization is at most equal to the conditional entropy of generalization, we normalize it with the conditional entropy to arrive at a criterion ranging between 0 and 1:
\begin{equation}
    \gH(V_g \, | \, U_\gS) = -\sum_{U_\gS} p(U_\gS)\sum_{V_g\in \{\pm 1\}} p(V_g\, | \, U_\gS) \log (p(V_g\, | \, U_\gS))
\end{equation}
\begin{equation}
    \hat{\gI}(V_\mu, V_g \, | \, U_\gS) = \frac{ \gI(V_\mu, V_g \, | \, U_\gS)}{\gH(V_g \, | \, U_\gS)}
\end{equation}

According to the IC algorithm, an edge is kept between two nodes if there exists no subset $\gS$ of hyperparameter types such that the two nodes are independent, i.e. $\hat{\gI}(V_\mu, V_g \, | \, U_\gS) = 0$. In our setup, setting $\gS$ to the set of all hyperparameter types is not possible as both the conditional entropy and conditional mutual information would become zero. Moreover, due to computational reasons, we only look at $|\gS|\leq 2$:
\begin{equation}
    \gK(\mu) = \min_{U_\gS \, \mathrm{ s.t } \, |\gS|\leq 2}\hat{\gI}(V_\mu, V_g \, | \, U_\gS)
\end{equation}
At a high level, the larger $\gK$ is for a measure $\mu$, the more likely an edge exists between $\mu$ and $g$, and therefore the more likely $\mu$ can explain generalization. For details on the set-up, please refer to Appendix \ref{app:definition} on how these quantities are estimated.

\section{Generating a Family of Trained Models}
We chose 7 common hyperparameter types related to optimization and architecture design, with 3 choices for each hyperparameter. We generated  $3^7=2187$ models that are trained on the CIFAR-10 dataset. We analyze these $2187$ models in the subsequent sections; however, additional results including repeating the experiments 5 times as well as training the models using SVHN dataset are presented\footnote{All the experiments reported in the main text have been repeated for 5 times. The mean (Table \ref{table:AVR-everything-in-ther-universe}) is consistent with those presented in the main text and standard deviation (Table \ref{table:STD_DEV-everything-in-ther-universe}) is very small compared to the magnitude of the mean for all measures. Further, we also repeat the experiments once on the SVHN dataset (Table \ref{table:SVHN-everything-in-ther-universe}), whose results are also consistent with the observations made on CIFAR-10.} in Appendix Section \ref{app:all_results}. These additional experiments, which add up to more than 10,000 trained models, suggest that the observations we make here  are robust to randomness, and, more importantly, captures general behaviors of image classification tasks.

We trained these models to convergence. Convergence criterion is chosen as when cross-entropy loss reaches the value 0.01. Any model that was not able to achieve this value of cross-entropy\footnote{In our analysis, less than 5 percent of the models do not reach this threshold.} was discarded from further analysis. The latter is different from the DEMOGEN dataset \citep{jiang2018predicting} where the models are not trained to the same cross-entropy.  Putting the stopping criterion on {\it the training loss} rather than {\it the number of epochs} is crucial since otherwise one can simply use cross-entropy loss value to predict generalization. Please see Appendix Section~\ref{sec:stopping} for a discussion on the choice of stopping criterion.

To construct a pool of trained models with vastly different generalization behaviors while being able to fit the training set, we covered a wide range of hyperparameters for training. Our base model is inspired by the Network-in-Network \citep{gao2011robustness}. The hyperparameter categories we test on are: weight decay coefficient (\texttt{weight decay}), width of the layer (\texttt{width}), mini-batch size (\texttt{batch size}), learning rate (\texttt{learning rate}), dropout probability (\texttt{dropout}), depth of the architecture (\texttt{depth}) and the choice of the optimization algorithms (\texttt{optimizer}). We select 3 choices for each hyperparameter (i.e. $|\Theta_i|=3$). Please refer to Appendix \ref{app:model_specs} for the details on the models, and Appendix \ref{app:training_details} for the reasoning behind the design choices.

Figure \ref{fig:summary_stats} shows some summarizing statistics of the models in this study. On the left we show the number of models that achieve above 99\% training accuracy for every individual hyperparameter choice. Since we have $3^7=2187$ models in total, the maximum number of model for each hyperparameter type is $3^{7-1}=718$; the majority of the models in our pool were able to reach this threshold. In the middle we show the distribution of the cross-entropy value over the entire training set. While we want the models to be at exactly 0.01 cross-entropy, in practice it is computationally prohibitive to constantly evaluate the loss over the entire training set; further, to enable reasonable temporal granularity, we estimate the training loss with 100 randomly sampled minibatch. These computational compromises result in long-tailed distribution of training loss centered at 0.01. As shown in Table \ref{table:misc}, even such minuscule range of cross-entropy difference could lead to positive correlation with generalization, highlighting the importance of training loss as a stopping criterion. On the right, we show the distribution of the generalization gap. We see that while all the models' training accuracy is above 0.99, there is a wide range of generalization gap, which is ideal for evaluating complexity measures.

\begin{figure*}[htp]
\centering
\includegraphics[width=\textwidth]{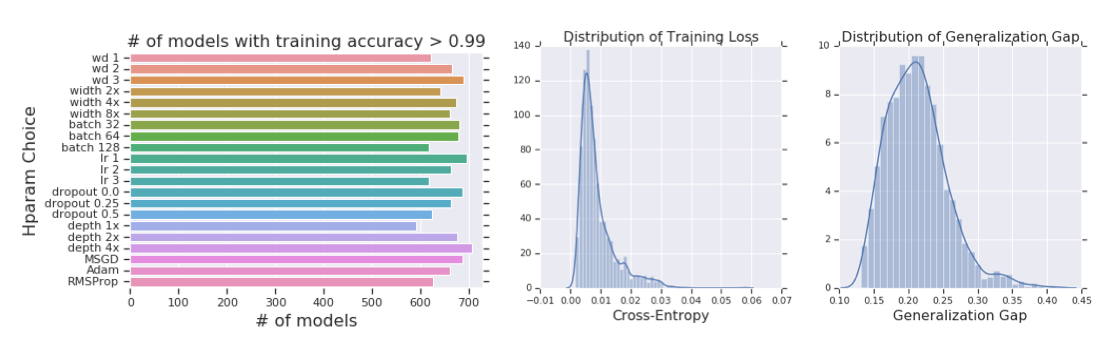}
\vspace{-7mm}
\caption{
\footnotesize 
\textbf{Left}: Number of models with training accuracy above 0.99 for each hyperparameter type. \textbf{Middle}: Distribution of training cross-entropy; distribution of training error can be found in Fig. \ref{fig:training-error}. \textbf{Right}: Distribution of generalization gap.}
\vspace{-3mm}
\label{fig:summary_stats}
\end{figure*}

\section{Performance of Complexity Measures}

\subsection{Baseline Complexity Measures}

The first baseline we consider is performance of a measure against an oracle who observes the noisy generalization gap. Concretely, we rank the models based on the true generalization gap with some additive noise. The resulting ranking correlation indicates how close the performances of all models are. As the scale of the noise approaches 0, the oracle's prediction tends towards perfect (i.e. 1). This baseline accounts for the potential noise in the training procedure and gives an anchor for gauging the difficulty of each hyperparameter type. Formally, given an arbitrary set of hyper-parameters $\Theta'$, we define $\epsilon$-oracle to be the expectation of $\tau$ or $\Psi$  where the measure is $\{ g(\boldsymbol{\theta}) + \gN(0, \epsilon^2) \, | \, \boldsymbol{\theta} \in \Theta' \}$. We report the performance of the noisy oracle in Table~\ref{table:misc} for $\eps\in\{0.02,0.05\}$. For additional choices of $\epsilon$ please refer to Appendix \ref{app:all_results}. 

Second, to understand how our hyperparameter choices affect the optimization, we give each hyperparameter type a {\it canonical order} which is believed to have correlation with generalization (e.g. larger learning rate generalizes better) and measure their $\tau$. The exact canonical ordering can be found in Appendix \ref{app:misc_measures}. Note that unlike other measures, each canonical ordering can only predict generalization for its own hyperparameter type, since its corresponding hyperparameter remains fixed in any other hyperparameter type; consequently, each column actually represents different measure for the {\it canonical measure} row. Assuming that each canonical measure is uninformative of any other canonical measures, the $\Psi$ criterion for each canonical measure is $\frac{1}{7}$ of its performance on the corresponding hyperparameter type.

We next look at one of the most well-known complexity measures in machine learning; the VC-Dimension. \cite{bartlett19} proves bounds on the VC dimension of piece-wise linear networks with potential weight sharing. In Appendix \ref{vc-derivation}, we extend their result to include pooling layers and multi-class classification. We report two complexity measures based on VC-dimension bounds and parameter counting. These measures could be predictive merely when the architecture changes, which happens only in \texttt{depth} and \texttt{width} hyperparameter types. We observe that, with both types, VC-dimension as well as the number of parameters are \emc{negatively correlated} with generalization gap which {\em confirms the widely known empirical observation that overparametrization improves generalization in deep learning}.

Finally, we report the measures that only look at the output of the network. In particular, we look at the cross-entropy loss, margin $\gamma$, and the entropy of the output. These three measures are closely related to each other. In fact, the outcomes in Table~\ref{table:misc} reflects this similarity. These results {\em confirm the general understanding that larger margin, lower cross-entropy and higher entropy would lead to better generalization}. Please see Appendix~\ref{sec:output-measures} for definitions and more discussions on these measures.

\begin{table}[htbp]
\begin{center}
{\tiny
\setlength\tabcolsep{4.2pt}
\begin{tabular}{@c|c^c^c^c^c^c^c^c^c^c^c^c}
\rowcolor{LightYellow}
\rowstyle{\bfseries}
 && \multicolumn{1}{c}{\shortstack{batch\\ size}} & \multicolumn{1}{c}{\shortstack{dropout\\ \phantom{a}}} & \multicolumn{1}{c}{\shortstack{learning \\ rate}} & \multicolumn{1}{c}{\shortstack{depth\\ \phantom{a}}} & \multicolumn{1}{c}{\shortstack{optimizer\\ \phantom{a}}} & \multicolumn{1}{c}{\shortstack{weight \\ decay}} & \multicolumn{1}{c}{\shortstack{width\\ \phantom{a}}} & \multicolumn{1}{c}{overall $\tau$} &
\multicolumn{1}{c}{$\Psi$} \\
\hline
\multirow{5}{*}{\begin{sideways}\textbf{Corr}\end{sideways}} & vc dim \ref{eq:vc} & 0.000 & 0.000 & 0.000 & -0.909 & 0.000 & 0.000 & -0.171 & -0.251 & -0.154 \\ 
&\# params \ref{eq:num_p} & 0.000 & 0.000 & 0.000 & -0.909 & 0.000 & 0.000 & -0.171 & -0.175 & -0.154 \\ 
&$1/\gamma$ (\ref{eq:1-over-margin}) & 0.312 & -0.593 & 0.234 & 0.758 & 0.223 & -0.211 & 0.125 & 0.124 & 0.121 \\
&entropy \ref{eq:entropy}  & 0.346 & -0.529 & 0.251 & 0.632 & 0.220 & -0.157 & 0.104 & 0.148 & 0.124 \\
&cross-entropy \ref{eq:xent} & 0.440 & -0.402 & 0.140 & 0.390 & 0.149 & 0.232 & 0.080 & 0.149 & 0.147 \\ 
\cline{2-11}
&oracle 0.02 & 0.380 & 0.657 & 0.536 & 0.717 & 0.374 & 0.388 & 0.360 & 0.714 & 0.487 \\ 
&oracle 0.05 & 0.172 & 0.375 & 0.305 & 0.384 & 0.165 & 0.184 & 0.204 & 0.438 & 0.256 \\ 
&canonical ordering & 0.652 & 0.969 & 0.733 & 0.909 & -0.055 & 0.735 & 0.171 & \multicolumn{1}{c}{N/A} & \multicolumn{1}{c}{N/A}\\
\hline
\rowcolor{LightCyan}
 & & & & & & & & & $|\gS|=2$ & $\min \forall |\gS|$\\
\hline
\multirow{5}{*}{\begin{sideways}\textbf{MI}\end{sideways}} & vc dim & 0.0422 & 0.0564 & 0.0518 & 0.0039 & 0.0422 & 0.0443 & 0.0627 & 0.00 & 0.00\\
& \# param  & 0.0202 & 0.0278 & 0.0259 & 0.0044 & 0.0208 & 0.0216 & 0.0379 & 0.00 & 0.00 \\
& $1/\gamma$  & 0.0108 & 0.0078 & 0.0133 & 0.0750 & 0.0105 & 0.0119 & 0.0183 & 0.0051 & 0.0051\\
& entropy & 0.0120 & 0.0656 & 0.0113 & 0.0086 & 0.0120 & 0.0155 & 0.0125 & 0.0065 & 0.0065\\
&cross-entropy & 0.0233 & 0.0850 & 0.0118 & 0.0075 & 0.0159 & 0.0119 & 0.0183 & 0.0040 & 0.0040 \\
\cline{2-11}
&oracle 0.02& 0.4077 & 0.3557 & 0.3929 & 0.3612 & 0.4124 & 0.4057 & 0.4154 & 0.1637 & 0.1637\\
&oracle 0.05  & 0.1475 & 0.1167 & 0.1369 & 0.1241 & 0.1515 & 0.1469 & 0.1535 & 0.0503 & 0.0503\\
&random & 0.0005 & 0.0002 & 0.0005 & 0.0002 & 0.0003 & 0.0006 & 0.0009 & 0.0004 & 0.0001 \\
\hline
\end{tabular}
}
\caption{Numerical Results for Baselines and Oracular Complexity Measures}
\label{table:misc}
\end{center}
\end{table}

\subsection{Surprising Failure of Some (Norm \& Margin)-Based Measures}
In machine learning, a long standing measure for quantifying the complexity of a function, and therefore generalization, is using some norm of the given function. Indeed, directly optimizing some of the norms can lead to improved generalization. For example, $\ell_2$ regularization on the parameters of a model can be seen as imposing an isotropic Gaussian prior over the parameters in maximum a posteriori estimation. We choose several representative norms (or measures based on norms) and compute our correlation coefficient between the measures and the generalization gap of the model.

We study the following measures and their variants  (Table \ref{table:norm-margin}): \emc{spectral bound}, \emc{Frobenius distance from initialization}, \emc{$\ell_2$ Frobenius norm of the parameters}, \emc{Fisher-Rao metric} and \emc{path norm}.

\begin{table}[htbp]
\begin{center}
{\tiny
\setlength\tabcolsep{4.2pt}
\begin{tabular}{@c|^c^c^c^c^c^c^c^c^c^c^c^c^c}
\rowcolor{LightYellow}
\rowstyle{\bfseries}
 & & \multicolumn{1}{c}{\shortstack{batch\\ size}} & \multicolumn{1}{c}{\shortstack{dropout\\ \phantom{a}}} & \multicolumn{1}{c}{\shortstack{learning\\rate}} & \multicolumn{1}{c}{\shortstack{depth\\ \phantom{a}}} & \multicolumn{1}{c}{\shortstack{optimizer\\ \phantom{a}}} & \multicolumn{1}{c}{\shortstack{weight \\ decay}} & \multicolumn{1}{c}{\shortstack{width\\ \phantom{a}}} & \multicolumn{1}{c}{\shortstack{overall\\ $\tau$}} &
\multicolumn{1}{c}{\shortstack{$\Psi$\\ \phantom{a}}} \\
\hline
\multirow{5}{*}{\begin{sideways}\textbf{Corr}\end{sideways}} & Frob distance \ref{eq:frob-dist} & -0.317 & -0.833 & -0.718 & 0.526 & -0.214 & -0.669 & -0.166 & -0.263 & -0.341 \\ 
&Spectral orig \ref{eq:spec-origin} & -0.262 & -0.762 & -0.665 & -0.908 & -0.131 & -0.073 & -0.240 & -0.537 & -0.434 \\ 
&Parameter norm \ref{eq:param-norm} & 0.236 & -0.516 & 0.174 & 0.330 & \textbf{0.187} & 0.124 & -0.170 & 0.073 & 0.052 \\ 
&Path norm \ref{eq:path-norm} & 0.252 & \textbf{0.270} & 0.049 & \textbf{0.934} & 0.153 & 0.338 & \textbf{0.178} & \textbf{0.373} & \textbf{0.311} \\ 
&Fisher-Rao \ref{eq:fisher-rao}& \textbf{0.396}&  0.147&	\textbf{0.240}&	-0.553&	0.120&	\textbf{0.551}&	0.177&	0.078&	0.154	\\		
\cline{2-11}
&oracle 0.02 & 0.380 & 0.657 & 0.536 & 0.717 & 0.374 & 0.388 & 0.360 & 0.714 & 0.487 \\ 
\hline
\rowcolor{LightCyan}
 & & & & & & & & & $|\gS|=2$ & $\min \forall |\gS|$\\
\hline
\multirow{5}{*}{\begin{sideways}\textbf{MI}\end{sideways}} & Frob distance & 0.0462 & 0.0530 & 0.0196 &  \textbf{0.1559} & 0.0502 & 0.0379 & 0.0506 & 0.0128 & 0.0128 \\
& Spectral orig  & \textbf{0.2197} &  \textbf{0.2815} &  \textbf{0.2045} & 0.0808 &  \textbf{0.2180} &  \textbf{0.2285} &  \textbf{0.2181} &  \textbf{0.0359} &  \textbf{0.0359}\\
& Parameter norm & 0.0039 & 0.0197 & 0.0066 & 0.0115 & 0.0064 & 0.0049 & 0.0167 & 0.0047 & 0.0038\\
& Path norm & 0.1027 & 0.1230 & 0.1308 & 0.0315 & 0.1056 & 0.1028 & 0.1160 & 0.0240 & 0.0240 \\
& Fisher Rao & 0.0060 & 0.0072 & 0.0020 & 0.0713 & 0.0057 & 0.0014 & 0.0071 & 0.0018 & 0.0013 \\
\cline{2-11}
& oracle 0.05  & 0.1475 & 0.1167 & 0.1369 & 0.1241 & 0.1515 & 0.1469 & 0.1535 & 0.0503 & 0.0503\\
\hline
\end{tabular}
}
\caption{Numerical Results for Selected (Norm \&  Margin)-Based Complexity Measures}
\label{table:norm-margin}
\end{center}
\end{table}

\textbf{Spectral bound}:
The most surprising observation here is that the {\em spectral complexity is strongly negatively correlated with generalization}, and negatively correlated with changes within every hyperparameter type. Most notably, it has strong negative correlation with the {\em depth} of the network, which may suggest that the largest singular values are not sufficient to capture the capacity of the model. To better understand the reason behind this observation, we investigate using different components of the spectral complexity as the measure. An interesting observation is that the Frobenius distance to initialization is negatively correlated, but the Frobenius norm of the parameters is slightly positively correlated with generalization, which contradicts some theories suggesting solutions closer to initialization should generalize better. A tempting hypothesis is that weight decay favors solution closer to the origin, but we did an ablation study on only models with 0 weight decay and found that the distance from initialization still correlates negatively with generalization.

These observations correspond to choosing different reference matrices $\rmW_i^0$ for the bound: the distance corresponds to using the initialization as the reference matrices while the Frobenius norm of the parameters corresponds to using the origin as the reference. Since the Frobenius norm of the parameters shows better correlation, we use zero reference matrices in the spectral bound. This improved both $\tau$ and $\Psi$, albeit still negative. In addition, we extensively investigate the effect of different terms of the Spectral bound to isolate the effect; however, the results do not improve. These experiments can be found in the Appendix \ref{app:norm-margin}.

\textbf{Path norm}: While path-norm is a proper norm in the function space but not in parameter space, we observe that it is positively correlated with generalization in all hyper-parameter types and achieves comparable $\tau$ (0.373) and $\Psi$ (0.311).

\textbf{Fisher-Rao metric}:
The Fisher-Rao metric is a lower bound \citep{liang2017fisher} on the path norm that has been recently shown to capture generalization. We observed that it overall shows worse correlation than the path norm; in particular, it is negatively correlated ($\tau=-0.553$) with the depth of the network, which contrasts with path norm that properly captures the effect of depth on generalization. A more interesting observation is that the Fisher-Rao metric achieves a positive $\Psi=0.154$ but its $\tau=0.078$ is essentially at chance. This may suggest that the metric {\em can capture a single hyper-parameter change but is not able to capture the interactions between different hyperparameter types}.

\textbf{Effect of Randomness}:
\texttt{dropout} and \texttt{batch size} (first 2 columns of Table \ref{table:norm-margin}) directly introduce randomness into the training dynamic. For batch size, we observed that the Frobenius displacement and spectral complexity both correlate negatively with the changes in batch size while the Frobenius norm of the parameters correlates positively with generalization. On the other hand, when changes happen to the magnitude dropout probability, we observed that all of the proper norms are negatively correlated with the generalization changes. Since increasing dropout usually reduces the generalization gap, this implies that increasing the dropout probability may be at least partially responsible for the growth in these norms. This is unexpected since increasing norm in principle implies higher model capacity which is usually more prone to overfitting.

The overall picture does not change much going from the ranking correlation to mutual information, with a notable exception where spectral complexity has the highest conditional mutual information compared to all the other measures. This is due to the fact that the conditional mutual information is agnostic to the direction of correlation, and in the ranking correlation, spectral complexity has the highest absolute correlation. While this view might seem contradictory to classical view as the spectral complexity is a complexity measure which should be small to guarantee good generalization, it is nonetheless informative about the generalization of the model. Further, by inspecting the conditional mutual information for each hyperparameter, we find that the majority of spectral complexity's predictive power is due to its ability to capture the depth of the network, as the mutual information is significantly lower if depth is already observed.

\subsection{Success of Sharpness-Based Measures}
A natural category of generalization measures is centered around the concept of ``\emc{sharpness}'' of the local minima, capturing the sensitivity of the empirical risk (i.e. the loss over the entire training set) to perturbations in model parameters. Such notion of stability under perturbation is captured elegantly by the PAC-Bayesian framework~\citep{mcallester1999pac} which has provided promising insights for studying generalization of deep neural networks ~\citep{dziugaite2017computing,neyshabur2017exploring,neyshabur2017pac}. In this sections, we investigate PAC-Bayesian generalization bounds and several of their variants which rely on different priors and different notions of sharpness (Table \ref{table:sharpness}).

In order to evaluate a PAC-Bayesian bound, one needs to come up with a prior distribution over the parameters that is chosen in advance before observing the training set. Then, given any posterior distribution on the parameters which could depend on the training set, a PAC-Bayesian bound (Theorem \ref{pac-bayes-bound}) states that the expected generalization error of the parameters generated from the posterior can be bounded by the $\KL$-divergence of the prior and posterior. The posterior distribution can be seen as adding perturbation on final parameters. \citet{dziugaite2017computing} shows contrary to other generalization bounds, it is possible to  calculate non-vacuous PAC-Bayesian bounds by optimizing the bound over a large set of Gaussian posteriors. \citet{neyshabur2017exploring} demonstrates that when prior and posterior are isotropic Gaussian distributions, then PAC-Bayesian bounds  are good measure of generalization on small scale experiments; see Eq (\ref{eq:pb-bound}). 

PAC-Bayesian framework captures sharpness in the \emc{expected} sense since we add randomly generated perturbations to the parameters. Another possible notion of sharpness is the \emc{worst-case} sharpness where we search for the direction that changes the loss the most. This is motivated by \citep{keskar2016large} where they observe that this notion would correlate with generalization in the case of different batch sizes. We can use PAC-Bayesian framework to construct generalization bounds for this worst-case perturbations as well. We refer to this worst case bound as the sharpness bound in Eq (\ref{eq:sharpness-bound}). The main component in both PAC-Bayes and worst-case sharpness bounds is the ratio of norm of parameters to the magnitude of the perturbation, where the magnitude is chosen to be the largest number such that the training error of the perturbed model is at most $0.1$. While mathematically, the sharpness bound should always yield higher complexity than the PAC-Bayes bound, we observed that the former has higher correlation both in terms of $\tau$ and $\Psi$. In addition, we studied inverse of perturbation magnitude as a measure by removing the norm in the numerator to compare it with the bound. However, we did not observe a significant difference.

\begin{centering}
\begin{table}[htbp]
\begin{center}
{\tiny
\setlength\tabcolsep{4.2pt}
\begin{tabular}{@c|^c^c^c^c^c^c^c^c^c^c^c^c^c}
\rowcolor{LightYellow}
\rowstyle{\bfseries}
 & & \multicolumn{1}{c}{\shortstack{batch\\size}} & \multicolumn{1}{c}{\shortstack{dropout\\ \phantom{a}}} & \multicolumn{1}{c}{\shortstack{learning\\rate}} & \multicolumn{1}{c}{\shortstack{depth\\ \phantom{a}}} & \multicolumn{1}{c}{\shortstack{optimizer\\ \phantom{a}}} & \multicolumn{1}{c}{\shortstack{weight \\ decay}} & \multicolumn{1}{c}{\shortstack{width\\ \phantom{a}}} & \multicolumn{1}{c}{\shortstack{overall\\ $\tau$}} &
\multicolumn{1}{c}{\shortstack{$\Psi$\\ \phantom{a}}} \\
\hline
\multirow{5}{*}{\begin{sideways}\textbf{Corr}\end{sideways}} & sharpness-orig \ref{eq:sp-origin} & 0.542 & -0.359 & 0.716 & 0.816 & 0.297 & 0.591 & 0.185 & 0.400 & 0.398 \\ 
& pacbayes-orig \ref{eq:pb-origin} & 0.526 & -0.076 & 0.705 & 0.546 & \textbf{0.341} & 0.564 & -0.086 & 0.293 & 0.360 \\
& $1/\alpha'$  sharpness mag  \ref{eq:sharpness-mag-flat}  & \textbf{0.570} & \textbf{0.148} & \textbf{0.762} & 0.824 & 0.297 & \textbf{0.741} & \textbf{0.269} & \textbf{0.484} & \textbf{0.516} \\ 
& $1/\sigma'$ pacbayes mag  \ref{eq:pac-bayes-mag-flat} & 0.490 & -0.215 & 0.505 & \textbf{0.896} & 0.186 & 0.147 & 0.195 & 0.365 & 0.315 \\ 
\cline{2-11}
& oracle 0.02 & 0.380 & 0.657 & 0.536 & 0.717 & 0.374 & 0.388 & 0.360 & 0.714 & 0.487 \\ 
\hline
\rowcolor{LightCyan}
 & & & & &  & & & &  $|\gS|=2$ & $\min \forall |\gS|$\\
\hline
\multirow{5}{*}{\begin{sideways}\textbf{MI}\end{sideways}} & sharpness-orig & 0.1117 & 0.2353 & 0.0809 & 0.0658 & 0.1223 & 0.1071 & 0.1254 & 0.0224 & 0.0224\\
& pacbayes-orig & 0.0620 & 0.1071 & 0.0392 & 0.0597 & 0.0645 & 0.0550 & 0.0977 & 0.0225 & 0.0225\\
& $1/\alpha'$  sharpness mag & \textbf{0.1640} & \textbf{0.2572} & \textbf{0.1228} & \textbf{0.1424} & \textbf{0.1779} & \textbf{0.1562} & \textbf{0.1786} & \textbf{0.0544} & \textbf{0.0544}\\
&$1/\sigma'$ pacbayes mag  & 0.0884 & 0.1514 & 0.0813 & 0.0399 & 0.1004 & 0.1025 & 0.0986 & 0.0241 & 0.0241 \\
\cline{2-11}
& oracle 0.05  & 0.1475 & 0.1167 & 0.1369 & 0.1241 & 0.1515 & 0.1469 & 0.1535 & 0.0503 & 0.0503\\
\hline
\end{tabular}
}
\caption{Numerical results for selected Sharpness-Based Measures; all the measure use the origin as the reference and \texttt{mag} refers to magnitude-aware version of the measure.}
\label{table:sharpness}
\end{center}
\end{table}
\end{centering}

\subsubsection{Magnitude-Aware Perturbation Bounds}
Perturbing the parameters without taking their magnitude into account can cause many of them to switch signs. Therefore, one cannot apply large perturbations to the model without changing the loss significantly. One possible modification to improve the perturbations is to choose the perturbation magnitude based on the magnitude of the parameter. In that case, it is guaranteed that if the magnitude of perturbation is less than the magnitude of the parameter, then the sign of the parameter does not change. Following \cite{keskar2016large}, we pick the magnitude of the perturbation with respect to the magnitude of parameters. We formalize this notion of importance based  magnitude. Specifically, we derive two alternative generalization bounds for expected sharpness in Eq ( \ref{eq:mag-pacbayes}) and worst case sharpness in Eq  (\ref{eq:mag-sharpness}) that include the magnitude of the parameters into the prior. Formally, we design $\alpha'$ and $\sigma'$, respectively for sharpness and PAC-Bayes bounds, to be the ratio of parameter magnitude to the perturbation magnitude. While this change did not improve upon the original PAC-Bayesian measures, we observed that simply looking at $1/\alpha'$ has {\em surprising predictive power} in terms of the generalization which surpasses the performance of oracle 0.02. This measure is very close to what was originally suggested in \cite{keskar2016large}.

The effectiveness of this measure is further corroborated by the conditional mutual information based metric, where we observed that $1/\alpha'$ has the highest mutual information with generalization among all hyperparameters and also overall.

\subsubsection{Finding $\sigma$}
In case of models with extremely small loss, the perturbed loss should roughly increase monotonically with respect to the perturbation scale. Leveraging this observation, we design algorithms for computing the perturbation scale $\sigma$ such that the first term on the RHS is as close to a fixed value as possible for all models. In our experiments, we choose the deviation to be 0.1 which translates to 10\% training error. These search algorithms are paramount to compare measures between different models. We provide the detailed algorithms in the Appendix \ref{app:algorithms}. To improve upon our algorithms, one could try a computational approach similar to \citet{dziugaite2017computing} to obtain a numerically better bound which may result in stronger correlation. However, due to practical computational constraints, we could not do so for the large number of models we consider.

\subsection{Potential of Optimization-based Measures}
Optimization is an indispensable component of deep learning. Numerous optimizers have been proposed for more stable training and faster convergence. How the optimization scheme and speed of optimization influence generalization of a model has been a topic of contention among the deep learning community \citep{merity2017regularizing, hardt2015train}. We study 3 representative optimizers \texttt{Momentum SGD}, \texttt{Adam}, and \texttt{RMSProp} with different initial learning rates in our experiments to thoroughly evaluate this phenomenon.
We also consider other optimization related measures that are believed to correlate with generalization. These include (Table \ref{table:opt}):
\begin{enumerate}
    \item Number of iterations required to reach cross-entropy equals 0.1
    \item Number of iterations required going from cross-entropy equals 0.1 to cross-entropy equals 0.01
    \item Variance of the gradients after only seeing the entire dataset once (1 epoch)
    \item Variance of the gradients when the cross-entropy is approximately 0.01
\end{enumerate}

\begin{table}[htbp]
\begin{center}
{\tiny
\setlength\tabcolsep{4.2pt}
\begin{tabular}{@c|^c^c^c^c^c^c^c^c^c^c^c^c^c}
\rowcolor{LightYellow}
\rowstyle{\bfseries}
 & & \multicolumn{1}{c}{\shortstack{batch\\ size}} & \multicolumn{1}{c}{\shortstack{dropout\\ \phantom{a}}} & \multicolumn{1}{c}{\shortstack{learning\\rate}} & \multicolumn{1}{c}{\shortstack{depth\\ \phantom{a}}} & \multicolumn{1}{c}{\shortstack{optimizer\\ \phantom{a}}} & \multicolumn{1}{c}{\shortstack{weight\\decay}} & \multicolumn{1}{c}{\shortstack{width\\ \phantom{a}}} & \multicolumn{1}{c}{\shortstack{overall\\ $\tau$}} &
\multicolumn{1}{c}{\shortstack{$\Psi$\\ \phantom{a}}} \\
\hline
\multirow{5}{*}{\begin{sideways}\textbf{Corr}\end{sideways}} & step to 0.1 \ref{eq:0.1step} & -0.664 & -0.861 & -0.255 &  \textbf{0.440} & -0.030 & -0.628 & 0.043 & -0.264 & -0.279 \\ 
& step 0.1 to 0.01 \ref{eq:0.10.01step} & -0.151 & -0.069 & -0.014 & 0.114 & 0.072 & -0.046 & -0.021 & -0.088 & -0.016 \\ 
& grad noise 1 epoch \ref{eq:grad-epoch1}& 0.071 &  \textbf{0.378} & 0.376 & -0.517 & 0.121 & 0.221 & 0.037 & 0.070 & 0.098 \\
& grad noise final \ref{eq:grad-fianl} & \textbf{0.452} & 0.119 &  \textbf{0.427} & 0.141 &  \textbf{0.245} &  \textbf{0.432} &  \textbf{0.230} &  \textbf{0.311} &  \textbf{0.292} \\ 
\cline{2-11}
& oracle 0.02 & 0.380 & 0.657 & 0.536 & 0.717 & 0.374 & 0.388 & 0.360 & 0.714 & 0.487 \\ 
\hline
\rowcolor{LightCyan}
 & & & & & & & & & $|\gS|=2$ & $\min \forall |\gS|$\\
\hline
\multirow{5}{*}{\begin{sideways}\textbf{MI}\end{sideways}} & step to 0.1 & 0.0349 & 0.0361 & 0.0397 & \textbf{0.1046} & 0.0485 & 0.0380 & 0.0568 & 0.0134 & 0.0134\\
& step 0.1 to 0.01 & 0.0125 & 0.0031 & 0.0055 & 0.0093 & 0.0074 & 0.0043 & 0.0070 & 0.0032 & 0.0032\\
& grad noise 1 epoch & 0.0051 & 0.0016 & 0.0028 & 0.0633 & 0.0113 & 0.0027 & 0.0052 & 0.0013 & 0.0013\\
& grad noise final & \textbf{0.0623} & \textbf{0.0969} & \textbf{0.0473} & 0.0934 & \textbf{0.0745} & \textbf{0.0577} & \textbf{0.0763} & \textbf{0.0329} & \textbf{0.0329}\\
\cline{2-11}
& oracle 0.05  & 0.1475 & 0.1167 & 0.1369 & 0.1241 & 0.1515 & 0.1469 & 0.1535 & 0.0503 & 0.0503\\
\hline
\end{tabular}
}
\caption{Optimization-Based Measures}
\label{table:opt}
\end{center}
\end{table}

{\bf Number of Iterations:} The number of iterations roughly characterizes the speed of optimization, which has been argued to correlate with generalization. For the models considered here, we observed that {\em the initial phase (to reach cross-entropy value of 0.1) of the optimization is negatively correlated with the speed of optimization for both $\tau$ and $\Psi$}. This would suggest that the difficulty of optimization during the initial phase of the optimization benefits the final generalization. On the other hand, the speed of optimization going from cross-entropy 0.1 to cross-entropy 0.01 does not seem to be correlated with the generalization of the final solution. Importantly, the speed of optimization is not an explicit capacity measure so {\em either positive or negative correlation could potentially be informative}.

{\bf Variance of Gradients:} {\em Towards the end of the training, the variance of the gradients also captures a particular type of ``flatness'' of the local minima. This measure is surprisingly predictive of the generalization both in terms of $\tau$ and $\Psi$, and more importantly, is positively correlated across every type of hyperparameter}. To the best of our knowledge, this is the first time this phenomenon has been observed. The connection between variance of the gradient and generalization is perhaps natural since much of the recent advancement in deep learning such as residual networks \citep{he2016deep} or batch normalization have enabled using larger learning rates to train neural networks. Stability with higher learning rates implies smaller noises in the minibatch gradient.
With the mutual information metric, the overall observation is consistent with that of ranking correlation, but the final gradient noise also outperforms gradient noise at 1 epoch of training conditioned on the dropout probability.
We hope that our work encourages future works in other possible measures based on optimization and during training.

\section{Conclusion}
We conducted large scale experiments to test the correlation of different measures with the generalization of deep models and propose a framework to better disentangle the cause of correlation from spurious correlation. We confirmed the effectiveness of the PAC-Bayesian bounds through our experiments and corroborate it as a promising direction for cracking the generalization puzzle. Further, we provide an extension to existing PAC-Bayesian bounds that consider the importance of each parameter. We also found that several measures related to optimization are surprisingly predictive of generalization and worthy of further investigation. On the other hand, several surprising failures about the norm-based measures were uncovered. In particular, we found that regularization that introduces randomness into the optimization can increase various norm of the models and spectral complexity related norm-based measures are unable to capture generalization -- in fact, most of them are negatively correlated. Our experiments demonstrate that the study of generalization measure can be misleading when the number of models studied is small and the metric of quantifying the relationship is not carefully chosen. We hope this work will incentivize more rigorous treatment of generalization measures in future work.

To the best of our knowledge, this work is one of the most comprehensive study of generalization to date, but there are a few short-comings. Due to computational constraints, we were only able to study 7 most common hyperparameter types and relatively small architectures, which do not reflect the models used in production. Indeed, if more hyperparameters are considered, one could expect to better capture the causal relationship. We also only studied models trained on two image datasets (CIFAR-10 and SVHN), only classification models and only convolutional networks. We hope that future work would address these limitations.

\section*{Acknowledgement}
We thank our colleagues at Google: Guy Gur-Ari for many insightful discussions that helped with the experiment design, Ethan Dyer, Pierre Foret, Sergey Ioffe for their feedback, and Scott Yak for help with implementation. We are grateful for insightful discussions with Brady Neal of University of Montreal about limitation of correlation analysis. We also thank Daniel Roy of University of Toronto for insightful comments.

\nocite{harry-potter}
\bibliography{deepnet_generalization}
\bibliographystyle{apalike}

\appendix
\newpage
\section{Experiments}

\subsection{More training details}\label{app:training_details}
During our experiments, we found that Batch Normalization \citep{batchnorm} is crucial to reliably reach a low cross-entropy value for all models; since normalization is a indispensable components of modern neural networks, we decide to use batch normalization in all of our models. We \textit{remove} batch normalization before computing any measure by fusing the $\gamma$, $\beta$ and moving statistics with the convolution operator that precedes the normalization. This is important as \citet{dinh2017sharp} showed that common generalization measures such as sharpness can be easily manipulated with re-parameterization.
We also discovered that the models trained with data augmentation often cannot fit the data (i.e. reach cross-entropy 0.01) completely. Since a model with data augmentation tends to consistently generalize better than the models without data augmentation, measure that reflects the training error (i.e. value of cross-entropy) will easily predict the ranking between two models even though it has only learned that one model uses data augmentation (see the thought experiments from the previous section). While certain hyperparameter configuration can reach cross-entropy of 0.01 even with data augmentation, it greatly limits the space of models that we can study. Hence, we make the design choice to not include data augmentation in the models of this study. Note that from a theoretical perspective, data augmentation is also challenging to analyze since the training samples generated from the procedure are no longer identical and independently distributed. All values for all the measures we computed over these models can be found in Table \ref{table:everything-in-ther-universe} in Appendix \ref{app:all_results}.

\subsection{The choice of stopping criterion}\label{sec:stopping}
The choice of stopping criterion is very essential and could completely change the evaluation and the resulting conclusions. In our experiments we noticed that if we pick the stopping criterion based on number of iterations or number of epochs, then since some models optimize faster than others, they end up fitting the training data more and in that case the cross-entropy itself can be very predictive of generalization. To make it harder to distinguish models based on their training performance, it makes more sense to choose the stopping criterion based on the training error or training loss. We noticed that as expected, models with the same cross-entropy usually have very similar training error so that suggests that this choice is not very important. However, during the optimization the training error behavior is noisier than cross-entropy and moreover, after the training error reaches zero, it cannot distinguish models while the cross-entropy is still meaningful after fitting the data. Therefore, we decided to use cross-entropy as the stopping criterion.

\subsection{All Model Specification} \label{app:model_specs}
As mentioned in the main text, the models we use resemble Network-in-Network \citep{gao2011robustness} which is a class of more parameter efficient convolution neural networks that achieve reasonably competitive performance on modern image classification benchmarks.
The model consists blocks of modules that have 1 $3\times3$ convolution with stride 2 followed by 2 $1\times1$ convolution with stride 1. We refer to this single module as a \texttt{NiN-block} and construct models of different size by stacking NiN-block. For simplicity, all NiN-block have the same number of output channels $c_{out}$. Dropout is applied at the end of every NiN-block. At the end of the model, there is a $1\times1$ convolution reducing the channel number to the class number (i.e. 10 for CIFAR-10) followed by a global average pooling to produce the output logits.

For width, we choose from $c_{out}$ from 3 options: $\{2 \times 96, 4 \times 96, 8 \times 96\}$.

For depth, we choose from 3 options:
$\{2\times \mathrm{NiN block}, 4\times \mathrm{NiN block}, 8\times \mathrm{NiN block}\}$

For dropout, we choose from 3 options:
$\{0.0, 0.25, 0.5\}$

For batch size, we choose from:
$\{32, 64, 128\}$

Since each optimizer may require different learning rate and in some cases, different regularization, we fine-tuned the hyper-parameters for each optimizer while keeping 3 options for every hyper-parameter choices\footnote{While methods with adaptive methods generally require less tuning, in practice researchers have observed performance gains from tuning the initial learning rate and learning rate decay.}.

\textbf{Momentum SGD}: We choose momentum of 0.9 and choose the initial learning rate $\eta$ from $\{0.1, 0.032, 0.01\}$ and regularization coefficient $\lambda$ from $\{0.0, 0.0001, 0.0005\}$. The learning rate decay schedule is $\times 0.1$ at iterations $[60000, 90000]$.

\textbf{Adam}: We choose initial learning rate $\eta$ from $\{0.001, 3.2e-4, 1e-4\}$, $\epsilon=1e-3$ and regularization coefficient $\lambda$ from $\{0.0, 0.0001, 0.0005\}$. The learning rate decay schedule is $\times 0.1$ at iterations $[60000, 90000]$.

\textbf{RMSProp}: We choose initial learning rate $\eta$ from $\{0.001, 3.2e-4, 1e-4\}$ and regularization coefficient $\lambda$ from $\{0.0, 0.0001, 0.0003\}$. The learning rate decay schedule is $\times 0.1$ at iterations $[60000, 90000]$.

\subsection{Canonical Measures}
\label{app:misc_measures}
Based on empirical observations made by the community as a whole, the canonical ordering we give to each of the hyper-parameter categories are as follows:
\begin{enumerate}
    \item Batchsize: smaller batchsize leads to smaller generalization gap
    \item Depth: deeper network leads to smaller generalization gap
    \item Width: wider network leads to smaller generalization gap
    \item Dropout: The higher the dropout ($\leq 0.5$) the smaller the generalization gap
    \item Weight decay: The higher the weight decay (smaller than the maximum for each optimizer) the smaller the generalization gap
    \item Learning rate: The higher the learning rate (smaller than the maximum for each optimizer) the smaller the generalization gap
    \item Optimizer: Generalization gap of \texttt{Momentum SGD} $<$ Generalization gap of \texttt{Adam} $<$  Generalization gap of \texttt{RMSProp}
\end{enumerate}

\subsection{Definition of Random Variables}
\label{app:definition}
Since the measures are results of complicated interactions between the data, the model, and the training procedures, we cannot manipulate it to be any values that we want. Instead, we use the following definition of random variables: suppose $\gS$ is a subset of all the components of $\boldsymbol{\theta}$ (e.g. $\gS = \{\varnothing\}$ for $|\gS| = 0$, $|\gS| = \{\textrm{learning rate}\}$  for $|\gS| = 1$ or $|\gS| = \{\textrm{learning rate, dropout}\}$  for $|\gS| = 2$ ).
Specifically  we denote $\gS_{ab}$ as the collective condition $\{\theta_1^{(a)}=v_1, \theta_1^{(b)}=v_2, \, \dots, \, \theta_{|\gS|}^{(a)}=v_{2|\gS|-1}, \theta_{|\gS|}^{(b)}=v_{2|\gS|}\}$.
We can then define and empirical measure four probability $\textrm{Pr}(\mu^{(a)}>\mu^{(b)}, g^{(a)}>g^{(b)}\, | \, \gS_{ab})$, $\textrm{Pr}(\mu^{(a)}>\mu^{(b)},g^{(a)}<g^{(b)}\, | \, \gS_{ab})$, $\textrm{Pr}(\mu^{(a)}<\mu^{(b)}, g^{(a)}>g^{(b)}\, | \, \gS_{ab})$ and $\textrm{Pr}(\mu^{(a)}<\mu^{(b)},g^{(a)}<g^{(b)}\, | \, \gS_{ab})$.

\begin{figure}[htp]
\begin{center}
\begin{tabular}{|c|c|c|}
\hline
                       & $\mu^{(a)}>\mu^{(b)}$  & $\mu^{(a)}\leq\mu^{(b)}$\\
                       \hline
     $g^{(a)}>g^{(b)}$ & $p_{00}$ & $p_{01}$ \\
     \hline
     $g^{(a)}\leq g^{(b)}$ & $p_{10}$ & $p_{11}$\\
     \hline
\end{tabular}
\caption{Joint Probability table for a single $\gS_{ab}$}
\end{center}
\end{figure}

Together forms a 2 by 2 table that defines the joint distribution of the Bernoulli random variables $\textrm{Pr}(g^{(a)}>g^{(b)}\, | \, \gS_{ab})$ and $\textrm{Pr}(\mu^{(a)}>\mu^{(b)}\, | \, \gS_{ab})$. For notation convenience, we use $\textrm{Pr}(\mu, g\, | \, \gS_{ab})$ , $\textrm{Pr}(g\, | \, \gS_{ab})$ and $\textrm{Pr}(\mu \, | \, \gS_{ab})$ to denote the joint and marginal. If there are $N=3$ choices for each hyperparameter in $\gS$ then there will be $N^{|\gS|}$ such tables for each hyperparameter combination. Since each configuration occurs with equal probability, for that arbitrary $\boldsymbol{\theta}^{(a)}$ and $\boldsymbol{\theta}^{(b)}$ drawn from $\Theta$ conditioned on that the components of $\gS$ are observed for both models, the joint distribution can be defined as $\textrm{Pr}(\mu, g \, | \, \gS)=\frac{1}{N^{|\gS|}} \sum_{\gS_{ab}}\textrm{Pr}(\mu, g\, | \, \gS_{ab})$ and likely the marginals can be defined as $\textrm{Pr}(\mu \, | \, \gS)=\frac{1}{N^{|\gS|}} \sum_{\gS_{ab}} \textrm{Pr}(\mu \, | \, \gS_{ab})$ and $\textrm{Pr}(g \, | \, \gS)=\frac{1}{N^{|\gS|}} \sum_{\gS_{ab}} \textrm{Pr}(g \, | \, \gS_{ab})$. With these notations established, all the relevant quantities can be computed by iterating over all pairs of models.

\subsection{All Results}
\label{app:all_results}
Below we present all of the measures we computed and their respective $\tau$ and $\Psi$ on more than 10,000 models we trained and additional plots. Unless stated otherwise, convergence is considered when the loss reaches the value of $0.1$.

\begin{table}[htp]
{\tiny
\setlength\tabcolsep{4.2pt}
\begin{tabular}{|l|L|L|L|L|L|L|L|L|L|L|}
\hline
& \multicolumn{1}{c|}{ref} & \multicolumn{1}{c|}{batchsize} & \multicolumn{1}{c|}{dropout} & \multicolumn{1}{c|}{\shortstack{learning\\rate}} & \multicolumn{1}{c|}{depth} & \multicolumn{1}{c|}{optimizer} & \multicolumn{1}{c|}{\shortstack{weight\\decay}} & \multicolumn{1}{c|}{width} & \multicolumn{1}{c|}{overall $\tau$} & \multicolumn{1}{c|}{$\Psi$} \\
\hline
vc dim &\ref{eq:vc}& 0.000 & 0.000 & 0.000 & -0.909 & 0.000 & 0.000 & -0.171 & -0.251 & -0.154 \\ 
\# params &\ref{eq:num_p}& 0.000 & 0.000 & 0.000 & -0.909 & 0.000 & 0.000 & -0.171 & -0.175 & -0.154 \\ 
sharpness &\ref{eq:sp-init} & 0.537 & -0.523 & 0.449 & 0.826 & 0.221 & 0.233 & -0.004 & 0.282 & 0.248 \\ 
pacbayes &\ref{eq:pb-init}& 0.372 & -0.457 & 0.042 & 0.644 & 0.179 & -0.179 & -0.142 & 0.064 & 0.066 \\ 
sharpness-orig &\ref{eq:sp-origin} & 0.542 & -0.359 & 0.716 & 0.816 & 0.297 & 0.591 & 0.185 & 0.400 & 0.398\\ 
pacbayes-orig &\ref{eq:pb-origin} & 0.526 & -0.076 & 0.705 & 0.546 & 0.341 & 0.564 & -0.086 & 0.293 & 0.360 \\ 
frob-distance & \ref{eq:frob-dist} & -0.317 & -0.833 & -0.718 & 0.526 & -0.214 & -0.669 & -0.166 & -0.263 & -0.341 \\ 
spectral-init &\ref{eq:spec-init}& -0.330 & -0.845 & -0.721 & -0.908 & -0.208 & -0.313 & -0.231 & -0.576 & -0.508 \\ 
spectral-orig & \ref{eq:spec-origin} & -0.262 & -0.762 & -0.665 & -0.908 & -0.131 & -0.073 & -0.240 & -0.537 & -0.434 \\ 
spectral-orig-main & \ref{eq:spec-orig-main} & -0.262 & -0.762 & -0.665 & -0.908 & -0.131 & -0.073 & -0.240 & -0.537 & -0.434 \\ 
fro/spec & \ref{eq:fro-over-spec}  & 0.563 & 0.351 & 0.744 & -0.898 & 0.326 & 0.665 & -0.053 & -0.008 & 0.243 \\ 
prod-of-spec & \ref{eq:prod-spec-over}  & -0.464 & -0.724 & -0.722 & -0.909 & -0.197 & -0.142 & -0.218 & -0.559 & -0.482 \\ 
prod-of-spec/margin & \ref{eq:prod-spec-over-margin} & -0.308 & -0.782 & -0.702 & -0.907 & -0.166 & -0.148 & -0.179 & -0.570 & -0.456 \\ 
sum-of-spec &  \ref{eq:sum-of-spec} & -0.464 & -0.724 & -0.722 & 0.909 & -0.197 & -0.142 & -0.218 & 0.102 & -0.223 \\ 
sum-of-spec/margin & \ref{eq:sum-of-spec-over-margin}  & -0.308 & -0.782 & -0.702 & 0.909 & -0.166 & -0.148 & -0.179 & 0.064 & -0.197 \\ 
spec-dist & \ref{eq:spec-dist}  & -0.458 & -0.838 & -0.568 & 0.738 & -0.319 & -0.182 & -0.171 & -0.110 & -0.257 \\ 
prod-of-fro & \ref{eq:prod-of-fro}  & 0.440 & -0.199 & 0.538 & -0.909 & 0.321 & 0.731 & -0.101 & -0.297 & 0.117 \\ 
prod-of-fro/margin & \ref{eq:prod-of-fro-over-margin}   & 0.513 & -0.291 & 0.579 & -0.907 & 0.364 & 0.739 & -0.088 & -0.295 & 0.130 \\ 
sum-of-fro & \ref{eq:sum-of-fro} & 0.440 & -0.199 & 0.538 & 0.913 & 0.321 & 0.731 & -0.101 & 0.418 & 0.378 \\ 
sum-of-fro/margin & \ref{eq:sum-of-fro-over-margin} & 0.520 & -0.369 & 0.598 & 0.882 & 0.380 & 0.738 & -0.080 & 0.391 & 0.381 \\ 
1/margin & \ref{eq:1-over-margin}  & -0.312 & 0.593 & -0.234 & -0.758 & -0.223 & 0.211 & -0.125 & -0.124 & -0.121 \\ 
neg-entropy & \ref{eq:entropy}  & 0.346 & -0.529 & 0.251 & 0.632 & 0.220 & -0.157 & 0.104 & 0.148 & 0.124 \\ 
path-norm & \ref{eq:path-norm} & 0.363 & -0.190 & 0.216 & 0.925 & 0.272 & 0.195 & 0.178 & 0.370 & 0.280 \\ 
path-norm/margin & \ref{eq:path-norm-over-margin} & 0.363 & 0.017 & 0.148 & 0.922 & 0.230 & 0.280 & 0.173 & 0.374 & 0.305 \\ 
param-norm & \ref{eq:param-norm} & 0.236 & -0.516 & 0.174 & 0.330 & 0.187 & 0.124 & -0.170 & 0.073 & 0.052 \\ 
fisher-rao & \ref{eq:fisher-rao} & 0.396 & 0.147 & 0.240 & -0.516 & 0.120 & 0.551 & 0.177 & 0.090 & 0.160 \\ 
cross-entropy & \ref{eq:xent}  & 0.440 & -0.402 & 0.140 & 0.390 & 0.149 & 0.232 & 0.080 & 0.149 & 0.147 \\ 
$1/\sigma$ pacbayes  & \ref{eq:pb-flatness} & 0.501 & -0.033 & 0.744 & 0.200 & 0.346 & 0.609 & 0.056 & 0.303 & 0.346 \\ 
$1/\sigma$ sharpness & \ref{eq:sp-flatness} & 0.532 & -0.326 & 0.711 & 0.776 & 0.296 & 0.592 & 0.263 & 0.399 & 0.406 \\ 
num-step-0.1-to-0.01-loss & \ref{eq:0.10.01step} & -0.151 & -0.069 & -0.014 & 0.114 & 0.072 & -0.046 & -0.021 & -0.088 & -0.016 \\ 
num-step-to-0.1-loss & \ref{eq:0.1step}  & -0.664 & -0.861 & -0.255 & 0.440 & -0.030 & -0.628 & 0.043 & -0.264 & -0.279 \\ 
$1/\alpha'$  sharpness mag  & \ref{eq:sharpness-mag-flat} & 0.570 & 0.148 & 0.762 & 0.824 & 0.297 & 0.741 & 0.269 & 0.484 & 0.516 \\ 
$1/\sigma'$ pacbayes mag & \ref{eq:pac-bayes-mag-flat}  & 0.490 & -0.215 & 0.505 & 0.896 & 0.186 & 0.147 & 0.195 & 0.365 & 0.315 \\ 
pac-sharpness-mag-init & \ref{eq:pac-sp-mag-init} & -0.293 & -0.841 & -0.698 & -0.909 & -0.240 & -0.631 & -0.171 & -0.225 & -0.541 \\ 
pac-sharpness-mag-orig & \ref{eq:pac-sp-mag-orig} & 0.401 & -0.514 & 0.321 & -0.909 & 0.181 & 0.281 & -0.171 & -0.158 & -0.059 \\ 
pacbayes-mag-init & \ref{eq:pacbayes-mag-init} & 0.425 & -0.658 & -0.035 & 0.874 & 0.099 & -0.407 & 0.069 & 0.175 & 0.052 \\ 
pacbayes-mag-orig & \ref{eq:pacbayes-mag-orig} & 0.532 & -0.480 & 0.508 & 0.902 & 0.188 & 0.155 & 0.186 & 0.410 & 0.284 \\ 
grad-noise-final & \ref{eq:grad-fianl} & 0.452 & 0.119 & 0.427 & 0.141 & 0.245 & 0.432 & 0.230 & 0.311 & 0.292 \\ 
grad-noise-epoch-1 & \ref{eq:grad-epoch1} & 0.071 & 0.378 & 0.376 & -0.517 & 0.121 & 0.221 & 0.037 & 0.070 & 0.098 \\ 
oracle 0.01 &  & 0.579 & 0.885 & 0.736 & 0.920 & 0.529 & 0.622 & 0.502 & 0.851 & 0.682 \\ 
oracle 0.02 &  & 0.414 & 0.673 & 0.548 & 0.742 & 0.346 & 0.447 & 0.316 & 0.726 & 0.498 \\ 
oracle 0.05 &  & 0.123 & 0.350 & 0.305 & 0.401 & 0.132 & 0.201 & 0.142 & 0.456 & 0.236 \\ 
oracle 0.1 &  & 0.069 & 0.227 & 0.132 & 0.223 & 0.086 & 0.121 & 0.093 & 0.241 & 0.136 \\ 
canonical ordering &  & -0.652 & 0.969 & 0.733 & 0.909 & -0.055 & 0.735 & 0.171 & 0.005 & 0.402 \\ 
canonical ordering depth &  & -0.032 & 0.001 & 0.033 & -0.909 & -0.061 & -0.020 & 0.024 & -0.363 & -0.138 \\
\hline
\end{tabular}
}
\caption{Complexity measures (rows), hyperparameters (columns) and the {\bf rank-correlation coefficients} with models trained on {\bf CIFAR-10}.}
\label{table:everything-in-ther-universe}
\end{table}

\begin{table}
{\tiny
\setlength\tabcolsep{4.2pt}
\begin{tabular}{|l|L|L|L|L|L|L|L|L|L|L|}
\hline
& \multicolumn{1}{c|}{batchsize} & \multicolumn{1}{c|}{dropout} & \multicolumn{1}{c|}{\shortstack{learning\\rate}} & \multicolumn{1}{c|}{num\_block} & \multicolumn{1}{c|}{optimizer} & \multicolumn{1}{c|}{\shortstack{weight\\decay}} & \multicolumn{1}{c|}{\hspace{0.10in}width} & \multicolumn{1}{c|}{\hspace{0.15in}$|\gS|=0$} & \multicolumn{1}{c|}{\hspace{0.10in}$|\gS|=1$} & \multicolumn{1}{c|}{$|\gS|=2$} \\
\hline
\#-param & 0.0202 & 0.0278 & 0.0259 & 0.0044 & 0.0208 & 0.0216 & 0.0379 & 0.0200 & 0.0036 & 0.0000 \\
-entropy & 0.0120 & 0.0656 & 0.0113 & 0.0086 & 0.0120 & 0.0155 & 0.0125 & 0.0117 & 0.0072 & 0.0065 \\
1-over-sigma-pacbayes-mag & 0.0884 & 0.1514 & 0.0813 & 0.0399 & 0.1004 & 0.1025 & 0.0986 & 0.0960 & 0.0331 & 0.0241 \\
1-over-sigma-pacbayes & 0.0661 & 0.1078 & 0.0487 & 0.0809 & 0.0711 & 0.0589 & 0.0858 & 0.0664 & 0.0454 & 0.0340 \\
1-over-sigma-sharpness-mag & 0.1640 & 0.2572 & 0.1228 & 0.1424 & 0.1779 & 0.1562 & 0.1786 & 0.1741 & 0.1145 & 0.0544 \\
1-over-sigma-sharpness & 0.1086 & 0.2223 & 0.0792 & 0.0713 & 0.1196 & 0.1041 & 0.1171 & 0.1159 & 0.0592 & 0.0256 \\
cross-entropy & 0.0233 & 0.0850 & 0.0118 & 0.0075 & 0.0159 & 0.0119 & 0.0183 & 0.0161 & 0.0062 & 0.0040 \\
displacement & 0.0462 & 0.0530 & 0.0196 & 0.1559 & 0.0502 & 0.0379 & 0.0506 & 0.0504 & 0.0183 & 0.0128 \\
fisher-rao & 0.0061 & 0.0072 & 0.0020 & 0.0713 & 0.0057 & 0.0014 & 0.0071 & 0.0059 & 0.0013 & 0.0018 \\
fro-over-spec & 0.0019 & 0.0065 & 0.0298 & 0.0777 & 0.0036 & 0.0015 & 0.0005 & 0.0000 & 0.0005 & 0.0013 \\
frob-distance & 0.0462 & 0.0530 & 0.0196 & 0.1559 & 0.0502 & 0.0379 & 0.0506 & 0.0504 & 0.0183 & 0.0128 \\
grad-noise-epoch-1 & 0.0051 & 0.0016 & 0.0028 & 0.0633 & 0.0113 & 0.0027 & 0.0052 & 0.0036 & 0.0013 & 0.0013 \\
grad-noise-final & 0.0623 & 0.0969 & 0.0473 & 0.0934 & 0.0745 & 0.0577 & 0.0763 & 0.0712 & 0.0441 & 0.0329 \\
input-grad-norm & 0.0914 & 0.1374 & 0.1203 & 0.0749 & 0.1084 & 0.0853 & 0.1057 & 0.1042 & 0.0623 & 0.0426 \\
margin & 0.0105 & 0.0750 & 0.0078 & 0.0133 & 0.0108 & 0.0183 & 0.0119 & 0.0108 & 0.0072 & 0.0051 \\
oracle-0.01 & 0.6133 & 0.5671 & 0.6007 & 0.5690 & 0.6171 & 0.6108 & 0.6191 & 0.6186 & 0.4727 & 0.2879 \\
oracle-0.02 & 0.4077 & 0.3557 & 0.3929 & 0.3612 & 0.4124 & 0.4057 & 0.4154 & 0.4130 & 0.2987 & 0.1637 \\
oracle-0.05 & 0.1475 & 0.1167 & 0.1369 & 0.1241 & 0.1515 & 0.1469 & 0.1535 & 0.1515 & 0.0980 & 0.0503 \\
pacbayes-mag-init & 0.0216 & 0.0238 & 0.0274 & 0.0046 & 0.0222 & 0.0210 & 0.0345 & 0.0202 & 0.0038 & 0.0004 \\
pacbayes-mag-orig & 0.1160 & 0.2249 & 0.1006 & 0.0426 & 0.1305 & 0.1316 & 0.1246 & 0.1252 & 0.0354 & 0.0221 \\
pacbayes-orig & 0.0620 & 0.1071 & 0.0392 & 0.0597 & 0.0645 & 0.0550 & 0.0977 & 0.0629 & 0.0365 & 0.0225 \\
pacbayes & 0.0053 & 0.0164 & 0.0084 & 0.0086 & 0.0036 & 0.0066 & 0.0185 & 0.0030 & 0.0036 & 0.0040 \\
parameter-norm & 0.0039 & 0.0197 & 0.0066 & 0.0115 & 0.0064 & 0.0049 & 0.0167 & 0.0039 & 0.0038 & 0.0047 \\
path-norm-over-margin & 0.0943 & 0.1493 & 0.1173 & 0.0217 & 0.1025 & 0.1054 & 0.1090 & 0.1011 & 0.0181 & 0.0139 \\
path-norm & 0.1027 & 0.1230 & 0.1308 & 0.0315 & 0.1056 & 0.1028 & 0.1160 & 0.1030 & 0.0261 & 0.0240 \\
prod-of-spec-over-margin & 0.2466 & 0.3139 & 0.2179 & 0.1145 & 0.2473 & 0.2540 & 0.2497 & 0.2481 & 0.0951 & 0.0483 \\
prod-of-spec & 0.2334 & 0.3198 & 0.2070 & 0.1037 & 0.2376 & 0.2470 & 0.2394 & 0.2385 & 0.0862 & 0.0415 \\
random & 0.0005 & 0.0002 & 0.0005 & 0.0002 & 0.0003 & 0.0006 & 0.0009 & 0.0003 & 0.0001 & 0.0004 \\
sharpness-mag-init & 0.0366 & 0.0460 & 0.0391 & 0.0191 & 0.0374 & 0.0373 & 0.0761 & 0.0368 & 0.0159 & 0.0134 \\
sharpness-mag-orig & 0.0125 & 0.0143 & 0.0195 & 0.0043 & 0.0120 & 0.0134 & 0.0142 & 0.0111 & 0.0036 & 0.0033 \\
sharpness-orig & 0.1117 & 0.2353 & 0.0809 & 0.0658 & 0.1223 & 0.1071 & 0.1254 & 0.1189 & 0.0547 & 0.0224 \\
sharpness & 0.0545 & 0.1596 & 0.0497 & 0.0156 & 0.0586 & 0.0599 & 0.0700 & 0.0583 & 0.0130 & 0.0123 \\
spec-init & 0.2536 & 0.3161 & 0.2295 & 0.1179 & 0.2532 & 0.2584 & 0.2540 & 0.2539 & 0.0980 & 0.0559 \\
spec-orig-main & 0.2266 & 0.2903 & 0.2072 & 0.0890 & 0.2255 & 0.2355 & 0.2262 & 0.2262 & 0.0739 & 0.0382 \\
spec-orig & 0.2197 & 0.2815 & 0.2045 & 0.0808 & 0.2180 & 0.2285 & 0.2181 & 0.2188 & 0.0671 & 0.0359 \\
step-0.1-to-0.01 & 0.0125 & 0.0031 & 0.0055 & 0.0093 & 0.0074 & 0.0043 & 0.0070 & 0.0055 & 0.0026 & 0.0032 \\
step-to-0.1 & 0.0349 & 0.0361 & 0.0397 & 0.1046 & 0.0485 & 0.0380 & 0.0568 & 0.0502 & 0.0303 & 0.0134 \\
sum-of-fro-over-margin & 0.1200 & 0.2269 & 0.1005 & 0.0440 & 0.1207 & 0.1060 & 0.1645 & 0.1227 & 0.0366 & 0.0110 \\
sum-of-fro-over-sum-of-spec & 0.0258 & 0.0392 & 0.0055 & 0.1111 & 0.0312 & 0.0194 & 0.0355 & 0.0297 & 0.0051 & 0.0027 \\
sum-of-fro & 0.1292 & 0.2286 & 0.1115 & 0.0441 & 0.1281 & 0.1134 & 0.1714 & 0.1300 & 0.0366 & 0.0119 \\
sum-of-spec-over-margin & 0.0089 & 0.0292 & 0.0406 & 0.0951 & 0.0089 & 0.0069 & 0.0054 & 0.0051 & 0.0054 & 0.0072 \\
sum-of-spec & 0.0127 & 0.0324 & 0.0466 & 0.0876 & 0.0117 & 0.0096 & 0.0080 & 0.0076 & 0.0079 & 0.0099 \\
vc-dim & 0.0422 & 0.0564 & 0.0518 & 0.0039 & 0.0422 & 0.0443 & 0.0627 & 0.0412 & 0.0033 & 0.0000 \\
conditional entropy & 0.9836 & 0.8397 & 0.9331 & 0.8308 & 0.9960 & 0.9746 & 0.9977 & \multicolumn{1}{c|}{N/A} & \multicolumn{1}{c|}{N/A} & \multicolumn{1}{c|}{N/A} \\
\hline
\end{tabular}
}
\caption{Complexity measures (rows), hyperparameters (columns) and the \textbf{mutual information} with models  trained on {\bf CIFAR-10}.}
\label{table:all causal}
\end{table}

\begin{figure*}[htp]
\centering
\includegraphics[height=0.6\textwidth]{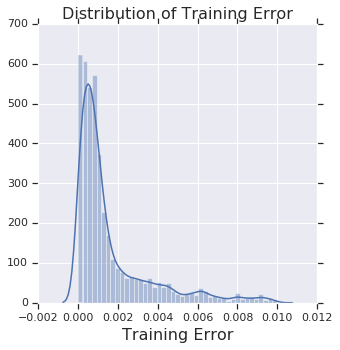}
\caption{
Distribution of training error on the trained models.}
\label{fig:training-error}
\end{figure*}

\newpage

\begin{table}[htp]
{\tiny
\setlength\tabcolsep{4.2pt}
\begin{tabular}{|l|L|L|L|L|L|L|L|L|L|}
\hline
& \multicolumn{1}{c|}{batchsize} & \multicolumn{1}{c|}{dropout} & \multicolumn{1}{c|}{\shortstack{learning\\rate}} & \multicolumn{1}{c|}{depth \phantom{a}} & \multicolumn{1}{c|}{optimizer} & \multicolumn{1}{c|}{\shortstack{weight\\decay}} & \multicolumn{1}{c|}{width \phantom{a}} & \multicolumn{1}{c|}{overall $\tau$} & \multicolumn{1}{c|}{\phantom{aa}$\Psi$\phantom{aaa}} \\
\hline
vc dim & 0.0000 & 0.0000 & 0.0000 & -1.0000 & 0.0000 & 0.0000 & -0.0478 & -0.3074 & -0.1497 \\
\# params & 0.0000 & 0.0000 & 0.0000 & -1.0000 & 0.0000 & 0.0000 & -0.0478 & -0.1934 & -0.1497 \\
sharpness & 0.1898 & -0.4092 & 0.4569 & 0.9752 & 0.1708 & 0.2444 & 0.1202 & 0.5438 & 0.2497 \\
pacbayes & 0.0606 & -0.5806 & 0.0503 & 0.9447 & 0.0831 & -0.2123 & 0.0034 & 0.3688 & 0.0499 \\
sharpness 0ref & 0.2324 & -0.1807 & 0.6329 & 0.9595 & 0.2196 & 0.5018 & 0.1923 & 0.5175 & 0.3654 \\
pacbayes 0ref & 0.1983 & -0.2055 & 0.5979 & 0.8863 & 0.2286 & 0.4583 & 0.0655 & 0.3708 & 0.3185 \\
displacement & -0.1071 & -0.8603 & -0.6270 & 0.8874 & -0.1677 & -0.6319 & -0.0302 & 0.1765 & -0.2196 \\
spectral complexity & -0.2854 & -0.7928 & -0.6423 & -0.9989 & -0.1063 & -0.2913 & -0.0799 & -0.6284 & -0.4567 \\
spectral complexity 0ref & -0.1362 & -0.6110 & -0.4688 & -0.9932 & -0.0513 & 0.0671 & -0.1096 & -0.6163 & -0.3290 \\
spectral complexity 0ref last2 & -0.1362 & -0.6110 & -0.4688 & -0.9628 & -0.0513 & 0.0671 & -0.2797 & -0.5870 & -0.3490 \\
spectral complexity 0ref last1 & 0.6285 & 0.3961 & 0.6646 & -0.8274 & 0.2317 & 0.6047 & 0.0525 & -0.1264 & 0.2501 \\
spectral product & -0.2603 & -0.5835 & -0.6095 & -0.9628 & -0.1063 & -0.0343 & -0.2705 & -0.5615 & -0.4039 \\
spectral product om & -0.2582 & -0.6419 & -0.5852 & -0.9289 & -0.0918 & -0.0681 & -0.2477 & -0.5404 & -0.4031 \\
spectral product dd/2 & -0.2603 & -0.5835 & -0.6095 & 0.9989 & -0.1063 & -0.0343 & -0.2705 & 0.4627 & -0.1237 \\
spectral produce dd/2 om & -0.2582 & -0.6419 & -0.5852 & 0.9921 & -0.0918 & -0.0681 & -0.2477 & 0.4421 & -0.1287 \\
spectral sum & -0.2734 & -0.7752 & -0.3386 & 0.9616 & -0.0669 & -0.2637 & -0.0434 & 0.3542 & -0.1142 \\
frob product & 0.5098 & -0.0369 & 0.5439 & -1.0000 & 0.1861 & 0.6508 & 0.0126 & -0.4983 & 0.1238 \\
frob product om & 0.4673 & -0.1262 & 0.5534 & -1.0000 & 0.2079 & 0.6375 & 0.0091 & -0.5001 & 0.1070 \\
frob product dd/2 & 0.5098 & -0.0369 & 0.5439 & 0.9853 & 0.1861 & 0.6508 & 0.0126 & 0.5928 & 0.4074 \\
frob product dd/2 om & 0.4673 & -0.1262 & 0.5534 & 0.9492 & 0.2079 & 0.6375 & 0.0091 & 0.5638 & 0.3855 \\
median margin & 0.0684 & 0.3861 & -0.1519 & -0.9314 & -0.1018 & 0.3211 & 0.0216 & -0.3829 & -0.0554 \\
input grad norm & 0.0597 & 0.6277 & -0.2289 & 0.9955 & 0.0026 & 0.0383 & 0.0216 & 0.6360 & 0.2166 \\
logit entropy & -0.0320 & -0.4506 & 0.1481 & 0.7999 & 0.1360 & -0.2460 & -0.0106 & 0.3001 & 0.0492 \\
path norm & 0.2150 & 0.2565 & 0.0464 & 0.9854 & 0.1018 & 0.3885 & 0.0614 & 0.5626 & 0.2936 \\
parameter norm & 0.3246 & -0.4794 & 0.1730 & 0.6639 & 0.0780 & 0.1383 & -0.0398 & 0.3747 & 0.1227 \\
fr norm cross-entropy & 0.2313 & 0.0500 & 0.0222 & -0.6189 & 0.1008 & 0.3190 & 0.0546 & -0.2844 & 0.0227 \\
fr norm logit sum & 0.2313 & 0.0500 & 0.0222 & -0.3277 & 0.1008 & 0.3190 & 0.0546 & -0.1168 & 0.0643 \\
fr norm logit margin & 0.2313 & 0.0500 & 0.0222 & -0.3277 & 0.1008 & 0.3190 & 0.0546 & -0.1168 & 0.0643 \\
path norm/margin & 0.1107 & 0.0291 & 0.1340 & 0.9978 & 0.1504 & 0.2098 & 0.0683 & 0.5798 & 0.2429 \\
one epoch loss & 0.4390 & -0.5989 & 0.2624 & 0.9729 & 0.1602 & -0.0445 & -0.0034 & 0.5186 & 0.1697 \\
final loss & 0.0923 & -0.4091 & -0.0042 & -0.0096 & 0.0811 & 0.1118 & -0.0432 & -0.0693 & -0.0258 \\
1/sigma gaussian & 0.1867 & -0.1862 & 0.6164 & 0.6665 & 0.2280 & 0.4985 & 0.1512 & 0.3148 & 0.3087 \\
1/sigma sharpness & 0.2321 & -0.1549 & 0.6330 & 0.9363 & 0.2253 & 0.5163 & 0.2179 & 0.4930 & 0.3723 \\
min(norm distance) & 0.3235 & -0.4785 & 0.1727 & 0.6633 & 0.0766 & 0.1391 & -0.0405 & 0.3744 & 0.1223 \\
step between & -0.1224 & -0.1610 & -0.0061 & 0.1556 & 0.0737 & -0.0415 & -0.0154 & -0.0720 & -0.0167 \\
step to & -0.6667 & -0.6982 & -0.4814 & 0.8738 & -0.1609 & -0.6314 & -0.1015 & 0.0035 & -0.2666 \\
step to 0.1 & -0.6656 & -0.9120 & -0.3613 & 0.9556 & -0.1450 & -0.5974 & -0.0414 & 0.0944 & -0.2524 \\
1/param sharpness & 0.4546 & 0.3254 & 0.6650 & 0.9831 & 0.2753 & 0.6495 & 0.2680 & 0.5676 & 0.5173 \\
1/param gaussian & 0.2525 & 0.1250 & 0.4758 & 0.9805 & 0.1629 & 0.2698 & 0.0871 & 0.5674 & 0.3362 \\
ratio cplx sharpness & -0.0787 & -0.7181 & -0.4883 & -1.0000 & -0.0640 & -0.4720 & -0.0502 & -0.2254 & -0.4102 \\
ratio cplx sharpness 0ref & 0.5005 & -0.3831 & 0.3153 & -1.0000 & 0.1648 & 0.2440 & -0.0502 & -0.1687 & -0.0298 \\
ratio cplx gaussian & 0.2289 & -0.3322 & 0.2298 & -0.9786 & 0.1625 & -0.0429 & -0.0484 & -0.1309 & -0.1116 \\
ratio cplx gaussian 0ref & 0.0984 & -0.6821 & 0.2351 & -0.9842 & 0.1304 & 0.0542 & -0.0484 & -0.1682 & -0.1709 \\
ratio cplx sharpness u1 & 0.2778 & -0.4237 & 0.5492 & -0.9707 & 0.1830 & 0.4040 & -0.0434 & -0.1580 & -0.0034 \\
ratio cplx sharpness 0ref u1 & 0.3606 & -0.2165 & 0.6476 & -0.9650 & 0.2421 & 0.5463 & -0.0422 & -0.1364 & 0.0818 \\
ratio cplx gaussian u1 & 0.2300 & -0.4279 & -0.0703 & 0.9707 & 0.1346 & -0.3957 & 0.0302 & 0.5052 & 0.0674 \\
ratio cplx gaussian 0ref u1 & 0.4519 & -0.2101 & 0.4876 & 0.9887 & 0.1812 & 0.2924 & 0.1464 & 0.6390 & 0.3340 \\
grad var & 0.2128 & -0.1862 & 0.2458 & 0.0343 & 0.1711 & 0.3211 & 0.1149 & 0.0594 & 0.1305 \\
grad var 1 epoch & 0.1590 & 0.1912 & -0.0159 & 0.0118 & -0.0534 & 0.2760 & -0.0046 & 0.1222 & 0.0806 \\
oracle 0.01 & 0.3811 & 0.6463 & 0.4293 & 0.9517 & 0.3478 & 0.3946 & 0.3572 & 0.8070 & 0.5012 \\
oracle 0.02 & 0.2410 & 0.4102 & 0.2964 & 0.8730 & 0.1886 & 0.2190 & 0.1741 & 0.6854 & 0.3432 \\
oracle 0.05 & 0.1238 & 0.2235 & 0.1530 & 0.6706 & 0.0522 & 0.1057 & 0.0785 & 0.5162 & 0.2010 \\
oracle 0.1 & -0.0239 & 0.0708 & 0.0844 & 0.4356 & 0.0408 & 0.0526 & 0.0512 & 0.3322 & 0.1017 \\
canonical ordering & -0.6732 & 0.9539 & 0.6424 & 1.0000 & -0.1028 & 0.6662 & 0.0478 & 0.0123 & 0.3620 \\
canonical ordering depth & -0.0304 & -0.0247 & 0.0105 & -1.0000 & 0.0253 & -0.0332 & 0.0262 & -0.6241 & \\
\hline
\end{tabular}
}
\caption{Complexity measures (rows), hyperparameters (columns) and the {\bf rank-correlation coefficients} with models trained on {\bf SVHN} dataset.}
\label{table:SVHN-everything-in-ther-universe}
\end{table}

\begin{table}[htp]
\begin{center}
{\tiny
\setlength\tabcolsep{4.2pt}
\begin{tabular}{|l|L|L|L|L|L|L|L|L|L|}
\hline
& \multicolumn{1}{c|}{batchsize} & \multicolumn{1}{c|}{dropout} & \multicolumn{1}{c|}{\shortstack{learning\\rate}} & \multicolumn{1}{c|}{depth\phantom{a}} & \multicolumn{1}{c|}{optimizer} & \multicolumn{1}{c|}{\shortstack{weight\\decay}} & \multicolumn{1}{c|}{width\phantom{a}} & \multicolumn{1}{c|}{overall $\tau$} & \multicolumn{1}{c|}{\phantom{aa}$\Psi$\phantom{aaa}} \\
\hline
vc dim & 0.0000 & 0.0000 & 0.0000 & -0.7520 & 0.0000 & 0.0000 & -0.0392 & -0.1770 & -0.1130 \\
\# params & 0.0000 & 0.0000 & 0.0000 & -0.7520 & 0.0000 & 0.0000 & -0.0392 & -0.1194 & -0.1130 \\
sharpness & 0.2059 & -0.1966 & 0.1336 & 0.6358 & -0.0532 & -0.0127 & -0.0317 & 0.2325 & 0.0973 \\
pacbayes & 0.1480 & -0.0488 & -0.0611 & 0.5493 & -0.0570 & -0.2340 & -0.0563 & 0.1477 & 0.0343 \\
sharpness 0ref & 0.2271 & 0.0167 & 0.4462 & 0.6262 & 0.0600 & 0.1563 & -0.0058 & 0.2995 & 0.2181 \\
pacbayes 0ref & 0.2587 & 0.1655 & 0.5282 & 0.5238 & 0.1102 & 0.1318 & -0.0174 & 0.3104 & 0.2430 \\
displacement & -0.1814 & -0.7677 & -0.6504 & 0.3767 & -0.2403 & -0.3831 & -0.0392 & -0.2652 & -0.2693 \\
spectral complexity & -0.1495 & -0.5752 & -0.6208 & -0.7407 & -0.2650 & -0.2885 & -0.0945 & -0.4333 & -0.3906 \\
spectral complexity 0ref & -0.0837 & -0.4196 & -0.4747 & -0.7379 & -0.1776 & -0.1468 & -0.1085 & -0.3860 & -0.3070 \\
spectral complexity 0ref last2 & -0.0837 & -0.4196 & -0.4747 & -0.7284 & -0.1776 & -0.1468 & -0.1857 & -0.3940 & -0.3166 \\
spectral complexity 0ref last1 & 0.2606 & 0.3893 & 0.7221 & -0.7435 & 0.4169 & 0.4404 & 0.0615 & 0.0477 & 0.2210 \\
spectral product & -0.2034 & -0.5619 & -0.6199 & -0.7520 & -0.2184 & -0.1269 & -0.0691 & -0.4176 & -0.3645 \\
spectral product om & -0.1257 & -0.4727 & -0.5549 & -0.7181 & -0.2260 & -0.2113 & -0.1707 & -0.4238 & -0.3542 \\
spectral product dd/2 & -0.2034 & -0.5619 & -0.6199 & 0.7520 & -0.2184 & -0.1269 & -0.0691 & 0.0547 & -0.1496 \\
spectral produce dd/2 om & -0.1257 & -0.4727 & -0.5549 & 0.7501 & -0.2260 & -0.2113 & -0.1707 & 0.0868 & -0.1445 \\
spectral sum & -0.2005 & -0.8378 & -0.5692 & 0.5832 & -0.3751 & -0.0899 & -0.0392 & -0.1517 & -0.2184 \\
frob product & 0.2854 & -0.1532 & 0.4967 & -0.7520 & 0.3609 & 0.4656 & 0.0054 & -0.2162 & 0.1013 \\
frob product om & 0.2816 & -0.1987 & 0.4613 & -0.7520 & 0.2365 & 0.3729 & 0.0130 & -0.2113 & 0.0592 \\
frob product dd/2 & 0.2854 & -0.1532 & 0.4967 & 0.7652 & 0.3609 & 0.4656 & 0.0054 & 0.3407 & 0.3180 \\
frob product dd/2 om & 0.2816 & -0.1987 & 0.4613 & 0.7643 & 0.2365 & 0.3729 & 0.0130 & 0.3356 & 0.2758 \\
median margin & -0.1652 & 0.3153 & -0.0850 & -0.5474 & 0.1263 & 0.1738 & 0.2142 & -0.1295 & 0.0046 \\
input grad norm & 0.0851 & 0.6548 & -0.2502 & 0.7379 & -0.1871 & -0.0009 & 0.0088 & 0.3563 & 0.1498 \\
logit entropy & 0.2200 & -0.3496 & 0.3906 & 0.5584 & 0.1614 & -0.2819 & -0.2095 & 0.1378 & 0.0699 \\
path norm & 0.2549 & 0.5258 & 0.2951 & 0.8161 & 0.2593 & 0.2223 & 0.0420 & 0.3892 & 0.3451 \\
parameter norm & 0.2472 & -0.0090 & 0.3754 & 0.1287 & 0.2716 & 0.1569 & -0.0458 & 0.0865 & 0.1607 \\
fr norm cross-entropy & 0.0727 & 0.3722 & 0.0162 & -0.5314 & -0.1595 & 0.0355 & 0.0231 & 0.0246 & -0.0245 \\
fr norm logit sum & 0.0727 & 0.3722 & 0.0162 & -0.0844 & -0.1595 & 0.0355 & 0.0231 & 0.1780 & 0.0394 \\
fr norm logit margin & 0.0727 & 0.3722 & 0.0162 & -0.0844 & -0.1595 & 0.0355 & 0.0231 & 0.1780 & 0.0394 \\
path norm/margin & 0.2510 & 0.0441 & 0.3314 & 0.7718 & 0.1206 & 0.0571 & -0.0558 & 0.3580 & 0.2172 \\
one epoch loss & 0.1843 & -0.4509 & 0.0544 & 0.0655 & 0.0684 & -0.0012 & -0.0425 & -0.1217 & -0.0174 \\
final loss & 0.1452 & -0.1095 & -0.0630 & 0.3484 & -0.2080 & -0.1140 & -0.2236 & 0.1410 & -0.0321 \\
1/sigma gaussian & 0.2525 & 0.1905 & 0.4993 & 0.3698 & 0.0822 & 0.1298 & 0.0660 & 0.3213 & 0.2272 \\
1/sigma sharpness & 0.2120 & 0.0008 & 0.3879 & 0.6097 & 0.0161 & 0.1191 & -0.0073 & 0.3005 & 0.1912 \\
min(norm distance) & 0.2472 & -0.0090 & 0.3754 & 0.1287 & 0.2716 & 0.1569 & -0.0458 & 0.0865 & 0.1607 \\
step between & -0.0053 & -0.0747 & -0.0792 & 0.1688 & 0.0318 & 0.0621 & -0.0168 & -0.0210 & 0.0124 \\
step to & -0.3219 & -0.5252 & -0.4186 & 0.3199 & -0.1076 & -0.4497 & -0.0095 & -0.2071 & -0.2161 \\
step to 0.1 & -0.3219 & -0.8336 & -0.2626 & 0.2859 & -0.0699 & -0.4231 & -0.0062 & -0.2350 & -0.2331 \\
1/param sharpness & 0.2127 & 0.2602 & 0.4458 & 0.6430 & 0.0354 & 0.1846 & 0.0071 & 0.3613 & 0.2555 \\
1/param gaussian & 0.1660 & 0.0065 & 0.4001 & 0.6820 & 0.0319 & -0.0879 & -0.1308 & 0.2878 & 0.1525 \\
ratio cplx sharpness & -0.1776 & -0.7743 & -0.6476 & -0.7520 & -0.2498 & -0.3803 & -0.0392 & -0.1602 & -0.4315 \\
ratio cplx sharpness 0ref & 0.3789 & -0.0109 & 0.5033 & -0.7520 & 0.3067 & 0.2688 & -0.0392 & -0.0867 & 0.0937 \\
ratio cplx gaussian & 0.1404 & -0.2537 & 0.1203 & -0.7501 & 0.0446 & -0.2183 & -0.0392 & -0.1123 & -0.1366 \\
ratio cplx gaussian 0ref & 0.1309 & -0.4026 & 0.2961 & -0.7520 & 0.0389 & -0.1434 & -0.0392 & -0.1075 & -0.1245 \\
ratio cplx sharpness u1 & 0.2091 & -0.1873 & 0.1958 & -0.7520 & -0.0114 & 0.1140 & -0.0392 & -0.0971 & -0.0673 \\
ratio cplx sharpness 0ref u1 & 0.2615 & 0.0669 & 0.5110 & -0.7520 & 0.1652 & 0.2527 & -0.0392 & -0.0774 & 0.0666 \\
ratio cplx gaussian u1 & 0.0658 & -0.2413 & -0.0411 & 0.6690 & 0.0047 & -0.3558 & -0.1296 & 0.1672 & -0.0040 \\
ratio cplx gaussian 0ref u1 & 0.2234 & -0.0346 & 0.4737 & 0.6954 & 0.0722 & -0.0239 & -0.0468 & 0.3329 & 0.1942 \\
grad var & 0.1013 & 0.3514 & 0.3706 & 0.2730 & 0.1035 & -0.0652 & 0.0250 & 0.3538 & 0.1656 \\
grad var 1 epoch & 0.0801 & 0.4045 & 0.3792 & -0.3701 & 0.1349 & 0.1328 & 0.0814 & 0.1279 & 0.1204 \\
oracle 0.01 & 0.5789 & 0.8862 & 0.7507 & 0.8274 & 0.5878 & 0.5464 & 0.5123 & 0.8470 & 0.6700 \\
oracle 0.02 & 0.3588 & 0.7288 & 0.5922 & 0.5804 & 0.3970 & 0.3440 & 0.3927 & 0.7032 & 0.4848 \\
oracle 0.05 & 0.1114 & 0.4149 & 0.3066 & 0.2937 & 0.1918 & 0.1473 & 0.1697 & 0.4267 & 0.2336 \\
oracle 0.1 & 0.1037 & 0.2281 & 0.1738 & 0.1957 & 0.1225 & 0.0692 & 0.0876 & 0.2423 & 0.1401 \\
canonical ordering & -0.3254 & 0.9459 & 0.7125 & 0.7520 & -0.0598 & 0.4628 & 0.0392 & -0.0151 & 0.3610 \\
canonical ordering depth & -0.0238 & -0.0337 & 0.0105 & -0.7520 & -0.0152 & 0.0353 & -0.0054 & -0.2835 & -0.1120 \\
\hline
\end{tabular}
}
\caption{Complexity measures (rows), hyperparameters (columns) and the {\bf rank-correlation coefficients} with models trained on {\bf CIFAR-10} when converged to {\bf Loss = 0.1}.}
\label{table:CIFAR10_Big_Loss-everything-in-ther-universe}
\end{center}
\end{table}

\begin{table}[htp]
\begin{center}
{\tiny
\setlength\tabcolsep{4.2pt}
\begin{tabular}{|l|L|L|L|L|L|L|L|L|L|}
\hline
& \multicolumn{1}{c|}{batchsize} & \multicolumn{1}{c|}{dropout} & \multicolumn{1}{c|}{\shortstack{learning\\rate}} & \multicolumn{1}{c|}{depth\phantom{a}} & \multicolumn{1}{c|}{optimizer} & \multicolumn{1}{c|}{\shortstack{weight\\decay}} & \multicolumn{1}{c|}{width\phantom{a}} & \multicolumn{1}{c|}{overall $\tau$} & \multicolumn{1}{c|}{\phantom{aa}$\Phi$\phantom{aaa}} \\
\hline
vc dim & 0 & 0 & 0 & -0.9073 & 0 & 0 & -0.1487 & -0.2509 & -0.1509 \\
\# params & 0 & 0 & 0 & -0.9073 & 0 & 0 & -0.1487 & -0.1751 & -0.1509 \\
sharpness & 0.5492 & -0.5155 & 0.4636 & 0.8247 & 0.2134 & 0.2025 & 0.0083 & 0.2848 & 0.2495 \\
pacbayes & 0.3896 & -0.4459 & 0.0427 & 0.6289 & 0.1721 & -0.1757 & -0.1266 & 0.0647 & 0.0693 \\
sharpness-orig & 0.5493 & -0.3492 & 0.7147 & 0.8101 & 0.3006 & 0.5655 & 0.1976 & 0.3996 & 0.3984 \\
pacbayes-orig & 0.5399 & -0.0847 & 0.7237 & 0.5377 & 0.3561 & 0.5597 & -0.0693 & 0.2895 & 0.3662 \\
frob-distance & -0.3048 & -0.8366 & -0.7253 & 0.5301 & -0.2437 & -0.6701 & -0.1499 & -0.2606 & -0.3429 \\
spec-init & -0.3414 & -0.8436 & -0.7326 & -0.9068 & -0.2422 & -0.3134 & -0.2133 & -0.5743 & -0.5133 \\
spec-orig & -0.2633 & -0.7593 & -0.678 & -0.9068 & -0.1611 & -0.0683 & -0.2273 & -0.5354 & -0.4377 \\
spec-orig-main & -0.2633 & -0.7593 & -0.678 & -0.9064 & -0.1611 & -0.0683 & -0.2662 & -0.5451 & -0.4432 \\
fro / spec & 0.5884 & 0.3703 & 0.7501 & -0.9014 & 0.3661 & 0.658 & -0.0219 & -0.0086 & 0.2585 \\
prod-of-spec & -0.4718 & -0.7237 & -0.7302 & -0.9072 & -0.2385 & -0.1409 & -0.2126 & -0.5598 & -0.4893 \\
prod-of-spec/margin & -0.3222 & -0.7803 & -0.716 & -0.9066 & -0.2066 & -0.1614 & -0.1727 & -0.5698 & -0.4665 \\
sum-of-spec & -0.4718 & -0.7237 & -0.7302 & 0.9072 & -0.2385 & -0.1409 & -0.2126 & 0.1023 & -0.2301 \\
sum-of-spec/margin & -0.3222 & -0.7803 & -0.716 & 0.9066 & -0.2066 & -0.1614 & -0.1727 & 0.0662 & -0.2075 \\
spec-dist & -0.4506 & -0.8263 & -0.5791 & 0.7297 & -0.3413 & -0.2027 & -0.1485 & -0.1044 & -0.2598 \\
prod-of-fro & 0.4659 & -0.1885 & 0.5283 & -0.9072 & 0.3342 & 0.7255 & -0.0835 & -0.2972 & 0.1250 \\
prod-of-fro/margin & 0.5377 & -0.372 & 0.5888 & -0.9072 & 0.4024 & 0.7329 & -0.0673 & -0.2957 & 0.1308 \\
sum-of-fro & 0.4659 & -0.1885 & 0.5283 & 0.9099 & 0.3342 & 0.7255 & -0.0835 & 0.4157 & 0.3845 \\
sum-of-fro/margin & 0.5377 & -0.372 & 0.5888 & 0.8832 & 0.4024 & 0.7329 & -0.0673 & 0.3894 & 0.3865 \\
1/margin & -0.3334 & 0.5914 & -0.2543 & -0.7539 & -0.2257 & 0.2097 & -0.0988 & -0.1257 & -0.1236 \\
input grad norm & 0.5235 & 0.263 & 0.0544 & 0.6239 & 0.0888 & 0.5969 & 0.2054 & 0.3836 & 0.3366 \\
neg-entropy & 0.3686 & -0.5443 & 0.2609 & 0.6326 & 0.2296 & -0.1567 & 0.0973 & 0.1472 & 0.1269 \\
path-norm & 0.2457 & 0.262 & 0.0397 & 0.9296 & 0.1271 & 0.3291 & 0.1558 & 0.3718 & 0.2984 \\
param-norm & 0.2414 & -0.5194 & 0.1611 & 0.3346 & 0.1866 & 0.1198 & -0.1509 & 0.0729 & 0.0533 \\
fisher-rao & 0.4327 & 0.1625 & 0.2494 & -0.5317 & 0.1322 & 0.5559 & 0.1484 & 0.1028 & 0.1642 \\
fr norm logit sum & 0.4327 & 0.1625 & 0.2494 & -0.094 & 0.1322 & 0.5559 & 0.1484 & 0.2238 & 0.2267 \\
fr norm logit margin & 0.4327 & 0.1625 & 0.2494 & -0.094 & 0.1322 & 0.5559 & 0.1484 & 0.2238 & 0.2267 \\
path norm/margin & 0.3692 & -0.2022 & 0.2159 & 0.9189 & 0.2523 & 0.2103 & 0.1582 & 0.3724 & 0.2747 \\
one epoch loss & 0.3939 & -0.4362 & 0.0477 & 0.1573 & 0.1149 & -0.0475 & 0.0128 & -0.0147 & 0.0347 \\
cross-entropy & 0.4443 & -0.4015 & 0.1518 & 0.3821 & 0.1367 & 0.2322 & 0.0676 & 0.1515 & 0.1447 \\
1/sigma pacbayes & 0.5109 & -0.0349 & 0.7551 & 0.2032 & 0.3738 & 0.6048 & 0.0686 & 0.2993 & 0.3545 \\
1/sigma sharpness & 0.536 & -0.3169 & 0.7154 & 0.7529 & 0.3021 & 0.5726 & 0.2615 & 0.3976 & 0.4034 \\
min(norm distance) & 0.2414 & -0.5194 & 0.1611 & 0.3346 & 0.1866 & 0.1198 & -0.1509 & 0.0729 & 0.0533 \\
num-step-0.1-to-0.01-loss & -0.1458 & -0.0816 & -0.0166 & 0.1318 & 0.0949 & -0.0348 & -0.0387 & -0.086 & -0.0130 \\
step to & -0.6798 & -0.5418 & -0.4441 & 0.3493 & -0.0578 & -0.6909 & 0.0102 & -0.2812 & -0.2936 \\
num-step-to-0.1-loss & -0.68 & -0.8526 & -0.2662 & 0.4545 & -0.0291 & -0.6484 & 0.0291 & -0.2626 & -0.2847 \\
1/alpha sharpness mag & 0.5802 & 0.1381 & 0.7537 & 0.8181 & 0.3163 & 0.7371 & 0.2416 & 0.481 & 0.5122 \\
1/alpha pacbayes mag & 0.5089 & -0.2388 & 0.5203 & 0.8959 & 0.1907 & 0.1628 & 0.1738 & 0.3649 & 0.3162 \\
pac-sharpness-mag-init & -0.2967 & -0.8451 & -0.7165 & -0.9072 & -0.2637 & -0.6387 & -0.1488 & -0.2256 & -0.5452 \\
pac-sharpness-mag-orig & 0.4145 & -0.5227 & 0.3102 & -0.9072 & 0.1916 & 0.2586 & -0.1488 & -0.159 & -0.0577 \\
pacbayes-mag-init & 0.4783 & -0.6438 & 0.2402 & -0.9072 & 0.1446 & -0.1006 & -0.1488 & -0.1669 & -0.1339 \\
pacbayes-mag-orig & 0.4694 & -0.7749 & 0.317 & -0.9072 & 0.1343 & 0.0315 & -0.1488 & -0.1682 & -0.1255 \\
ratio cplx sharpness u1 & 0.5034 & -0.5539 & 0.6314 & -0.9064 & 0.2799 & 0.4205 & -0.1487 & -0.1424 & 0.0323 \\
ratio cplx sharpness 0ref u1 & 0.5602 & -0.3762 & 0.7642 & -0.9062 & 0.3653 & 0.6861 & -0.1487 & -0.1237 & 0.1350 \\
ratio cplx gaussian u1 & 0.4365 & -0.6655 & -0.0286 & 0.8761 & 0.1058 & -0.403 & 0.0465 & 0.1778 & 0.0525 \\
ratio cplx gaussian 0ref u1 & 0.5721 & -0.4788 & 0.5105 & 0.9018 & 0.1896 & 0.1495 & 0.168 & 0.4093 & 0.2875 \\
grad-noise-final & 0.3663 & 0.0039 & 0.3066 & 0.0813 & 0.1773 & 0.4492 & 0.1615 & 0.2521 & 0.2209 \\
grad-noise-epoch-1 & -0.0376 & 0.3618 & 0.2691 & -0.5688 & -0.0342 & 0.2535 & -0.0616 & -0.0252 & 0.0260 \\
oracle 0.01 & 0.588 & 0.8718 & 0.7047 & 0.9094 & 0.5191 & 0.6117 & 0.5107 & 0.852 & 0.6736 \\
oracle 0.02 & 0.3904 & 0.6862 & 0.5405 & 0.7226 & 0.35 & 0.3969 & 0.336 & 0.7197 & 0.4889 \\
oracle 0.05 & 0.1827 & 0.3694 & 0.3099 & 0.3893 & 0.1478 & 0.1676 & 0.1665 & 0.4518 & 0.2476 \\
oracle 0.1 & 0.106 & 0.2132 & 0.1694 & 0.2084 & 0.0922 & 0.0859 & 0.082 & 0.259 & 0.1367 \\
canonical ordering & -0.668 & 0.9753 & 0.7421 & 0.9073 & -0.0511 & 0.7268 & 0.1487 & -0.0039 & 0.3973 \\
canonical ordering depth & 0.0025 & -0.012 & -0.0019 & -0.9073 & 0.0041 & -0.0133 & -0.0002 & -0.3605 & -0.1326 \\
\hline
\end{tabular}
}
\caption{Complexity measures (rows), hyperparameters (columns) and the {\bf {\color{red}average} rank-correlation coefficients over 5 runs} with models trained on {\bf CIFAR-10}.  The numerical values are consistent of that of Table \ref{table:everything-in-ther-universe}.}
\label{table:AVR-everything-in-ther-universe}
\end{center}
\end{table}

\begin{table}[htp]
\begin{center}
{\tiny
\setlength\tabcolsep{4.2pt}
\begin{tabular}{|l|L|L|L|L|L|L|L|L|L|}
\hline
& \multicolumn{1}{c|}{batchsize} & \multicolumn{1}{c|}{dropout} & \multicolumn{1}{c|}{\shortstack{learning\\rate}} & \multicolumn{1}{c|}{depth\phantom{a}} & \multicolumn{1}{c|}{optimizer} & \multicolumn{1}{c|}{\shortstack{weight\\decay}} & \multicolumn{1}{c|}{width\phantom{a}} & \multicolumn{1}{c|}{overall $\tau$} & \multicolumn{1}{c|}{\phantom{aa}$\Psi$\phantom{aaa}} \\
\hline
vc dim & 0 & 0 & 0 & 0.0038 & 0 & 0 & 0.0179 & 0.0006 & 0.0026 \\
\# params & 0 & 0 & 0 & 0.0038 & 0 & 0 & 0.0179 & 0.0009 & 0.0026 \\
sharpness & 0.0124 & 0.0129 & 0.0153 & 0.0036 & 0.0196 & 0.0154 & 0.0181 & 0.0026 & 0.0056 \\
pacbayes & 0.0171 & 0.0159 & 0.0108 & 0.0086 & 0.0074 & 0.0078 & 0.0169 & 0.0008 & 0.0048 \\
sharpness-orig & 0.0082 & 0.0106 & 0.0062 & 0.0073 & 0.0192 & 0.0151 & 0.0164 & 0.0034 & 0.0048 \\
pacbayes-orig & 0.011 & 0.0062 & 0.0111 & 0.0083 & 0.0162 & 0.013 & 0.0173 & 0.0025 & 0.0047 \\
frob-distance & 0.0102 & 0.0049 & 0.0067 & 0.0058 & 0.017 & 0.0102 & 0.0176 & 0.0035 & 0.0043 \\
spec-init & 0.0061 & 0.0029 & 0.0072 & 0.004 & 0.0192 & 0.0191 & 0.0127 & 0.001 & 0.0045 \\
spec-orig & 0.0015 & 0.0096 & 0.0072 & 0.004 & 0.0166 & 0.0234 & 0.0136 & 0.0009 & 0.0049 \\
spec-orig-main & 0.0015 & 0.0096 & 0.0072 & 0.0037 & 0.0166 & 0.0234 & 0.0083 & 0.0004 & 0.0046 \\
fro / spec & 0.0164 & 0.0105 & 0.0034 & 0.0048 & 0.0205 & 0.0151 & 0.0203 & 0.0024 & 0.0055 \\
prod-of-spec & 0.0053 & 0.0109 & 0.0048 & 0.0037 & 0.0237 & 0.0249 & 0.0101 & 0.0008 & 0.0055 \\
prod-of-spec/margin & 0.0075 & 0.0078 & 0.0082 & 0.0039 & 0.0225 & 0.0232 & 0.0054 & 0.0006 & 0.0051 \\
sum-of-spec & 0.0053 & 0.0109 & 0.0048 & 0.0037 & 0.0237 & 0.0249 & 0.0101 & 0.0014 & 0.0055 \\
sum-of-spec/margin & 0.0075 & 0.0078 & 0.0082 & 0.0035 & 0.0225 & 0.0232 & 0.0054 & 0.0015 & 0.0051 \\
spec-dist & 0.012 & 0.0095 & 0.0081 & 0.0084 & 0.0221 & 0.0122 & 0.0177 & 0.0036 & 0.0052 \\
prod-of-fro & 0.016 & 0.0096 & 0.0117 & 0.0037 & 0.0191 & 0.0121 & 0.0174 & 0.0014 & 0.0052 \\
prod-of-fro/margin & 0.0112 & 0.0126 & 0.0083 & 0.0037 & 0.0224 & 0.0093 & 0.0141 & 0.0014 & 0.0049 \\
sum-of-fro & 0.016 & 0.0096 & 0.0117 & 0.0034 & 0.0191 & 0.0121 & 0.0174 & 0.0024 & 0.0052 \\
sum-of-fro/margin & 0.0112 & 0.0126 & 0.0083 & 0.0054 & 0.0224 & 0.0093 & 0.0141 & 0.002 & 0.0049 \\
1/margin & 0.0191 & 0.0059 & 0.0154 & 0.0068 & 0.0221 & 0.0079 & 0.0224 & 0.0026 & 0.0060 \\
input grad norm & 0.0147 & 0.0186 & 0.019 & 0.0018 & 0.0222 & 0.0161 & 0.011 & 0.0043 & 0.0061 \\
neg-entropy & 0.0163 & 0.0169 & 0.012 & 0.0093 & 0.022 & 0.0184 & 0.0204 & 0.0025 & 0.0064 \\
path-norm & 0.0103 & 0.006 & 0.0079 & 0.0034 & 0.0174 & 0.0115 & 0.0178 & 0.0014 & 0.0044 \\
param-norm & 0.0125 & 0.0061 & 0.0071 & 0.0077 & 0.0083 & 0.0051 & 0.0175 & 0.0016 & 0.0038 \\
fisher-rao & 0.0192 & 0.0153 & 0.0084 & 0.0083 & 0.0311 & 0.01 & 0.0158 & 0.0069 & 0.0065 \\
fr norm logit sum & 0.0192 & 0.0153 & 0.0084 & 0.0169 & 0.0311 & 0.01 & 0.0158 & 0.0075 & 0.0068 \\
fr norm logit margin & 0.0192 & 0.0153 & 0.0084 & 0.0169 & 0.0311 & 0.01 & 0.0158 & 0.0075 & 0.0068 \\
path norm/margin & 0.0095 & 0.0172 & 0.0054 & 0.0056 & 0.0157 & 0.0224 & 0.0192 & 0.0019 & 0.0056 \\
one epoch loss & 0.0169 & 0.0128 & 0.0146 & 0.0066 & 0.0223 & 0.0126 & 0.0173 & 0.005 & 0.0058 \\
cross-entropy & 0.0221 & 0.0128 & 0.0174 & 0.0138 & 0.0151 & 0.014 & 0.0183 & 0.0023 & 0.0062 \\
1/sigma pacbayes & 0.0095 & 0.0031 & 0.0081 & 0.0066 & 0.0173 & 0.0132 & 0.0162 & 0.0035 & 0.0044 \\
1/sigma sharpness & 0.0084 & 0.009 & 0.0077 & 0.0126 & 0.0185 & 0.0119 & 0.0121 & 0.0039 & 0.0045 \\
min(norm distance) & 0.0125 & 0.0061 & 0.0071 & 0.0077 & 0.0083 & 0.0051 & 0.0175 & 0.0016 & 0.0038 \\
num-step-0.1-to-0.01-loss & 0.0049 & 0.0094 & 0.0071 & 0.0182 & 0.0147 & 0.0081 & 0.0222 & 0.0023 & 0.0051 \\
step to & 0.0118 & 0.011 & 0.0162 & 0.0169 & 0.0135 & 0.0101 & 0.012 & 0.002 & 0.0050 \\
num-step-to-0.1-loss & 0.0119 & 0.0059 & 0.0101 & 0.0236 & 0.0191 & 0.0148 & 0.0152 & 0.002 & 0.0058 \\
1/alpha sharpness mag & 0.0108 & 0.0224 & 0.0048 & 0.0082 & 0.0262 & 0.0097 & 0.0201 & 0.0031 & 0.0062 \\
1/alpha pacbayes mag & 0.0198 & 0.0166 & 0.0084 & 0.0037 & 0.0228 & 0.015 & 0.0237 & 0.0044 & 0.0065 \\
pac-sharpness-mag-init & 0.0113 & 0.0039 & 0.0139 & 0.0037 & 0.0186 & 0.0155 & 0.0179 & 0.0011 & 0.0051 \\
pac-sharpness-mag-orig & 0.016 & 0.0061 & 0.0127 & 0.0037 & 0.0188 & 0.0139 & 0.0179 & 0.0008 & 0.0052 \\
pacbayes-mag-init & 0.022 & 0.0059 & 0.0171 & 0.0037 & 0.0173 & 0.0131 & 0.0179 & 0.001 & 0.0057 \\
pacbayes-mag-orig & 0.0221 & 0.0077 & 0.0083 & 0.0037 & 0.0213 & 0.0134 & 0.0179 & 0.0009 & 0.0057 \\
ratio cplx sharpness u1 & 0.0177 & 0.0134 & 0.0127 & 0.0036 & 0.0261 & 0.012 & 0.0183 & 0.0009 & 0.0061 \\
ratio cplx sharpness 0ref u1 & 0.0124 & 0.0079 & 0.0052 & 0.0039 & 0.0266 & 0.0056 & 0.0183 & 0.0006 & 0.0052 \\
ratio cplx gaussian u1 & 0.0205 & 0.0106 & 0.0075 & 0.0019 & 0.0156 & 0.01 & 0.0218 & 0.0031 & 0.0054 \\
ratio cplx gaussian 0ref u1 & 0.0239 & 0.0126 & 0.0035 & 0.0028 & 0.0173 & 0.0087 & 0.017 & 0.0041 & 0.0054 \\
grad-noise-final & 0.0447 & 0.0598 & 0.0628 & 0.0337 & 0.0394 & 0.0243 & 0.0363 & 0.0309 & 0.0170 \\
grad-noise-epoch-1 & 0.0547 & 0.0165 & 0.0542 & 0.0316 & 0.082 & 0.0173 & 0.0514 & 0.0478 & 0.0186 \\
oracle 0.01 & 0.0178 & 0.0078 & 0.0153 & 0.0108 & 0.0189 & 0.0086 & 0.026 & 0.0026 & 0.0061 \\
oracle 0.02 & 0.0133 & 0.0135 & 0.0081 & 0.0138 & 0.0272 & 0.0167 & 0.0058 & 0.0033 & 0.0058 \\
oracle 0.05 & 0.0091 & 0.0249 & 0.0133 & 0.0136 & 0.0171 & 0.015 & 0.0239 & 0.0076 & 0.0066 \\
oracle 0.1 & 0.0188 & 0.0333 & 0.0292 & 0.0341 & 0.0145 & 0.0185 & 0.0321 & 0.0107 & 0.0102 \\
canonical ordering & 0.0111 & 0.004 & 0.0073 & 0.0038 & 0.0185 & 0.0108 & 0.0179 & 0.0027 & 0.0045 \\
canonical ordering depth & 0.018 & 0.0226 & 0.0208 & 0.0038 & 0.0198 & 0.0273 & 0.0202 & 0.0046 & 0.0076 \\\hline
\end{tabular}
}
\caption{Complexity measures (rows), hyperparameters (columns) and the {\bf {\color{red}standard deviation} of each entry measured over 5 runs} with models trained on {\bf CIFAR-10}. The standard deviation for $\Psi$ is computed assuming that each hyperparamters are independent from each other. We see that all standard deviation are quite small, suggesting the results in of Table \ref{table:everything-in-ther-universe} are statistically significant.}
\label{table:STD_DEV-everything-in-ther-universe}
\end{center}
\end{table}

\newpage
\section{Extended Notation}
\label{sec:extended_notation}

Given any margin value $\gamma\geq 0$, we define the margin loss $L_{\gamma}$ as follows:
\begin{equation}
    L_{\gamma}(f_\rvw) \triangleq \E_{(\rmX,y)\sim \gD}\left[ I\big(f_\rvw(\rmX)[y] \leq \gamma + \max_{j\neq y} f_\rvw(\rmX)[j]\big)\right]
\end{equation}
and $\hat{L}_{\gamma}$ is defined in an analogous manner on the training set. Further, for any vector $\rvv$, we denote by $\norm{\rvv}_2$ the $\ell_2$ norm of $\rvv$. For any tensor $\rmW$, let $\norm{\rmW}_F \triangleq \norm{\vecc(\rmW)}$. We also denote $\norm{\rmW}_2$ as the spectral norm of the tensor $\rmW$ when used with a convolution operator. For convolutional operators, we compute the true singular value with the method proposed by \citet{sedghiconv} through FFT.

We denote a tensor as $\rmA$, vector as $\boldsymbol{a}$, and scalar as $A$ or $a$. For any $1\leq j \leq k$, consider a $k$-th order tensor $\rmA$ and a $j$-th order tensor $\rmB$ where dimensions of $\rmB$ match the last $j$ dimensions of $\rmA$. We then define the product operator $\otimes_j$:

 \begin{equation}
     (\rmA \otimes_j \rmB)_{i_1,\dots,i_{k-j}} \triangleq \inner{\rmA_{i_1,\dots,i_{k-j}}}{\rmB} \,,
 \end{equation}
 where $i_1,\dots,i_{k-j}$ are indices. We also assume that the input images have dimension $n\times n$ and there are $\nc$ classes. Given the number of input channels $c_{\IN}$, number of output channels $c_{\OUT}$, 2D square kernel with side length $k$, stride $s$, and padding $p$, we define the convolutional layer $\conv_{\rmW,s,p}$ as follows:

 \begin{equation}
 \conv_{\rmW,s,p}(\rmX)_{i_1,i_2} \triangleq \rmW \otimes_3 \patch_{s(i_1-1)+1,s(i_2-1)+1,k}\left(\pad_p(\rmX)\right) \qquad \forall 1 \leq i_1,i_2 \leq \lfloor \frac{n+2p-k}{s}\rfloor
 \end{equation}
 where $\rmW \in \R^{c_{\OUT} \times c_{\IN} \times k \times k}$ is the convolutional parameter tensor, $\patch_{i,j,k}(\rmZ)$ is a $k\times k$ patch of $\rmZ$ starting from the point $(i,j)$, and $\pad_p$ is the padding operator which adds $p$ zeros to top, bottom, left and right of $\rmX$:
\begin{equation}
 \pad_p(\rmX)_{i_1,i_2,j} = 
 \begin{cases}
     \rmX_{i_1,i_2}& p<i_1,i_2\leq n+p\\
     0 & \text{otherwise}
 \end{cases} \,.
\end{equation}

We also define the max-pooling operator $\pool_{k,s,p}$ as follows:

\begin{equation}
 \pool_{k,s,p}(\rmX)_{i_1,i_2,j} = \max(\patch_{s(i_1-1)+1,s(i_2-1)+1}\left(\pad_p(\rmX_{:,:,j})\right)) \qquad \forall 1 \leq i_1,i_2 \leq \lfloor \frac{n+2p-k}{s}\rfloor
\end{equation}

We denote by $f_{\rmW,s}$ a convolutional network such that $\rmW_i\in \R^{c_i \times c_{i-1} \times k_i \times k_i}$ is the convolution tensor and $s_i$ is the convolutional stride at layer $i$. At Layer $i$, we assume the sequence of convolution, ReLU and max-pooling where the max pooling has kernel $k_i'$ and stride $s_i'$. Lack of max-pooling in some layers can be achieved by setting $k'_i=s'_i=1$. We consider classification tasks and denote the number of classes by $\nc$.

\section{Complexity Measures}\label{sec:gen_bounds}
In this section, we look at different complexity measures. When a measure $\mu$ is based on a generalization bound, we chose it so that the following is true with probability $0.99$ (we choose the failure probability $\delta$ to be 0.01):
\begin{equation}
L \leq \hat{L} + \sqrt{\frac{\mu}{m}}
\end{equation}
We also consider  measures which do not provably bound the generalization error and evaluate those.

Note that in almost all cases, the canonical ordering given based on some ``common" assumptions are positively correlated with the generalization in terms of both $\tau$ and $\Psi$; however, for optimizer, the correlation $\tau$ is close to 0. This implies that the choice of optimizer is only essentially uncorrelated with the generalization gap in the range of models we consider. This ordering helps validate many techniques used by the practioners.

\subsection{VC-Dimension Based Measures} \label{vc-derivation}
We start by restating the theorem in \citep{bartlett19} which provides an upper bound on the VC-dimension of any piece-wise linear network.
\begin{theorem}[\cite{bartlett19}]\label{thm:vc}
Let $\mathcal{F}$ be the class of feed-forward networks with a fixed computation graph of depth $d$ and ReLU activations. Let $a_i$ and $q_i$ be the number of activations and parameters in layer $i$. Then VC-dimension of $\gF$ can be bounded as follows:
\begin{equation*}
\text{VC}(\gF) \leq d + \left(\sum_{i=1}^d (d-i+1)q_i \right)\log_2\left(8e\sum_{i=1}^dia_i\log_2\left(4e\sum_{j=1}^d ja_j\right)\right)
\end{equation*}
\end{theorem}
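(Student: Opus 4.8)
The plan is to follow the growth-function route that is standard for piecewise-linear networks, sharpened by the layer-by-layer partitioning of parameter space that is the signature of \cite{bartlett19}. First I would reduce the statement to bounding the growth function: fix $m$ inputs $\rmX_1,\dots,\rmX_m$ and, regarding the network output as a function of the weight vector $\rvw\in\R^{W}$ with $W=\sum_{i=1}^d q_i$, count the number of distinct sign patterns (dichotomies) the family realizes as $\rvw$ ranges over $\R^W$. If this count is strictly less than $2^m$, then $m$ points cannot be shattered, so any upper bound on the count transfers to an upper bound on $\text{VC}(\gF)$.

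The heart of the argument is a sequential refinement of $\R^W$. I would build a chain of partitions $\gP_0,\dots,\gP_d$, where $\gP_i$ splits each cell of $\gP_{i-1}$ according to the ReLU activation signs at layer $i$ across all $m$ inputs. The key bookkeeping fact is that inside a fixed cell of $\gP_{i-1}$ every pre-activation at layer $i$ is a \emph{polynomial} in $\rvw$ whose degree is at most $i$ (one factor per traversed layer) and which depends only on the $W_i=\sum_{j\le i}q_j$ parameters of layers $1$ through $i$. Applying a Warren/Milnor--Thom-type sign-pattern bound — the number of sign vectors of $M$ degree-$\Delta$ polynomials in $n$ variables is at most $(cM\Delta/n)^n$ — to the $m\,a_i$ relevant polynomials at layer $i$ bounds how many sub-cells each cell of $\gP_{i-1}$ spawns. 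Multiplying these per-layer factors bounds $|\gP_d|$, and within each final cell the labeling is a single sign pattern of a fixed affine function, contributing only a bounded additional factor.

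Carrying out the product over layers is where the characteristic weighting $(d-i+1)q_i$ emerges: the $q_i$ parameters of layer $i$ enter the polynomial systems of every subsequent layer $i,i+1,\dots,d$, so in the exponent they are counted with multiplicity $d-i+1$, while the accumulating degrees produce the $\sum_i i\,a_i$ terms inside the logarithms. After taking logarithms, the shattering condition becomes a self-referential inequality of the shape $m \le d + \bigl(\sum_i (d-i+1)q_i\bigr)\log_2\bigl(\text{const}\cdot(\textstyle\sum_i i a_i)\cdot m\bigr)$; a standard bootstrapping lemma — bound $m$ on the right by the coarse first-pass estimate, then resubstitute — collapses the linear-in-$m$ factor inside the log into the nested $\log_2(\cdots\log_2(\cdots))$ form and pins down the constants $8e$ and $4e$.

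The main obstacle I anticipate is the tight degree-and-variable accounting in the layered refinement: a naive treatment that views the whole network as a single polynomial system in all $W$ variables of degree $\sim d$ yields a substantially weaker bound, and recovering the per-layer weighting $(d-i+1)q_i$ requires arguing carefully that, once the earlier sign patterns are frozen, the layer-$i$ polynomials genuinely live in only $W_i$ variables with degree only $i$. A secondary difficulty is extracting the exact constants in the final double logarithm, which needs the sharp version of the inequality-solving lemma rather than a crude $\log$ estimate. For the purposes of this paper the result is imported from \cite{bartlett19} and subsequently extended to pooling layers and multi-class classification in Appendix~\ref{vc-derivation}.
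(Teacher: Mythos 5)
The paper offers no proof of Theorem~\ref{thm:vc} at all: it is imported verbatim from \cite{bartlett19}, and the paper's own contribution (Theorem~\ref{thm:conv-vc} in Appendix~\ref{vc-derivation}) only uses it as a black box, which your final paragraph correctly acknowledges. Your sketch faithfully reproduces the argument of the cited reference — the growth-function reduction, the layer-by-layer partition refinement with a Warren-type sign-pattern bound applied to degree-$i$ polynomials in $\sum_{j\le i} q_j$ variables (exactly the source of the $(d-i+1)q_i$ weighting and the $\sum_i i\,a_i$ terms), and the final bootstrapping step producing the nested logarithm — so there is nothing to correct relative to how this paper treats the statement.
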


\begin{theorem}\label{thm:conv-vc}
Given a convolutional network $f$, for any $\delta>0$, with probability $1-\delta$ over the the training set:
\begin{equation}
    L \leq \hat{L} + 4000\sqrt{\frac{d\log_2\left(6dn\right)^3\sum_{i=1}^{d} k_i^2c_ic_{i-1}}{m}} + \sqrt{\frac{\log(1/\delta)}{m}}
\end{equation}
\end{theorem}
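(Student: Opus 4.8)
The plan is to combine the VC-dimension bound of Theorem~\ref{thm:vc} with the classical VC uniform-convergence inequality, after recasting a convolutional network as a piecewise-linear feed-forward network with shared weights. First I would observe that every operation in $f_{\rmW,s}$ --- affine convolution, ReLU, and max-pooling --- is piecewise linear, so the network lies in the function class to which Theorem~\ref{thm:vc} applies. The only genuine departures from the plain feed-forward setting are (i) weight sharing, which Theorem~\ref{thm:vc} already accommodates since it counts the number of \emph{parameters} $q_i$ rather than activations, so a convolutional layer contributes only its $q_i = k_i^2 c_i c_{i-1}$ distinct kernel entries; (ii) max-pooling, which I would fold into the computation graph as additional parameter-free piecewise-linear units (a max of linear pieces), leaving $\sum_i q_i$ unchanged; and (iii) the multi-class $0/1$ loss, which I would reduce to a binary-valued class by taking the sign of $f_\rvw(\rmX)[y] - \max_{j \neq y} f_\rvw(\rmX)[j]$ over the $\nc$ class pairs, so that the relevant combinatorial dimension is controlled up to factors depending on $\nc$ that get absorbed into constants. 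This is precisely the extension promised in Appendix~\ref{vc-derivation}.

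Next I would substitute the convolution-specific counts into Theorem~\ref{thm:vc}. The number of activations in layer $i$ is at most $n^2 c_i$ (spatial extent at most $n \times n$, with $c_i$ channels), and the number of parameters is $q_i = k_i^2 c_i c_{i-1}$. For the leading factor I use $\sum_{i=1}^d (d-i+1) q_i \le d \sum_{i=1}^d k_i^2 c_i c_{i-1}$. For the logarithmic factor I bound $\sum_{i=1}^d i\,a_i \le d\,n^2 \sum_i c_i$ and then apply the crude inequality $c_i \le k_i^2 c_i c_{i-1}$ to fold the channel counts back into the same parameter sum; over-bounding the resulting nested logarithm leaves a factor that is at most a small fixed power of $\log_2(6dn)$. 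This yields $\text{VC}(\gF) \le C\, d\big(\sum_i k_i^2 c_i c_{i-1}\big)\log_2(6dn)^{2}$ for an absolute constant $C$, the exact exponent being a matter of how loosely one over-bounds.

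Finally I would convert this into the stated generalization bound by the standard route: Sauer's lemma bounds the growth function by $(2em/\text{VC})^{\text{VC}}$, and symmetrization with a union bound over the doubled sample give, with probability $1-\delta$, a uniform deviation of order $\sqrt{(\text{VC}\cdot \log m + \log(1/\delta))/m}$. Splitting via $\sqrt{a+b}\le \sqrt a + \sqrt b$ isolates the $\sqrt{\log(1/\delta)/m}$ term, and the remaining $\log m$ factor is absorbed into one further power of $\log_2(6dn)$, raising the logarithmic exponent to $3$ and merging all accumulated constants into the displayed $4000$; this produces the claimed inequality.

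The hard part will be the first step: making the reduction to Theorem~\ref{thm:vc} fully rigorous, in particular verifying that max-pooling inserts into the piecewise-linear computation graph without inflating the parameter count, and that the multi-class margin loss incurs only a bounded blow-up in the combinatorial dimension. The activation counting and the collapse of the nested logarithms into a clean $\log_2(6dn)^3$ are routine but delicate, since in principle the channel counts could leak into the logarithm; the $c_i \le k_i^2 c_i c_{i-1}$ trick together with crude polynomial bounding keeps everything inside a single $\log_2(6dn)$ at the cost of the large absolute constant.
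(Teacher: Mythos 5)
Your high-level plan (specialize Theorem~\ref{thm:vc} to the convolutional architecture, then convert the VC bound into a generalization bound) matches the paper's route, but two of your steps fail as described. First, max-pooling: Theorem~\ref{thm:vc} applies to computation graphs with ReLU activations, and a max over a $k_i'\times k_i'$ window is a \emph{multivariate} piecewise-linear map, not an activation applied to a single affine output. You therefore cannot simply insert pooling as a ``parameter-free piecewise-linear unit'' while leaving the graph's depth untouched. The paper instead simulates each max with ReLU layers via $\max(x_1,x_2)=x_1+\mathrm{ReLU}(x_2-x_1)$, which costs $\lceil 4\log_2(k_i')\rceil\le 9\log_2(n)$ extra layers per pooling stage and inflates the depth to $d'\le 9d\log_2(n)$. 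This inflation is not a technicality: it is where two of the three $\log_2(6dn)$ factors come from (the paper derives $\mathrm{VC}(\gF)\le 729\,d\log_2(n)^2\log_2(6dn)\sum_i k_i^2c_{i-1}(c_i+1)\le 729\,d\log_2(6dn)^3\sum_i k_i^2c_{i-1}(c_i+1)$). Your version keeps depth $d$ and so only reaches $\log_2(6dn)^2$, forcing you to manufacture the third power elsewhere.

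Second, that ``elsewhere'' is invalid. Sauer's lemma plus symmetrization gives a deviation of order $\sqrt{(\mathrm{VC}\cdot\log m+\log(1/\delta))/m}$, and you propose to absorb the $\log m$ into one more power of $\log_2(6dn)$. This cannot be done: $m$ is independent of $n$ and $d$, so for a fixed architecture and growing sample size $\log m$ eventually exceeds any fixed multiple of $\log_2(6dn)$, and the resulting bound is simply not of the stated form. The paper avoids any $\log m$ factor by using the chaining-based inequality that bounds the Rademacher complexity by $72\sqrt{\mathrm{VC}/m}$, so the generalization gap is at most $144\sqrt{\mathrm{VC}/m}$; combined with $\sqrt{729}=27$ this is exactly where the constant ($144\cdot 27=3888\le 4000$) comes from. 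A smaller but real issue of the same kind: the multi-class reduction's dependence on the number of classes cannot be ``absorbed into constants'' either --- the paper bounds the graph dimension by $\nc^2\,\mathrm{VC}(\gF)$ over class pairs, and this $\nc$ survives as an explicit factor ($4000\nc$) in the proof's final display, since $\nc$ is a problem parameter, not a universal constant.
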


\begin{proof}
We simplify the bound in Theorem~\ref{thm:vc} using a $d'$ to refer to the depth instead of $d$:
\begin{align*}
\text{VC}(\gF) &\leq d' + \left(\sum_{i=1}^{d'} (d-i+1)q_i \right)\log_2\left(8e\sum_{i=1}^{d'}ia_i\log_2\left(4e\sum_{j=1}^{d'} ja_j\right)\right)\\
&\leq d' + \left(\sum_{i=1}^{d'} (d'-i+1)q_i \right)\log_2\left(8e\sum_{i=1}^{d'}ia_i\right)^2\\
&\leq d' + 2\log_2\left(8e\sum_{i=1}^{d'}ia_i\right)\sum_{i=1}^{d'} (d'-i+1)q_i\\
&\leq 3d'\log_2\left(8e\sum_{i=1}^{d'}ia_i\right)\sum_{i=1}^{d'} q_i\\
\end{align*}
In order to extend the above bound to a convolutional network, we need to present a pooling layer with ReLU activations. First note that maximum of two inputs can be calculated using two layers with ReLU and linear activations as $\max(x_1,x_2)=x_1+ReLU(x_2-x_1)$. Now, since max-pooling at layer $i$ has kernel sizes $k'_i$, we need $\ceil{4\log_2(k'_i)}$ layers to present that but given that the kernel size of the max-pooling layer is at most size of the image, we have
$$
\ceil{4\log_2(k'_i)} \leq \ceil{4\log_2(n^2)} \leq \ceil{8\log_2(n)} \leq 9\log_2(n)
$$
Therefore, we have $d'\leq 9d\log_2(n)$. The number of activations in any of these layers is at most $n^2c_i$ since there are at most $n^2$ pairs of neighbor pixels in an $n\times n$ image with $c_i$ channels. We ignore strides when calculating the upper bound since it only reduces number of activations at a few layers and does not change the bound significantly. Using these bounds on$d'$, $a_i$ and $q_i$ the equivalent network, we can bound the VC dimension as follows:
\begin{align*}
\text{VC}(\gF) &\leq 27d\log_2(n)\log_2\left(8e(9d\log_2(n))^2n^2\right)(9\log_2(n))\sum_{i=1}^{d} k_i^2c_{i-1}(c_i+1)\\
&\leq 729d\log_2(n)^2\log_2\left(6dn\right)\sum_{i=1}^{d} k_i^2c_{i-1}(c_i+1)\\
&\leq 729d\log_2\left(6dn\right)^3\sum_{i=1}^{d} k_i^2c_{i-1}(c_i+1)\\
\end{align*}
For binary classifiers, generalization error can be in terms of Rademacher complexity \citep{mohri2012foundations} which in turn can be bounded by $72\sqrt{\text{VC}/m}$~\citep{Cunha13}. Therefore, we can get the following\footnote{The generalization gap is bounded by two times Rademacher Complexity, hence the constant 144.} generalization bound:
\begin{equation}
    L \leq \hat{L} + 144\sqrt{\frac{VC(\gF)}{m}} + \sqrt{\frac{\log(1/\delta)}{m}}
\end{equation}
For multi-class classification, the generalization error can be similarly bounded by Graph dimension which is an extension of VC-dimension. A simple approach get a bound on Graph dimension is to consider all pairs of classes as binary classification problem which bounds the graph dimension by $\nc^2 \,\, VC(\gF)$. There, putting everything together, we get the following generalization bound:
\begin{equation}
    L \leq \hat{L} + 4000 \nc \sqrt{\frac{d\log_2\left(6dn\right)^3\sum_{i=1}^{d} k_i^2c_{i-1}(c_i+1)}{m}} + \sqrt{\frac{\log(1/\delta)}{m}}
\end{equation}
\end{proof}
Inspired by Theorem~\ref{thm:conv-vc}, we define the following $VC$-based measure for generalization:
\begin{equation}
    \label{eq:vc}
    \mu_{VC}(f_\rvw) = \left(4000 \nc \sqrt{d\log_2\left(6dn\right)^3\sum_{i=1}^{d} k_i^2c_{i-1}(c_i+1)} + \sqrt{\log(1/\delta)}\right)^2
\end{equation}
Since some of the dependencies in the above measure are probably proof artifacts, we also define another measure that is nothing but the number of parameters of the model:
\begin{equation}
    \label{eq:num_p}
    \mu_{\text{param}} = \sum_{i=1}^{d} k_i^2c_{i-1}(c_i+1)
\end{equation}

\subsubsection{Measures on the output of the network}\label{sec:output-measures}
While measures that can be calculated only based on the output of the network cannot reveal complexity of the network, they can still be very informative for predicting generalization. Therefore, we define a few measures that can be calculated solely based on the output of the network. 

We start by looking at the cross-entropy over the output. Even though we used a cross-entropy based stopping criterion, the cross-entropy of the final models is not exactly the same as the stopping criterion and it could be informative. Hence we define the following measure:
\begin{equation}
\label{eq:xent}
    \mu_{\text{cross-entropy}} = \frac{1}{m}\sum_{i=1}^m \ell(f_\rvw(\rmX_i),y_i)
\end{equation}
where $\ell$ is the cross-entropy loss.

Another useful and intuitive notion that appears in generalization bounds is margin. In all measures that involve margin $\gamma$, we set the margin $\gamma$ to be the $10$-th percentile of the margin values on the training set and therefore ensuring $\hat{L}_\gamma\leq 0.1$. Even though margin alone is not a sensible generalization measure and can be artificially increased by scaling up the magnitude of the weights, it could still reveal information about training dynamics and therefore be informative.  We report the following measure based on the margin:
\begin{equation}\label{eq:1-over-margin}
    \mu_{\text{1/margin}}(f_\rvw) = \frac{1}{\gamma^2}
\end{equation}

Finally, entropy of the output is another interesting measure and it has been shown that regularizing it can improve generalization in deep learning \citep{pereyra2017regularizing}. With a fixed cross-entropy, increasing the entropy corresponds to distribute the uncertainty of the predictions equally among the wrong labels which is connected to label smoothing and increasing the margin. We define the following measure which is the negative entropy of the output of the network:
\begin{equation}
\label{eq:entropy}
    \mu_{\text{neg-entropy}}(f_\rvw) = \frac{1}{m}\sum_{i=1}^m\sum_{j=1}^\nc p_i[j]\log(p_i[j])
\end{equation}
where $p_i[j]$ is the predicted probability of the class $j$ for the input data $\rmX_i$.

\subsection{(Norm \& Margin)-Based Measures}
\label{app:norm-margin}
Several generalization bounds have been proved for neural networks using margin and norm notions. In this section, we go over several such measures. For fully connected networks, \cite{bartlett2002rademacher} have shown a bound based on product of $\ell_{1,\infty}$ norm of the layer weights times a $2^d$ factor where $\ell_{1,\infty}$ is the maximum over hidden units of the $\ell_2$ norm of the incoming weights to the hidden unit. \cite{neyshabur2015norm} proved a bound based on product of Frobenius norms of the layer weights times a $2^d$ factor and \cite{golowich2017size} was able to improve the factor to $\sqrt{d}$. \cite{bartlett2017spectrally} proved a bound based on product of spectral norm of the layer weights times sum over layers of ratio of Frobenius norm to spectral norm of the layer weights and \cite{neyshabur2017pac} showed a similar bound can be achieved in a simpler way using PAC-bayesian framework.

\paragraph{Spectral Norm}Unfortunately, none of the above founds are directly applicable to convolutional networks. \cite{pitas2017pac} built on \cite{neyshabur2017pac} and extended the bound on the spectral norm to convolutional networks. The bound is very similar to the one for fully connected networks by \cite{bartlett2017spectrally}. We next restate their generalization bound for convolutional networks including the constants.

\begin{theorem}[\cite{pitas2017pac}]\label{thm:spectral}
Let $B$ an upper bound on the $\ell_2$ norm of any point in the input domain. For any $B,\gamma, \delta>0$, the following bound holds with probability $1-\delta$ over the training set:
\begin{equation}
    L \leq \hat{L}_\gamma + \sqrt{\frac{\left(84B\sum_{i=1}^d k_i\sqrt{c_i}+\sqrt{\ln(4n^2d)}\right)^2 \prod_{i=1}^d \norm{\rmW_i}_2^2 \sum_{j=1}^d \frac{\norm{\rmW_j-\rmW_j^0}_F^2}{\norm{\rmW_j}_2^2}+ \ln(\frac{m}{\delta})}{\gamma^2 m} }
\end{equation}
\end{theorem}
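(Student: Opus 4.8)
The plan is to follow the PAC-Bayesian route of \cite{neyshabur2017pac}, adapted to convolutional operators as in \cite{pitas2017pac}. The backbone is the standard PAC-Bayes margin lemma: for any prior $P$ over parameters chosen independently of the sample, and any (possibly data-dependent) random perturbation $\rvu$ added to the learned weights $\rvw$ such that $\Pr_{\rvu}\big[\max_{\rmX}\norm{f_{\rvw+\rvu}(\rmX)-f_\rvw(\rmX)}_2 < \gamma/4\big] \ge 1/2$, with probability $1-\delta$ one has
\[
L(f_\rvw) \le \hat{L}_\gamma(f_\rvw) + 4\sqrt{\frac{\KL(\rvw+\rvu\,\|\,P)+\ln\frac{6m}{\delta}}{m-1}}.
\]
I would take both $P$ and the posterior to be isotropic Gaussians of variance $\sigma^2$ centered at the reference $\rvw^0=\vecc(\rmW_1^0,\dots,\rmW_d^0)$ and at $\rvw$ respectively, so that $\KL = \sum_{i=1}^d \norm{\rmW_i-\rmW_i^0}_F^2/(2\sigma^2)$. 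The whole argument then reduces to choosing $\sigma$ as large as possible subject to the output perturbation staying below $\gamma/4$, since the $\ell_2$ control above dominates the per-logit change and hence the margin.

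First I would establish the perturbation bound. Writing the network as a composition of convolution--ReLU(--pool) blocks, a layer-by-layer Lipschitz argument shows that replacing each $\rmW_i$ by $\rmW_i+\rmU_i$ changes the output by at most
\[
\norm{f_{\rvw+\rvu}(\rmX)-f_\rvw(\rmX)}_2 \;\le\; eB\Big(\prod_{i=1}^d\norm{\rmW_i}_2\Big)\sum_{i=1}^d \frac{\norm{\rmU_i}_2}{\norm{\rmW_i}_2},
\]
using that ReLU and max-pooling are $1$-Lipschitz and that the operator norm is submultiplicative along the composition; the constant $e$ comes from the usual $\prod_i(1+x_i)\le e^{\sum_i x_i}$ bookkeeping once each layerwise relative perturbation is at most $1/d$. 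The genuinely convolutional ingredient is bounding $\norm{\rmU_i}_2$, the \emph{operator} norm of the convolution induced by the Gaussian kernel tensor $\rmU_i$, rather than the spectral norm of a plain matrix. Here I expect the factor $k_i\sqrt{c_i}$ to enter: a $k_i\times k_i$ convolution with $c_i$ channels is a sum of shifted channel-wise maps, and a concentration bound on the spectral norm of the random Gaussian tensor yields $\norm{\rmU_i}_2 \lesssim \sigma\,k_i\sqrt{c_i}$ with high probability, which is the origin of the $\sum_i k_i\sqrt{c_i}$ term in the statement.

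Next I would set $\sigma$ so that, with probability at least $1/2$ over $\rvu$, all the $\norm{\rmU_i}_2$ are simultaneously small enough to keep the output perturbation under $\gamma/4$; this forces $1/\sigma$ to be proportional to $\big(\sum_i k_i\sqrt{c_i}\big)\prod_i\norm{\rmW_i}_2 / \gamma$, up to the $B$ and $\sqrt{\ln(n^2 d)}$ factors surviving from the concentration step. Substituting this $\sigma$ into the $\KL$ term produces exactly $\big(84B\sum_i k_i\sqrt{c_i}+\sqrt{\ln(4n^2d)}\big)^2\prod_i\norm{\rmW_i}_2^2\sum_j \norm{\rmW_j-\rmW_j^0}_F^2/\norm{\rmW_j}_2^2$ inside the square root. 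The final wrinkle is that $\sigma$ depends on the data-dependent quantities $\norm{\rmW_i}_2$, which the prior may not observe; I would resolve this in the standard way by fixing one prior for each value on a multiplicative grid covering the plausible range of $\prod_i\norm{\rmW_i}_2$ and taking a union bound, so that only a logarithmic overhead (absorbed into the stated constants) is paid while some grid point always lies within a constant factor of the true product.

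The main obstacle will be the convolutional perturbation bound, namely controlling the operator norm of a random Gaussian convolution tensor and extracting the sharp $k_i\sqrt{c_i}$ dependence; by contrast, the Lipschitz composition and the $\KL$ substitution are routine, and matching the precise constants ($84$ and $4n^2d$) is careful but mechanical bookkeeping.
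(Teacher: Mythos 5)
First, a point of reference: the paper never proves Theorem~\ref{thm:spectral} at all --- it restates the bound, constants included, from \citet{pitas2017pac}, who adapt the PAC-Bayesian spectrally-normalized margin bound of \citet{neyshabur2017pac} to convolutional layers. Your proposal therefore has to be judged against that original argument, and its skeleton is indeed the right reconstruction: the perturbation-based PAC-Bayes margin lemma, Gaussian prior and posterior centered at $\rvw^0$ and $\rvw$, the composition bound $\norm{f_{\rvw+\rvu}(\rmX)-f_\rvw(\rmX)}_2\le eB\big(\prod_{i=1}^d\norm{\rmW_i}_2\big)\sum_{i=1}^d\norm{\rmU_i}_2/\norm{\rmW_i}_2$, spectral concentration for the operator induced by a Gaussian convolution kernel (the source of $k_i\sqrt{c_i}$, with $\ln(4n^2d)$ arising from the union bound across the $d$ layers and the spatial/Fourier components), and a prior grid with a union bound to handle the data-dependence of the perturbation scale.

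The genuine gap is in the step you dismiss as routine: substituting your $\sigma$ into the KL does \emph{not} produce the stated term. You take both prior and posterior isotropic with a single variance $\sigma^2$, so $\KL=\sum_j\norm{\rmW_j-\rmW_j^0}_F^2/(2\sigma^2)$. But your own perturbation bound involves the \emph{relative} perturbations $\norm{\rmU_i}_2/\norm{\rmW_i}_2$, so keeping the output change below $\gamma/4$ forces $1/\sigma\propto B\big(\prod_i\norm{\rmW_i}_2\big)\big(\sum_i k_i\sqrt{c_i}/\norm{\rmW_i}_2\big)\sqrt{\ln(n^2d)}\,/\gamma$, and hence $\KL\propto B^2\big(\prod_i\norm{\rmW_i}_2^2\big)\big(\sum_i k_i\sqrt{c_i}/\norm{\rmW_i}_2\big)^2\sum_j\norm{\rmW_j-\rmW_j^0}_F^2\,/\gamma^2$. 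This is not the claimed quantity $\big(\sum_i k_i\sqrt{c_i}\big)^2\prod_i\norm{\rmW_i}_2^2\sum_j\norm{\rmW_j-\rmW_j^0}_F^2/\norm{\rmW_j}_2^2$; the two coincide only when all $\norm{\rmW_i}_2$ are equal. The missing idea is precisely that normalization: by positive homogeneity of ReLU (and max-pooling) networks, one may rebalance the layers so that $\norm{\rmW_i}_2=\big(\prod_j\norm{\rmW_j}_2\big)^{1/d}$ for every $i$ without changing the function. This is the step in \citet{neyshabur2017pac}, inherited by \citet{pitas2017pac}, that places $\norm{\rmW_j}_2^2$ inside the sum and lets the grid/union bound run over the single scalar $\beta=\big(\prod_j\norm{\rmW_j}_2\big)^{1/d}$ --- your grid over $\prod_i\norm{\rmW_i}_2$ alone would not otherwise suffice, since the KL depends on the individual layer norms and not just their product. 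An alternative repair is an anisotropic posterior with per-layer variance $\sigma_j\propto\norm{\rmW_j}_2$, at the price of a $d$-fold grid union bound. Note also that rebalancing $\rvw$ while keeping $\rvw^0$ fixed changes $\sum_j\norm{\rmW_j-\rmW_j^0}_F^2/\norm{\rmW_j}_2^2$, so the reference tensors must be rescaled consistently --- a subtlety your outline leaves unaddressed.
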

Inspired by the above theorem, we define the following spectral measure:
\begin{equation}
\label{eq:spec-init}
\mu_{\text{spec,init}}(f_\rvw) = \frac{\left(84B\sum_{i=1}^d k_i\sqrt{c_i}+\sqrt{\ln(4n^2d)}\right)^2 \prod_{i=1}^d \norm{\rmW_i}_2^2 \sum_{j=1}^d \frac{\norm{\rmW_j-\rmW_j^0}_F^2}{\norm{\rmW_j}_2^2}+ \ln(\frac{m}{\delta})}{\gamma^2}
\end{equation}
The generalization bound in Theorem~\ref{thm:spectral} depends on reference tensors $\rmW^0_i$. We chose the initial tensor as the reference in the above measure but another reasonable choice is the origin which gives the following measures:
\begin{equation}\label{eq:spec-origin}
\mu_{\text{spec-orig}}(f_\rvw) = \frac{\left(84B\sum_{i=1}^d k_i\sqrt{c_i}+\sqrt{\ln(4n^2d)}\right)^2 \prod_{i=1}^d \norm{\rmW_i}_2^2 \sum_{j=1}^d \frac{\norm{\rmW_j}_F^2}{\norm{\rmW_j}_2^2}+ \ln(\frac{m}{\delta})}{\gamma^2}
\end{equation}
Since some of the terms in the generalization bounds might be proof artifacts, we also measure the main terms in the generalization bound:
\begin{align}
\label{eq:spec-init-main}
\mu_{\text{spec-init-main}}(f_\rvw) &= \frac{\prod_{i=1}^d \norm{\rmW_i}_2^2 \sum_{j=1}^d \frac{\norm{\rmW_j-\rmW_j^0}_F^2}{\norm{\rmW_j}_2^2}}{\gamma^2}\\
\label{eq:spec-orig-main}
\mu_{\text{spec-orig-main}}(f_\rvw) &= \frac{\prod_{i=1}^d \norm{\rmW_i}_2^2 \sum_{j=1}^d \frac{\norm{\rmW_j}_F^2}{\norm{\rmW_j}_2^2}}{\gamma^2}
\end{align}
We further look at the main two terms in the bound separately to be able to differentiate their contributions.
\begin{align}
\mu_{\text{spec-init-main}}(f_\rvw) &=\frac{\prod_{i=1}^d \norm{\rmW_i}_2^2 \sum_{j=1}^d \frac{\norm{\rmW_j-\rmW_j^0}_F^2}{\norm{\rmW_j}_2^2}}{\gamma^2}\\
\mu_{\text{spec-orig-main}}(f_\rvw) &= \frac{\prod_{i=1}^d \norm{\rmW_i}_2^2 \sum_{j=1}^d \frac{\norm{\rmW_j}_F^2}{\norm{\rmW_j}_2^2}}{\gamma^2}\\
\label{eq:prod-spec-over-margin}
\mu_{\text{prod-of-spec/margin}}(f_\rvw) &= \frac{\prod_{i=1}^d \norm{\rmW_i}_2^2}{\gamma^2}\\
\label{eq:prod-spec-over}
\mu_{\text{prod-of-spec}}(f_\rvw) &= \prod_{i=1}^d \norm{\rmW_i}_2^2\\
\label{eq:fro-over-spec}
\mu_{\text{fro/spec}}(f_\rvw) &= \sum_{i=1}^d \frac{\norm{\rmW_i}_F^2}{\norm{\rmW_i}_2^2}
\end{align}
Finally, since product of spectral norms almost certainly increases with depth, we look at the following measure which is equal to the sum over squared spectral norms after rebalancing the layers to have the same spectral norms:
\begin{align}
\label{eq:sum-of-spec-over-margin}
\mu_{\text{sum-of-spec/margin}}(f_\rvw) &= d \left(\frac{\prod_{i=1}^d\norm{\rmW_i}_2^2}{\gamma^2}\right)^{1/d}\\
\label{eq:sum-of-spec}
\mu_{\text{sum-of-spec}}(f_\rvw) &= d \left(\norm{\rmW_i}_2^2\right)^{1/d}
\end{align}

\paragraph{Frobenius Norm} The generalization bound given in \cite{neyshabur2015norm} is not directly applicable to convolutional networks. However, Since for each layer $i$, we have $\norm{\rmW_i}_2 \leq k_i^2\norm{\rmW_i}_F$ and therefore by Theorem~\ref{thm:spectral}, we can get an upper bound on the test error based on product of Frobenius norms. Therefore, we define the following measure based on the product of Frobenius norms:
\begin{align}
\label{eq:prod-of-fro-over-margin}
 \mu_{\text{prod-of-fro/margin}}(f_\rvw) &= \frac{\prod_{i=1}^d \norm{\rmW_i}_F^2}{\gamma^2}\\
  \label{eq:prod-of-fro}
    \mu_{\text{prod-of-fro}}(f_\rvw) &= \prod_{i=1}^d \norm{\rmW_i}_F^2
\end{align}
We also look at the following measure with correspond to sum of squared Frobenius norms of the layers after rebalancing them to have the same norm:
\begin{align}
    \label{eq:sum-of-fro-over-margin}
    \mu_{\text{sum-of-fro/margin}}(f_\rvw) &= d \left(\frac{\prod_{i=1}^d \norm{\rmW_i}_F^2}{\gamma^2}\right)^{1/d}\\
   \label{eq:sum-of-fro}
    \mu_{\text{sum-of-fro}}(f_\rvw) &= d \left(\prod_{i=1}^d \norm{\rmW_i}_F^2\right)^{1/d}
\end{align}
Finally, given recent evidence on the importance of distance to initialization~\citep{dziugaite2017computing,nagarajan2019generalization,neyshabur2018towards}, we calculate the following measures:
\begin{align}
\label{eq:frob-dist}
    \mu_{\text{frobenius-distance }}(f_\rvw) &= \sum_{i=1}^d \norm{\rmW_i-\rmW_i^0}_F^2\\
\label{eq:spec-dist}
    \mu_{\text{dist-spec-init}}(f_\rvw) &= \sum_{i=1}^d \norm{\rmW_i-\rmW_i^0}_2^2
\end{align}
In case when the reference matrix $\rmW_i^0=0$ for all weights, Eq (\ref{eq:frob-dist}) the Frobenius norm of the parameters which also correspond to distance from the origin:
\begin{equation}
    \label{eq:param-norm}
    \mu_{\text{param-norm}}(f_\rvw) = \sum_{i=1}^d\norm{\rmW_i}_F^2
\end{equation}

\paragraph{Path-norm} Path-norm was introduced in \cite{neyshabur2015norm} as an scale invariant complexity measure for generalization and is shown to be a useful geometry for optimization \cite{neyshabur2015path}. To calculate path-norm, we square the parameters of the network, do a forward pass on an all-ones input and then take square root of sum of the network outputs. We define the following measures based on the path-norm:
\begin{align}
\label{eq:path-norm-over-margin}
    \mu_{\text{path-norm/margin}}(f_\rvw) &= \frac{\sum_i f_{\rvw^2}(\bf{1})[i]}{\gamma^2}\\
\label{eq:path-norm}
    \mu_{\text{path-norm}}(f_\rvw) &= \sum_i f_{\rvw^2}(\bf{1})
\end{align}
where $\rvw^2=\rvw \circ \rvw$ is the element-wise square operation on the parameters.

\paragraph{Fisher-Rao Norm} Fisher-Rao metric was introduced in \cite{liang2017fisher} as a complexity measure for neural networks. \cite{liang2017fisher} showed that Fisher-Rao norm is a lower bound on the path-norm and it correlates in some cases. We define a measure based on the Fisher-Rao matric of the network:
\begin{equation}
\label{eq:fisher-rao}
    \mu_{\text{Fisher-Rao}}(f_\rvw) = \frac{(d+1)^2}{m}\sum_{i=1}^m \inner{\rvw}{\nabla_\rvw \ell(f_\rvw(\rmX_i)),y_i}^2
\end{equation}
where $\ell$ is the cross-entropy loss.

\subsection{Flatness-based Measures}
PAC-Bayesian framework~\citep{mcallester1999pac} allows us to study flatness of a solution and connect it to generalization. Given a prior $P$ is is chosen before observing the training set and a posterior $Q$ which is a distribution on the solutions of the learning algorithm (and hence depends on the training set), we can bound the expected generalization error of solutions generated from $Q$ with high probability based on the $\KL$ divergence of $P$ and $Q$. The next theorem states a simplified version of PAC-Bayesian bounds.

\begin{theorem}
For any $\delta>0$, distribution $D$, prior $P$, with probability $1-\delta$ over the training set, for any posterior $Q$ the following bound holds:
\begin{equation}
\label{pac-bayes-bound}
\E_{\rvv\sim Q}\left[L(f_\rvv)\right] \leq \E_{\rvw\sim Q}\left[\hat{L}(f_\rvv)\right] + \sqrt{\frac{\KL(Q||P) + \log\left(\frac{m}{\delta}\right)}{2(m-1)}}
\end{equation}
\end{theorem}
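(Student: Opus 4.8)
The plan is to prove the bound by first controlling the \emph{squared} generalization gap in expectation over the posterior and then passing to the gap itself by Jensen's inequality. Write $\Delta(\rvv) \triangleq L(f_\rvv) - \hat{L}(f_\rvv)$ for the signed generalization gap of a single parameter setting $\rvv$. Since $x \mapsto x^2$ is convex, $\left(\E_{\rvv\sim Q}[\Delta(\rvv)]\right)^2 \le \E_{\rvv\sim Q}[\Delta(\rvv)^2]$, so it suffices to show that with probability at least $1-\delta$ we have $\E_{\rvv\sim Q}[\Delta(\rvv)^2] \le \frac{\KL(Q\|P)+\log(m/\delta)}{2(m-1)}$; taking square roots then yields the claim, because $\E_{\rvv\sim Q}[L(f_\rvv)] - \E_{\rvv\sim Q}[\hat{L}(f_\rvv)] = \E_{\rvv\sim Q}[\Delta(\rvv)]$.

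The central tool is the change-of-measure (Donsker--Varadhan / Gibbs variational) inequality: for any fixed measurable $\phi$ and any $Q$ absolutely continuous with respect to $P$,
\begin{equation}
\E_{\rvv\sim Q}[\phi(\rvv)] \le \KL(Q\|P) + \log \E_{\rvv\sim P}\left[e^{\phi(\rvv)}\right],
\end{equation}
which itself follows from the nonnegativity of the KL divergence between $Q$ and the Gibbs distribution proportional to $e^{\phi}\,dP$. First I would instantiate this with $\phi(\rvv) = 2(m-1)\Delta(\rvv)^2$, giving
\begin{equation}
2(m-1)\,\E_{\rvv\sim Q}[\Delta(\rvv)^2] \le \KL(Q\|P) + \log \E_{\rvv\sim P}\left[e^{2(m-1)\Delta(\rvv)^2}\right].
\end{equation}
Crucially the prior $P$ is fixed before seeing $\gS$, so $Z \triangleq \E_{\rvv\sim P}[e^{2(m-1)\Delta(\rvv)^2}]$ is a nonnegative random variable (through its dependence on $\gS$) whose high-probability control does not involve $Q$. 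To bound it, apply Markov's inequality, $\Pr_\gS[Z \ge \E_\gS[Z]/\delta] \le \delta$, so that with probability at least $1-\delta$ one has $\log Z \le \log(\E_\gS[Z]) + \log(1/\delta)$. Since $P$ is independent of the draw of $\gS$, Fubini's theorem lets me swap the order of expectation, $\E_\gS[Z] = \E_{\rvv\sim P}\big[\E_\gS[e^{2(m-1)\Delta(\rvv)^2}]\big]$, reducing everything to the per-hypothesis moment generating function for each \emph{fixed} $\rvv$.

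The main obstacle is this per-hypothesis bound: I must show $\E_\gS[e^{2(m-1)\Delta(\rvv)^2}] \le m$ for any fixed $\rvv$. Here $\hat{L}(f_\rvv)$ is the mean of $m$ i.i.d.\ $\{0,1\}$-valued random variables with mean $L(f_\rvv)$, so $\Delta(\rvv)$ is a centered average of bounded variables. The cleanest route is a sub-Gaussian tail estimate: Hoeffding's inequality gives $\Pr[|\Delta| \ge t] \le 2e^{-2mt^2}$, and integrating the tail of $e^{2(m-1)\Delta^2}$ against this bound (the extra factor $\tfrac{m-1}{m}$ in the exponent is exactly what makes the integral converge) yields a bound of order $m$; a sharper argument (Maurer's lemma, via the binary relative entropy and Pinsker's inequality $\mathrm{kl}(\hat{L}\|L) \ge 2\Delta^2$) gives the constant $2\sqrt{m} \le m$. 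Combining this with the Markov/Fubini step gives $\log Z \le \log(m/\delta)$ with probability $1-\delta$, and substituting back produces $2(m-1)\,\E_{\rvv\sim Q}[\Delta(\rvv)^2] \le \KL(Q\|P) + \log(m/\delta)$. Dividing by $2(m-1)$, taking square roots, and invoking the Jensen step completes the proof. I expect the tail-integration constant to be the only delicate point; everything else is a mechanical assembly of standard inequalities.
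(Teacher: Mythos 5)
The paper never proves this theorem: it is presented as ``a simplified version of PAC-Bayesian bounds'' with a citation to \citet{mcallester1999pac}, so there is no in-paper derivation to compare your attempt against. What you have written is, in structure, exactly the standard literature proof of this form of the bound (McAllester's form, with Maurer's refinement of the moment bound), and it is sound: Donsker--Varadhan applied to $\phi(\rvv)=2(m-1)\Delta(\rvv)^2$; Markov plus Fubini on $Z=\E_{\rvv\sim P}\left[e^{\phi(\rvv)}\right]$, which is legitimate precisely because $P$ is chosen before seeing $\gS$; and Jensen to pass from $\E_{\rvv\sim Q}[\Delta(\rvv)^2]$ back to $\left(\E_{\rvv\sim Q}[\Delta(\rvv)]\right)^2$. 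You also correctly get the ``for any posterior $Q$'' uniformity for free, since the high-probability event controls only $Z$, which does not depend on $Q$.

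The one point that deserves care is the per-hypothesis moment bound, and your own hedge there is warranted. The Hoeffding tail-integration route does converge thanks to the $(m-1)$ in the exponent, but it yields $\E_\gS\left[e^{2(m-1)\Delta^2}\right]\leq 1+2(m-1)=2m-1$, not $m$; on its own it would put $\log\left(\frac{2m-1}{\delta}\right)$ in the numerator rather than $\log\left(\frac{m}{\delta}\right)$. To get the stated constant you genuinely need the kl-based argument: Maurer's lemma $\E_\gS\left[e^{m\,\mathrm{kl}(\hat{L}\|L)}\right]\leq 2\sqrt{m}$ combined with Pinsker's inequality $\mathrm{kl}(\hat{L}\|L)\geq 2\Delta^2$, and then $2\sqrt{m}\leq m$ for $m\geq 4$ (Maurer's $2\sqrt{m}$ is itself usually stated for $m\geq 8$). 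So as written your proof establishes the theorem for $m$ above a small constant, with a marginally weaker logarithmic term for tiny $m$ --- the same caveat implicit in the result the paper is quoting. This is a constant-level detail, not a structural gap; the proof plan is the correct and complete one that the paper omits.
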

If $P$ and $Q$ are Gaussian distributions with $P=\gN(\mu_P,\Sigma_P)$ amd $Q=\gN(\mu_Q,\Sigma_Q)$, then the $\KL$-term can be written as follows:
\begin{equation*}\label{eq:KL-gaussian}
\KL(\gN(\mu_Q,\Sigma_Q)||\gN(\mu_P,\Sigma_P)) = \frac{1}{2}\left[\text{tr}\left(\Sigma_P^{-1}\Sigma_Q\right)+ \left(\mu_Q-\mu_P\right)^\top \Sigma_P^{-1}\left(\mu_Q-\mu_P\right)- k + \ln(\frac{\det \Sigma_P}{\det \Sigma_Q})\right].
\end{equation*}
Setting $Q=\gN(\rvw,\sigma^2I)$ and $P=\gN(\rvw^0,\sigma^2I)$ similar to \cite{neyshabur2017exploring}, the $\KL$ term will be simply $\frac{\norm{\rvw-\rvw^0}_2^2}{2\sigma^2}$. However, since $\sigma$ belongs to prior, if we search to find a value for $\sigma$, we need to adjust the bound to reflect that. Since we search over less than 20000 predefined values of $\sigma$ in our experiments, we can use the union bound which changes the logarithmic term to $\log(20000m/\delta)$ and we get the following bound:
\begin{equation}
\label{eq:pb-bound}
\E_{\rvu\sim \gN(u,\sigma^2I)}\left[L(f_{\rvw+\rvu})\right] \leq \E_{\rvu\sim \gN(u,\sigma^2I)}\left[\hat{L}(f_{\rvw+\rvu})\right] + \sqrt{\frac{\frac{\norm{\rvw-\rvw^0}_2^2}{4\sigma^2} + \log(\frac{m}{\sigma})+10}{m-1}}
\end{equation}
Based on the above bound, we define the following measures using the origin and initialization as reference tensors:
\begin{align}
    \label{eq:pb-init}
    \mu_{\text{pac-bayes-init}}(f_\rvw) &= \frac{\norm{\rvw-\rvw^0}_2^2}{4\sigma^2} + \log(\frac{m}{\sigma})+10\\
    \label{eq:pb-origin}
    \mu_{\text{pac-bayes-orig}}(f_\rvw) &= \frac{\norm{\rvw}_2^2}{4\sigma^2} + \log(\frac{m}{\delta})+10
\end{align}
where $\sigma$ is chosen to be the largest number such that $\E_{\rvu\sim \gN(u,\sigma^2I)}\left[\hat{L}(f_{\rvw+\rvu})\right] \leq 0.1$. 
 
The above framework captures flatness in the expected sense since we add Gaussian perturbations to the parameters. Another notion of flatness is the worst-case flatness where we search for the direction that changes the loss the most. This is motivated by \citep{keskar2016large} where they observe that this notion would correlate to generalization in the case of different batch sizes. We can use PAC-Bayesian framework to give generalization bounds for worst-case perturbations as well. The magnitude of a Gaussian variable with with variance $\sigma^2$ is at most $\sigma\sqrt{2\log(2/\delta)}$ with probability $1-\delta/2$. Applying a union bound on all parameters, we get that with probability $1-\delta/2$ the magnitude of the Gaussian noise is at most $\alpha = \sigma\sqrt{2\log(2\omega/\delta)}$ where $\omega$ is the number of parameters of the model. Therefore, we can get the following generalization bound:
\begin{equation}
\label{eq:sharpness-bound}
\E_{\rvu\sim \gN(u,\sigma^2I)}\left[L(f_{\rvw+\rvu})\right] \leq \max_{|u_i|\leq \alpha}\hat{L}(f_{\rvw+\rvu}) + \sqrt{\frac{\frac{\norm{\rvw-\rvw^0}_2^2\log(2\omega/\delta)}{2\alpha^2} + \log(\frac{2m}{\delta})+10}{m-1}}
\end{equation}
Inspired by the above bound, we define the following measures:
\begin{align}
    \label{eq:sp-init}
    \mu_{\text{sharpness-init}}(f_\rvw) &= \frac{\norm{\rvw-\rvw^0}_2^2\log(2\omega)}{4\alpha^2} + \log(\frac{m}{\sigma})+10\\
    \label{eq:sp-origin}
    \mu_{\text{sharpness-orig}}(f_\rvw) &= \frac{\norm{\rvw}_2^2\log(2\omega)}{4\alpha^2} + \log(\frac{m}{\delta})+10
\end{align}
where $\alpha$ is chosen to be the largest number such that $\max_{|u_i|\leq \alpha}\hat{L}(f_{\rvw+\rvu}) \leq 0.1$.

To understand the importance of the flatness parameters $\sigma$ and $\alpha$, we also define the following measures:
\begin{align}
\label{eq:pb-flatness}
    \mu_{\text{pac-bayes-flatness}}(f_\rvw) &= \frac{1}{\sigma^2}\\
\label{eq:sp-flatness}
    \mu_{\text{sharpness-flatness}}(f_\rvw) &= \frac{1}{\alpha^2}
\end{align}
where $\alpha$ and $\sigma$ are computed as explained above.

\paragraph{Magnitude-aware Perturbation Bounds}
The magnitude of perturbation in \citep{keskar2016large} was chosen so that for each parameter the ratio of magnitude of perturbation to the magnitude of the parameter is bounded by a constant $\alpha'$\footnote{They actually used a slightly different version which is a combination of the two perturbation bounds we calculated here. Here, for more clarity, we decomposed it into two separate perturbation bounds.}. Following a similar approach, we can choose the posterior for parameter $i$ in PAC-Bayesian framework to be $\gN(w_i,\sigma'^2|w_i|^2 + \eps^2)$. Now, substituting this in the Equation~\eqref{eq:KL-gaussian} and solving for the prior $\gN(\rvw^0, \sigma_P^2)$ that minimizes the $\KL$ term by setting the gradient with respect to $\sigma^P_2$ to zero, $\KL$ can be written as follows:
\begin{align*}
2\KL(Q||P)&=\omega\log\left(\frac{\sigma'^2+1}{\omega}\norm{\rvw-\rvw^0}_2^2 + \eps^2\right) - \sum_{i=1}^{\omega} \log\left(\sigma'^2|w_i-w_i^0|^2+\eps^2\right)\\
&=\sum_{i=1}^{\omega}\log\left(\frac{\eps^2+ (\sigma'^2+1)\norm{\rvw-\rvw^0}_2^2/\omega}{\eps^2+\sigma'^2|w_i-w_i^0|^2}\right)
\end{align*}
Therefore, the generalization bound can be written as follows
\begin{equation}
\label{eq:mag-pacbayes}
\E_{\rvu}\left[L(f_{\rvw+\rvu})\right] \leq \E_{\rvu}\left[\hat{L}(f_{\rvw+\rvu})\right] + \sqrt{\frac{\frac{1}{4}\sum_{i=1}^{\omega}\log\left(\frac{\eps^2+ (\sigma'^2+1)\norm{\rvw-\rvw^0}_2^2/\omega}{\eps^2+\sigma'^2|w_i-w_i^0|^2}\right) + \log(\frac{m}{\delta})+10}{m-1}}
\end{equation}
where $u_i\sim \gN(0,\sigma'^2|w_i|+\eps^2)$, $\eps=1e-3$ and $\sigma'$ is chosen to be the largest number such that $\E_{\rvu}\left[\hat{L}(f_{\rvw+\rvu})\right]\leq 0.1$. We define the following measures based on the generalization bound:
\begin{align}
\label{eq:pacbayes-mag-init}
    \mu_{\text{pac-bayes-mag-init}}(f_\rvw) &= \frac{1}{4}\sum_{i=1}^{\omega}\log\left(\frac{\eps^2+ (\sigma'^2+1)\norm{\rvw-\rvw^0}_2^2/\omega}{\eps^2+\sigma'^2|w_i-w_i^0|^2}\right) + \log(\frac{m}{\delta})+10\\
    \label{eq:pacbayes-mag-orig}
    \mu_{\text{pac-bayes-mag-orig}}(f_\rvw) &= \frac{1}{4}\sum_{i=1}^{\omega}\log\left(\frac{\eps^2+ (\sigma'^2+1)\norm{\rvw}_2^2/\omega}{\eps^2+\sigma'^2|w_i-w_i^0|^2}\right) + \log(\frac{m}{\delta})+10
\end{align}
We also follow similar arguments are before to get a similar bound on the worst-case sharpness:
\begin{equation}
\label{eq:mag-sharpness}
\E_{\rvu}\left[L(f_{\rvw+\rvu})\right] \leq \max_{|u_i|\leq \alpha'|w_i|+\eps}\hat{L}(f_{\rvw+\rvu}) + \sqrt{\frac{\frac{1}{4}\sum_{i=1}^{\omega}\log\left(\frac{\eps^2+ (\alpha'^2+4\log(2\omega/\delta))\norm{\rvw-\rvw^0}_2^2/\omega}{\eps^2+\alpha'^2|w_i-w_i^0|^2}\right) + \log(\frac{m}{\delta})+10}{m-1}}
\end{equation}
We look at the following measures based on the above bound:
\begin{align}
\label{eq:pac-sp-mag-init}
    \mu_{\text{pac-sharpness-mag-init}}(f_\rvw) &=\frac{1}{4}\sum_{i=1}^{\omega}\log\left(\frac{\eps^2+ (\alpha'^2+4\log(2\omega/\delta))\norm{\rvw-\rvw^0}_2^2/\omega}{\eps^2+\alpha'^2|w_i-w_i^0|^2}\right) + \log(\frac{m}{\delta})+10\\
    \label{eq:pac-sp-mag-orig}
    \mu_{\text{pac-sharpness-mag-orig}}(f_\rvw) &= \frac{1}{4}\sum_{i=1}^{\omega}\log\left(\frac{\eps^2+ (\alpha'^2+4\log(2\omega/\delta))\norm{\rvw}_2^2/\omega}{\eps^2+\alpha'^2|w_i-w_i^0|^2}\right) + \log(\frac{m}{\delta})+10
\end{align}
Finally, we look at measures that are only based the sharpness values computed above:
\begin{align}
\label{eq:pac-bayes-mag-flat}
    \mu_{\text{pac-bayes-mag-flat}}(f_\rvw) &= \frac{1}{\sigma'^2}\\
\label{eq:sharpness-mag-flat}
    \mu_{\text{sharpness-mag-flat}}(f_\rvw) &= \frac{1}{\alpha'^2}
\end{align}
where $\alpha$ and $\sigma$ are computed as explained above.

\subsection{Optimization-based Measures}
There are mixed results about how the optimization speed is relevant to generalization. On one hand we know that adding Batch Normalization or using shortcuts in residual architectures help both optimization and generalization and \cite{hardt2015train} suggests that faster optimization results in better generalization. On the other hand, there are empirical results showing that adaptive optimization methods that are faster, usually generalize worse~\citep{wilson2017marginal}. Here, we put these hypothesis into test by looking at the number of steps to achieve cross-entropy 0.1 and the number of steps needed to go from cross-entropy 0.1 to 0.01:

\begin{align}
\label{eq:0.1step}
    \mu_{\text{\#steps-0.1-loss}}(f_\rvw) &= \text{\#steps from initialization to 0.1 cross-entropy} \\
\label{eq:0.10.01step}
    \mu_{\text{\#steps-0.1-0.01-loss}}(f_\rvw) &= \text{\#steps from 0.1 to 0.01 cross-entropy}
\end{align}
The above measures tell us if the speed of optimization at early or late stages can be informative about generalization. We also define measures that look at the SGD gradient noise after the first epoch and at the end of training at cross-entropy 0.01 to test the gradient noise can be predictive of generalization:
\begin{align}
\label{eq:grad-epoch1}
    \mu_{\text{grad-noise-epoch1}}(f_\rvw) &= \text{Var}_{(\rmX,y)~S}\left(\nabla_\rvw \ell(f_{\rvw^1}(\rmX),y)\right)\\
\label{eq:grad-fianl}
    \mu_{\text{grad-noise-final}}(f_\rvw) &= \text{Var}_{(\rmX,y)~S}\left(\nabla_\rvw \ell(f_\rvw(\rmX),y)\right)
\end{align}
where $\rvw^1$ is the weight vector after the first epoch.

\section{Algorithms}
\label{app:algorithms}
We first lay out some common notations used in the pseudocode:
\begin{enumerate}
    \item $f$: the architecture that takes parameter $\theta$ and input $x$ and map to $f(x;\theta)$ which is the predicted label of $x$
    \item $\theta$: parameters
    \item $M$: Some kind of iteration; $M_1$: binary search depth; $M_2$: Monte Carlo Estimation steps; $M_3$: Iteration for estimating the loss
    \item $\mathscr{D}=\{(x_i, y_i)\}_{i=0}^n$ the dataset the model is trained on; $\mathcal{B}$ as a uniformly sampled minibatch from the dataset.
\end{enumerate}

Both search algorithm relies on the assumption that the loss increases monotonically with the perturbation magnitude $\sigma$ around the final weight. This assumption is quite mild and in reality holds across almost all the models in this study. 
\begin{algorithm}[htb]
  \caption{EstimateAccuracy}
  \label{alg:highlevelalg}
\begin{algorithmic}[1]
  \STATE {\bfseries Inputs:} model $f$, parameter $\theta$, dataset $\mathscr{D}$, estimate iteration $M$
  \STATE {\bfseries Initialize} Accuracy = 0 
  \FOR{episode $i=1$ {\bfseries to} $M$}
        \STATE $\mathcal{B}\sim$ sample$(\mathscr{D})$
        \STATE Accuracy += $\frac{1}{|\mathcal{B}|}\sum_i \delta(y_i=f(\mathcal{B}_i; \theta))$
  \ENDFOR
  \STATE {\bfseries return} Accuracy/M
\end{algorithmic}
\end{algorithm}

\begin{algorithm}[htb]
  \caption{Find $\sigma$ for PAC-Bayesian Bound}
  \label{app:pacbayes-alg}
\begin{algorithmic}[1]
  \STATE {\bfseries Inputs:} $f$, $\theta_0$, model accuracy $\ell$, target accuracy deviation $d$, Upper bound $\sigma_{\max}$, Lower bound $\sigma_{\min}$, $M_1$, $M_2$, $M_3$ 
  \STATE {\bfseries Initialize} 
  \FOR{episode $i=1$ {\bfseries to} $M_1$}
        \STATE $\sigma_{new} = (\sigma_{\max} + \sigma_{\min})/2$
        \STATE $\hat{\ell} = 0$
        \FOR{step $j=0$ {\bfseries to} $M_2$}
            \STATE $\theta \leftarrow \theta_0 + \mathcal{N}(0, \sigma_{new}^2I)$
            \STATE $\hat{\ell} = \hat{\ell} + \mathrm{EstimateAccuracy}(f, \theta_{new}, \mathscr{D}, M_3)$
        \ENDFOR
        \STATE $\hat{\ell} = \hat{\ell}/M_2$
        \STATE $\hat{d} = |\ell - \hat{\ell}|$
        \IF{$\hat{d} < \epsilon_d$ {\bfseries or} $\sigma_{\max} - \sigma_{\min} < \epsilon_\sigma$}
            \STATE {\bfseries return} $\sigma_{new}$
        \ENDIF
        \IF{$\hat{d} > d$}
            \STATE $\sigma_{\max} = \sigma_{new}$
        \ELSE
            \STATE $\sigma_{\min} = \sigma_{new}$
        \ENDIF
  \ENDFOR
\end{algorithmic}
\end{algorithm}

\begin{algorithm}[htb]
  \caption{Find $\sigma$ for Sharpness Bound}
  \label{alg:sharpness-alg}
\begin{algorithmic}[1]
  \STATE {\bfseries Inputs:} $f$, $\theta_0$, loss function $\mathcal{L}$, model accuracy $\ell$, target accuracy deviation $d$, Upper bound $\sigma_{\max}$, Lower bound $\sigma_{\min}$, $M_1$, $M_2$, $M_3$, gradient steps $M_4$
  \STATE {\bfseries Initialize} 
  \FOR{episode $i=1$ {\bfseries to} $M_1$}
        \STATE $\sigma_{new} = (\sigma_{\max} + \sigma_{\min})/2$
        \STATE $\hat{\ell} = \infty$
        \FOR{step $j=0$ {\bfseries to} $M_2$}
            \STATE $\theta = \theta_0 + \mathcal{U}(\sigma_{new}/2)$
            \FOR{step $k=0$ {\bfseries to} $M_4$}
                \STATE $\mathcal{B}\sim$ sample$(\mathscr{D})$
                \STATE $\theta = \theta + \eta\nabla_\theta \ell(f, \mathcal{B}, \theta)$
                \IF {$||\theta|| > \sigma_{new}$}
                    \STATE $\theta = \sigma_{new}\cdot \frac{\theta}{||\theta||}$
                \ENDIF
            \ENDFOR
            \STATE $\hat{\ell} = \min(\hat{\ell}, \mathrm{EstimateAccuracy}(f, \theta_{new}, \mathscr{D}, M_3))$
        \ENDFOR
        \STATE $\hat{d} = |\ell - \hat{\ell}|$
        \IF{$\hat{d} < \epsilon_d$ {\bfseries or} $\sigma_{\max} - \sigma_{\min} < \epsilon_\sigma$}
            \STATE {\bfseries return} $\sigma_{new}$
        \ENDIF
        \IF{$\hat{d} > d$}
            \STATE $\sigma_{\max} = \sigma_{new}$
        \ELSE
            \STATE $\sigma_{\min} = \sigma_{new}$
        \ENDIF
  \ENDFOR
\end{algorithmic}
\end{algorithm}

Note that for finding the sharpness $\sigma$, we use the cross-entropy as the differentiable surrogate object instead of the 1-0 loss which is in general not differentiable. Using gradient ascent brings another additional challenge that is for a converged model, the local gradient signal is usually weak, making gradient ascent extremely inefficient. To speed up thie process, we add a uniform noise with range being $[-\sigma_{new}/N_{\rvw}, \sigma_{new}/N_{\rvw}]$ to lift the weight off the flat minima where $N_{\rvw}$ is the number of parameters. This empirical greatly accelerates the search.

Further, for magnitude aware version of the bounds, the overall algorithm stays the same with the exception that now covariance matrices at line 7 of Algorithm \ref{app:pacbayes-alg} become as diagonal matrix containing $w_i^2$ on the diagonal; similarly, for line 12 of Algorithm \ref{alg:sharpness-alg}, the weight clipping of each $w_i$ is conditioned on $\sigma_{new}|w_i|$, i.e. clipped to $[-\sigma_{new}|w_i|, \sigma_{new}|w_i|]$. Here $w_i$ denotes the $i^{th}$ parameter of flattened $\rvw$.

\end{document}